\newtheorem{theorem}{Theorem}
\newtheorem{corollary}[theorem]{Corollary}
\newtheorem{lemma}[theorem]{Lemma}
\newtheorem{proposition}[theorem]{Proposition}
\newtheorem{assumption}{Assumption}
\theoremstyle{definition}
\newtheorem{definition}{Definition}
\newcommand{\R}{\mathbb{R}}
\newcommand{\N}{\mathbb{N}}
\newcommand{\mF}{\mathcal{F}}
\newcommand{\mG}{\mathcal{G}}
\newcommand{\mA}{\mathcal{A}}
\newcommand{\mB}{\mathcal{B}}
\newcommand{\mfB}{\mathfrak{B}}
\newcommand{\mD}{\mathcal{D}}
\newcommand{\mM}{\mathcal{M}}
\newcommand{\mN}{\mathcal{N}}
\newcommand{\mX}{\mathcal{X}}
\newcommand{\mY}{\mathcal{Y}}
\newcommand{\fR}{\mathfrak{R}}
\newcommand{\Ep}{\mathbb{E}}
\renewcommand{\Pr}{\mathbb{P}}
\newcommand{\mE}{\mathcal{E}}
\renewcommand{\hat}{\widehat}
\renewcommand{\tilde}{\widetilde}
\newcommand{\argmin}{\operatornamewithlimits{argmin}}
\newcommand{\mone}{\textbf{1}}
\DeclareMathOperator{\Var}{Var}
\DeclareMathOperator{\Cov}{Cov}
\title{On Generalization Bounds for Deep Networks \\ based on Loss Surface Implicit Regularization}
\author{Masaaki Imaizumi and Johannes Schmidt-Hieber
\thanks{
Masaaki Imaizumi received support from JSPS KAKENHI (18K18114) and JST Presto (JPMJPR1852).
Johannes Schmidt-Hieber received support from the Dutch Science Foundation (NWO) via the Vidi grant VI.Vidi.192.021.

Masaaki Imaizumi is with Komaba Institute for Science, the University of Tokyo, Japan. 
(e-mail imaizumi@g.ecc.u-tokyo.ac.jp)

Johannes Schmidt-Hieber is with the University of Twente, Netherlands. 
(e-mail a.j.schmidt-hieber@utwente.nl)
}
}
\date{\today}
\providecommand{\keywords}[1]
{
  \small    
  \textbf{\textit{Keywords---}} #1
}
\begin{document}
\maketitle

\begin{abstract}
The classical statistical learning theory implies that fitting too many parameters leads to overfitting and poor performance. That modern deep neural networks generalize well despite a large number of parameters contradicts this finding and constitutes a major unsolved problem towards explaining the success of deep learning. While previous work focuses on the implicit regularization induced by stochastic gradient descent (SGD), we study here how the local geometry of the energy landscape around local minima affects the statistical properties of SGD with Gaussian gradient noise. We argue that under reasonable assumptions, the local geometry forces SGD to stay close to a low dimensional subspace and that this induces another form of implicit regularization and results in tighter bounds on the generalization error for deep neural networks. To derive generalization error bounds for neural networks, we first introduce a notion of stagnation sets around the local minima and impose a local essential convexity property of the population risk. Under these conditions, lower bounds for SGD to remain in these stagnation sets are derived. If stagnation occurs, we derive a bound on the generalization error of deep neural networks involving the spectral norms of the weight matrices but not the number of network parameters. Technically, our proofs are based on controlling the change of parameter values in the SGD iterates and local uniform convergence of the empirical loss functions based on the entropy of suitable neighborhoods around local minima. 
\end{abstract}

\keywords{deep neural network, generalization error, uniform convergence, non-convex optimization}

\section{Introduction}

We consider supervised learning using deep neural networks. Let $(X_1,Y_1),...,(X_n,Y_n)$ be $n$ i.i.d. observed pairs in $\mX \times \mY$ with input space $\mX$ and output space $\mY$. Write $f(\cdot;\mA): \mX \to \mY$ for a function generated by a neural network with $L$ layers and parameter matrices $\mA = (A_1,...,A_L).$ The total number of parameters is denoted by $D.$ We assume that the set of possible network parameters is constrained to lie in a known parameter space $\Theta \subseteq \R^D.$ Let $\ell: \mY \times \mY \to \R$ be the loss function and denote by $\hat{\mA}$ the output of the stochastic gradient descent (SGD) algorithm based on the empirical loss $R_n(\mA) := n^{-1} \sum_{i=1}^n \ell (Y_i, f(X_i;\mA)).$ The expected loss (generalization error) is then given by $R(\mA): =\Ep[\ell(Y,f(X;\mA))]$. Throughout this manuscript, we assume that the loss if sufficiently regular such that the gradient $\nabla R(\mA)$ exists for all $\mA$.

To achieve small generalization error, we are interested in bounds for the \textit{generalization gap}
\begin{align*}
    R(\hat{\mA}) - R_n(\hat{\mA})
\end{align*}
with $\hat{\mA}$ the SGD output. In this work, we derive a new bound for the generalization gap, by exploiting the tendency of SGD iterates to stay in neighborhoods of critical points in the loss surface. We then argue that for complex loss surfaces, this constraints the SGD iterates considerably and leads to implicit regularization. Throughout this article we refer to this type of implicit regularization as \textit{loss surface implicit regularization}.

\subsection{Background}
\label{sec.backg}

Deep learning is known to achieve outstanding performance in various complicated tasks, such as image recognition, text analysis, and reinforcement learning \cite{lecun2015deep}. For some tasks, super-human performance has been reported \cite{he2016deep,devlin2019bert, NEURIPS2020_1457c0d6}. Despite the impressive practical performance, there is still a gap regarding the theoretical understanding of deep learning. Obstacles in the theoretical foundation include the higher-order nonlinear structures due to the stacking of multiple layers and the excessive number of network parameters in state of the art networks. For some recent surveys, see \cite{bartlett2021deep,berner2021modern}.

To understand the use of fitting large numbers of parameters, requires to \textit{rethink generalization} \cite{zhou2019understanding}. Indeed, according to the standard statistical learning theory, large models overfit and thus increase the generalization error.
Based on the classical analysis (e.g., the textbook \cite{anthony2009neural}), the generalization gap for deep neural networks with $L$ layers and $D$ parameters is of the order
\begin{align*}
    R(\hat{\mA}) - R_n(\hat{\mA})  = \tilde{O}_\Pr(\sqrt{{LD}/{n}} )
\end{align*}
where $\tilde{O}_\Pr(\cdot)$ stands for the Big O notation ignoring logarithmic factors and the index $\Pr$ indicates that the rates are in probability. While bounds of this type can be used to prove optimal statistical convergence rates for sparsely connected neural networks \cite{MR4134774, pmlr-v89-imaizumi19a,imaizumi2022advantage,suzuki2018adaptivity}, the bound is clearly not sharp enough to explain the success of fully connected highly overparametrized models. Hence, a new theoretical framework is needed.

One possibility to derive sharper bounds on the generalization gap is to take the \textit{implicit regularization} into account. Learning algorithms, such as SGD, implicitly regularize and constraint the degrees of freedom. In several specific models, including linear regression, logistic regression and linear neural networks, it has been shown that gradient descent methods converges to interpolants with minimum norm constraints \cite{ji2018risk,gunasekar2018characterizing,gunasekar2017implicit,gunasekar2018implicit,nacson2019convergence,li2018algorithmic, MR4134779, 2020arXiv200607356J}.
Motivated by this fact, several articles investigate the generalization gap under the assumption that a norm on the network parameters is bounded by a threshold. For example, \cite{neyshabur2015norm} assumes that the network parameters lie in a norm-bounded subset $\tilde{\mB} := \left\{ \mA =(A_1,...,A_L) \mid \|A_\ell \|_F \leq b_\ell, \ell = 1,...,L \right\} \subset \Theta$ with $b_1,\dots,b_L$ given and $\|\cdot\|_F$ the Frobenius norm. It is shown that the generalization gap is then $R(\hat{\mA}) - R_n(\hat{\mA}) = \tilde{O}_\Pr\left( {2^L \Pi_{\ell=1}^L b_\ell}/{\sqrt{n}} \right).$ Interestingly, the bound only depends on the number of layers $L$, the number of training data $n$ and the radii $b_\ell,$ but not on the number of network parameters. This shows that norm control can avoid overfitting even in the case of overparametrization. \cite{bartlett2017spectrally} derives the generalization gap bound $R(\hat{\mA}) - R_n(\hat{\mA}) = \tilde{O}_\Pr( { \Pi_{\ell=1}^L s_\ell (\sum_{\ell=1}^L (\check{b}_\ell/s_\ell)^{2/3} )^{3/2}}/{\sqrt{n}} )$ for the set of network parameters $\check{\mB} := \{ \mA =(A_1,...,A_L) $ $\mid \|A_\ell \|_{2,1} \leq \check{b}_\ell, \|A_\ell \|_{s} \leq s_\ell, \ell = 1,...,L \},$ where $\|A\|_{2,1}$ denotes the sum of the Euclidean norms of the rows of the matrix $A$ and $\|\cdot\|_s$ is the spectral norm. This bound has been extended and improved in subsequent work \cite{neyshabur2017exploring,golowich2017size}.
Especially, \cite{bartlett2017spectrally} bounds the generalization gap by a product of the spectral norms of the parameter matrices.
Alternatively, \cite{hardt2016train} derived a bound on the generalization gap involving the parameter distance between the initial value and a global minimizer. Section \ref{sec:related_works} provides a more comprehensive overview of related work. 

The imposed constraints on the parameter norms might be violated in practice. \cite{nagarajan2019uniform} shows that during network training, the parameter matrices $\mA$ move far away from the origin and the initialization. Empirically, it is argued that the distance to the initialization increases polynomially with the number of training data. The claim is that only for simple models, implicit regularization favors small norms.

\subsection{Summary of Results}
We first define suitable neighborhoods around the local minima of the loss surface. 
Under a number of conditions, we then prove that SGD with Gaussian gradient noise enters such a neighborhood and will not escape it with positive probability. 
For this reason, we also call the neighborhoods stagnation sets. In a second step, we derive bounds for the complexity of these neighborhoods. Conditionally on the SGD iterates lying in one of these neighborhoods, we finally derive a generalization gap bound. Based on these results, we then argue that the loss surface itself constraints the SGD resulting in the loss surface implicit regularization.

To define a suitable notion of a \textit{stagnation set}, we consider a sequence of parameters $\{\mA_t\}_{t=1}^T$ generated by Gaussian SGD with an iteration index $t \in \N$ and a stopping time $T \in \N.$ The initial value is $\mA_0 \in \Theta$ and the iterates are defined via the update equation $\mA_{t+1} = \mA_t - \eta_t \nabla \hat{R}_n( \mA_t)$. Here, $\eta_t > 0$ is a given learning rate and $\hat{R}_n$ is a perturbed loss with Gaussian noise such that the updates resemble SGD (a formal definition is provided in \eqref{def:reguralized_SGD}). The output of the method is $\hat{\mA} = \mA_T$.
We define the notion of a stagnation set as follows:
\begin{definition}[Stagnation Set]
    For $p \in [0,1]$, $\mB \subset \Theta$ is a $p$-\textit{stagnation set}, if the following holds with some $\Bar{t} \geq 1$:
    \begin{align*}
        \Pr \left(\mA_t \in \mB, \forall t \in [\Bar{t}, T] \right) \geq p.
    \end{align*}
\end{definition}
For convenience, we omit the dependence on $\Bar{t}$ and only indicate the dependence on $p$ in the notation of a $p$-stagnation set. Indeed, the dependence on $\Bar{t}$ is of little importance, 
as $\Bar{t}$ is fixed and considered to be small compared to the total number of iterations $T.$

Thus with probability at least $p$, SGD will not leave the set $\mB$ anymore. Conditionally on this event, it is sufficient to control the Rademacher complexity of $\mB$ to bound the generalization gap.
On the contrary, a set is a $0$-stagnation set if the output of the Gaussian SGD algorithm does not stagnate in the set.
In this case, it is impossible to derive a generalization gap bound based on the Rademacher complexity of such a set. 
According to the empirical study in \cite{nagarajan2019uniform}, SGD does not stagnate in the sets $\Tilde{B}$ or $\check{B}$ defined in Section \ref{sec.backg}, implying that those are $p$-stagnation set for some small $p$ or even $p=0$.

\begin{figure}
    \centering
    \includegraphics[width=0.78\hsize]{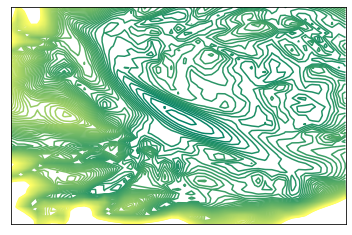}\\
    \includegraphics[width=0.78\hsize]{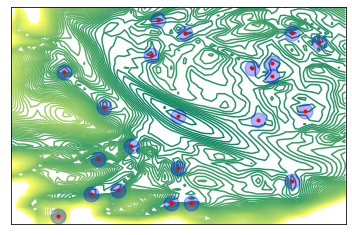}
    \caption{(Top) Contours of a two-dimensional projection of the test loss surface of a deep residual neural network with $56$ layers trained using CIFAR-10 data. For the training pruning of shortcuts is applied \cite{he2016deep}. 
    The two-dimensional projection is generated by dimension reduction of the $850,000$ dimensional parameter space using the random-direction method \cite{li2018visualizing}.
    (Bottom) Population minima (red dots) 
    and their neighbourhoods (blue balls) on the loss surface. Here the population minima are computed using the test data. For the visualization, the radius of all balls is set to $0.08$ in the dimension-reduced space.
    }
    \label{fig:non_convex}
\end{figure}

As illustrated in Figure \ref{fig:non_convex}, the loss landscape of neural networks is highly-nonconvex and possesses multiple local minima whose neighborhoods form attractive basins. Note that in the definition above, a stagnation set is non-random. As candidate for good stagnation sets, we take the union over local neighborhoods of the minima of the expected loss surface.

\begin{definition}[Population Minimum] \label{def:pop_minima}
We call a subset $\mu \subseteq \Theta$ a \textit{population minimum}, if $\mu$ is a maximally connected set consisting of local minima of $R(\mA)$.
\end{definition}
Let us emphasize again that those are minima of the expected loss surface and not the empirical loss surface. Consequently, the set $\mu$ is deterministic. To see why a minimum does not simply consists of one parameter as in the case of a strictly convex loss surface, consider a ReLU network with activation function $\sigma(x)=\max(x,0).$ Multiplying all parameters in one hidden layer of a ReLU network by $\alpha>0$ and dividing in another hidden layer all parameters by $1/\alpha,$ gives the same network function. Thus, if $\mA=(A_1,\ldots,A_L)\in \mu$ then also $\mA'=(\alpha A_1,\alpha^{-1}A_2,A_3,\ldots,A_L)\in \mu.$
For general activation function, if a minimum corresponds to a parameter $\mA=(A_1,\ldots,A_L)$ such that one of the weight matrices $A_\ell$ has a zero column vector, then, some of the parameters of the previous layer $A_{\ell-1}$ do not affect the loss function. Hence, the population minimum can in principle consist of several parameters, regardless of the choice of activation function.

In Definition \ref{def:neighbour}, we introduce a suitable notion of $\delta$-neighborhood around $\mu,$ denoted by $\mB_\delta(\mu) \subset \Theta.$ A key object in our approach is the union of $\delta$-neighbourhoods over several minima $\mu_1,...,\mu_K$ defined as
\begin{align}
    \mfB_{K,\delta} := \bigcup_{k=1}^K \mB_\delta(\mu_k).
    \label{eq.union_sets}
\end{align}
The main idea underlying this definition is that the Gaussian SGD parameter updates are attracted into these local neighborhoods and stagnate there. To prove a result in this direction, we impose \textit{local essential convexity} on the expected loss surface $R(\mA)$ within the neighbourhoods.
This assumption requires the loss function to be convex on a path between a parameter $\mA$ in the neighborhood and its projection on the local minimum $\mu$, as illustrated in Figure \ref{fig:projection_convex} (details are provided in Assumption \ref{asmp:essential_convex} below).
This condition can be used when the minimum consists of more than one parameter vector. It should be noted that the essential convexity is weaker than essential strong convexity \cite{liu2015asynchronous}, and similar to the Polyak-\L ojasiewicz condition \cite{polyak1963gradient}. Compared to these conditions, the local essential convexity seems better suited to derive bounds for the generalisation error in our framework.
When the SGD output stagnates in $\mfB_{K,\delta}$, we can use the Rademacher complexity of $\mfB_{K,\delta}$ to bound the generalization gap. Figure \ref{fig:non_convex} illustrates the non-convex loss landscape of a deep neural network by using dimension reduction. The displayed population minima and their neighborhoods were found using numerical methods.
\begin{figure}
    \centering
    \includegraphics[width=0.99\hsize]{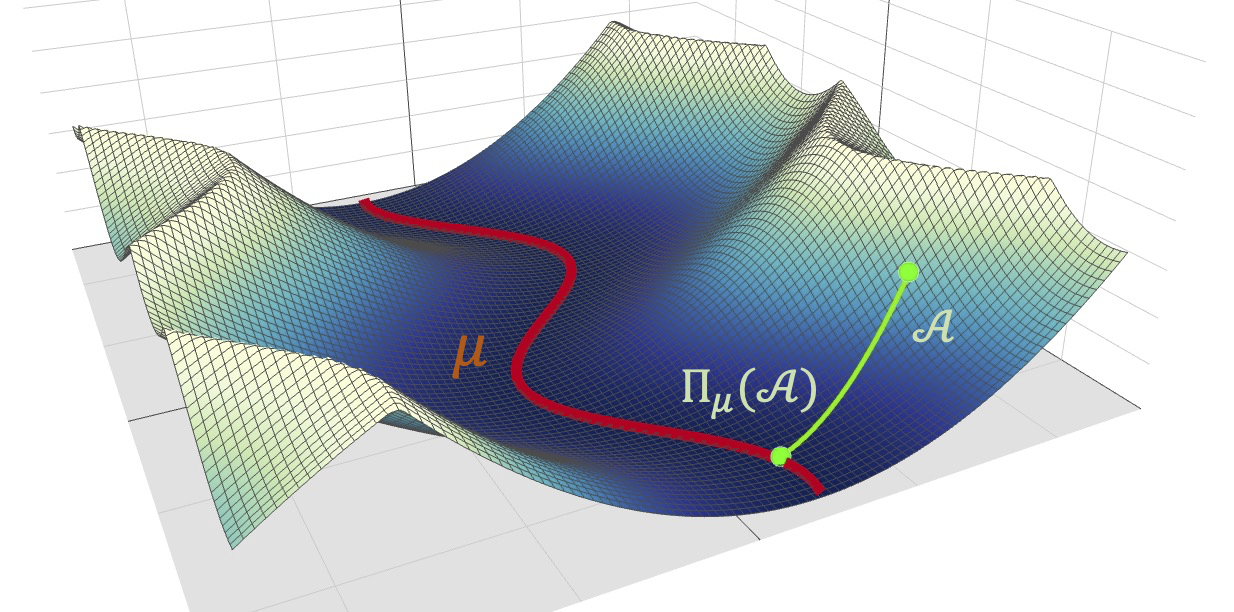}
    \caption{Illustration of the local essential convexity for the loss surface around (population) minima.
    The red curve denotes the minimum $\mu$, the green curve connecting $\mA \in \mB_\delta(\mu)$ and its projection $\Pi_\mu(\mA)$ is assumed to be convex.}
    \label{fig:projection_convex}
\end{figure}

In the described setting, we state two main results. The first provides a lower bound for the probability that the SGD iterations stagnate in $\mfB_{K,\delta}$ under several assumptions including local essential convexity.

\begin{theorem}[Informal Statement of Theorem \ref{thm:stay_prob_regularized}]
    Consider Gaussian SGD with arbitrary initialization (not necessary in $\mfB_{K,\delta}$) and a learning rate $\eta_t \asymp t^{-\alpha}$ for some $\alpha \geq 1$.
    Under a number of regularity assumptions and sufficiently large $n$ and $T$, there exist constants $c,c'> 0$ such that for any $\varepsilon \in (0,1/2),$ 
    $\mfB_{K,\delta}$ is an $(1-\varepsilon)$-stagnation set.
\end{theorem}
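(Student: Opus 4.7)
The plan is to decompose the proof into two stages: a \emph{hitting phase}, establishing that SGD enters $\mfB_{K,\delta}$ by some deterministic time $\bar t$ with probability at least $1-\varepsilon/2$, and a \emph{containment phase}, showing that once inside, the iterates never leave $\mfB_{K,\delta}$ up to time $T$ with probability at least $1-\varepsilon/2$. A union bound then yields the $(1-\varepsilon)$-stagnation property.

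For the hitting phase, I would exploit the Gaussian noise directly. At step $\bar t$, the conditional density of $\mA_{\bar t}$ given $\mA_{\bar t-1}$ is Gaussian with covariance proportional to $\eta_{\bar t-1}^2$, hence bounded pointwise by $(2\pi\eta_{\bar t-1}^2\sigma^2)^{-D/2}$. Integrating this bound against $\Theta\setminus\mfB_{K,\delta}$, which by hypothesis has Lebesgue measure at most $c\varepsilon$, yields
\begin{align*}
\Pr\bigl(\mA_{\bar t}\notin\mfB_{K,\delta}\bigr)\ \lesssim\ (\eta_{\bar t-1}\sigma)^{-D}\cdot c\,\varepsilon,
\end{align*}
so choosing $c$ small enough relative to the constants produces the required $\varepsilon/2$ bound regardless of where $\mA_0$ is initialised.

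For the containment phase, I would condition on $\mA_{\bar t}\in\mB_\delta(\mu_k)$ for some $k$ and introduce the Lyapunov function $V_t:=\|\mA_t-\Pi_{\mu_k}(\mA_t)\|^2$, the squared distance to the population minimum. By the local essential convexity assumption, the deterministic drift contracts $V_t$ while the Gaussian perturbation adds variance of order $\eta_t^2$; a one-step computation then gives
\begin{align*}
\Ep\bigl[V_{t+1}-V_t\,\bigm|\,\mA_t\bigr]\ \leq\ -c_1\eta_t V_t+c_2\eta_t^2
\end{align*}
on the event $\{\mA_t\in\mB_\delta(\mu_k)\}$. Writing $V_{t+1}-V_t$ as a predictable drift plus a martingale difference driven by the noise vector $\xi_t$ and applying Freedman's inequality, I would obtain
\begin{align*}
\Pr\Bigl(\sup_{\bar t\leq t\leq T}V_t>\delta^2\Bigr)\ \lesssim\ \exp\!\bigl(-c_3\,\delta^2\big/\textstyle\sum_{t\geq\bar t}\eta_t^2\bigr),
\end{align*}
where the denominator is finite because $\alpha\geq 1$ implies $\sum_t t^{-2\alpha}<\infty$. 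Taking $\delta\geq c'\log(1/\varepsilon)$ then forces the right-hand side below $\varepsilon/2$.

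The main obstacle will be handling iterates that drift between basins of two different minima $\mu_k$ and $\mu_{k'}$, where the projection $\Pi_{\mu_k}$ becomes ambiguous and the Lyapunov contraction argument breaks. I would address this by choosing $\delta$ strictly smaller than $\tfrac12\min_{k\neq k'}\mathrm{dist}(\mu_k,\mu_{k'})$, so that the neighbourhoods $\mB_\delta(\mu_k)$ are disjoint and each iterate in $\mfB_{K,\delta}$ has a unique associated minimum, permitting the containment argument to be run componentwise on each $\mB_\delta(\mu_k)$. A secondary difficulty is that local essential convexity is imposed on the population risk $R$, whereas SGD is driven by $R_n$; closing this gap requires a local uniform convergence bound over $\mfB_{K,\delta}$ whose size is controlled by the Rademacher complexity of the neighbourhoods, and the ``sufficiently large $n$'' clause in the statement is exactly what allows this stochastic gradient discrepancy to be absorbed into the residual $c_2\eta_t^2$.
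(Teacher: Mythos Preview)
Your two-phase decomposition into hitting and containment matches the paper's strategy exactly (see the display \eqref{ineq:decomp_stag_prob}), and you correctly flag that the gap between $R$ and $R_n$ must be closed by a local uniform gradient-convergence bound, which is precisely Lemma~\ref{lem:gradient_convergence_entropy} in the paper. Your hitting argument differs from the paper's---you integrate the Gaussian density \emph{supremum} over the small complement $\Theta\setminus\mfB_{K,\delta}$ at a single step, whereas the paper integrates the density \emph{infimum} over $\mfB_{K,\delta}$ and then takes a product over many steps (Proposition~\ref{prop:reach_prob})---but for the informal statement, where you may choose the constant $c$ freely, your version is workable provided $\bar t$ is fixed so that the factor $\eta_{\bar t-1}^{-D}$ is a constant.

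The genuine gap is in the containment phase. Your drift bound $\Ep[V_{t+1}-V_t\mid\mA_t]\leq -c_1\eta_t V_t+c_2\eta_t^2$ asserts geometric contraction toward $\mu_k$, but local essential convexity (Assumption~\ref{asmp:essential_convex}) does \emph{not} provide this. Combined with smoothness it yields only the co-coercivity-type bound $\langle\nabla R(\mA),\mA-\Pi_\mu(\mA)\rangle\geq c\,\|\nabla R(\mA)\|_F^2$, which shows that the deterministic gradient step \emph{does not increase} the distance to $\mu$---this is exactly what the paper proves in Lemma~\ref{lem:local_attraction}---but not that it contracts by a factor proportional to $V_t$. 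A bound $\langle\nabla R(\mA),\mA-\Pi_\mu(\mA)\rangle\geq c\,V_t$ would require essential \emph{strong} convexity or a PL inequality, which the paper explicitly distinguishes as stronger hypotheses. Without the $-c_1\eta_t V_t$ term, the positive drift $c_2\sum_t\eta_t^2$ can accumulate, and when $V_{\bar t}$ is already close to $\delta^2$ your Freedman deviation threshold $\delta^2-V_{\bar t}-c_2\sum\eta_t^2$ becomes nonpositive, so the bound is vacuous.

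The paper avoids this by abandoning the cumulative Lyapunov/martingale route in favour of a step-wise product: Lemma~\ref{lem:local_attraction} guarantees that the noiseless update $\mA_t^-=\mA_{t-1}-\eta_{t-1}\nabla R_n(\mA_{t-1})$ remains in $\mB_\delta(\mu)$ (strict decrease of $\|\cdot\|_{L,2,1}$-distance, no rate needed), and then Lemma~\ref{lem:noise_escape} bounds the probability that the Gaussian perturbation pushes $\mA_t$ out by $\exp(-c\,\delta^2/\eta_t^2)$. The non-escaping probability over $[\bar t,T]$ is then $\prod_t\bigl(1-\exp(-c\,\delta^2\eta_t^{-2})\bigr)\geq 1-\sum_t\exp(-c\,\delta^2 t^{2\alpha})$, and the hypothesis $\delta\gtrsim\log(1/\varepsilon)$ makes this sum at most $\varepsilon/2$. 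Because each step is handled separately, no quantitative contraction rate is ever required.
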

This result shows that under appropriate conditions, the union of population minima neighborhoods is a valid stagnation set. This result can be applied to generic loss surfaces and does not exploit the specific structure of neural networks.

The second main result is a generalization gap bound for deep neural networks if stagnation occurs. The statement can be summarized informally as
\begin{theorem}[Informal Statement of Theorem \ref{thm:bound_gen}]
    If Gaussian SGD stagnates in $\mfB_{K,\delta}$, then,
    \begin{align*}
         R(\Hat{\mA}) - R_n(\Hat{\mA}) =  \tilde{O}_\Pr\left(\frac{\sqrt{d \log K} + SL \delta \log^{1/2}(\Bar{h})}{\sqrt{n}} \right),
    \end{align*}
    where $d$ denotes the maximum dimension of the population minima $\mu_1, \dots, \mu_K$, $\Bar{h}$ is the maximum width of the deep neural network, and $S := \sup_{\mA \in \mfB_{K,\delta}}\prod_{\ell = 1}^L \|A_\ell\|_s$ with spectral norm $\|\cdot\|_s$.
\end{theorem}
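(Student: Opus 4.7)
The plan is to condition on the stagnation event so that $\hat{\mA}\in\mfB_{K,\delta}$, reduce the generalization gap to a uniform deviation over $\mfB_{K,\delta}$, and then exploit the two-scale structure $\mfB_{K,\delta}=\bigcup_{k=1}^{K}\mB_\delta(\mu_k)$ --- a union of $K$ tubes of radius $\delta$ around the $d$-dimensional population minima --- in a Dudley-type covering argument.

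On the stagnation event (whose probability is controlled by the previous theorem) we have $\hat{\mA}\in\mfB_{K,\delta}$, so
\[
R(\hat{\mA})-R_n(\hat{\mA})\;\le\;\sup_{\mA\in\mfB_{K,\delta}}\bigl(R(\mA)-R_n(\mA)\bigr).
\]
Symmetrization together with a bounded-differences inequality (exploiting boundedness of $\ell$ under the paper's regularity assumptions) reduces the right-hand side, up to an additive $\tilde{O}_\Pr(n^{-1/2})$ term, to controlling the empirical Rademacher complexity of the loss class $\mathcal{F}_{K,\delta}:=\{(x,y)\mapsto\ell(y,f(x;\mA)):\mA\in\mfB_{K,\delta}\}$, which I plan to bound via Dudley's entropy integral.

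For the entropy I split along the union and then within each tube. A union-of-covers bound gives $\log\mathcal{N}(\varepsilon,\mathcal{F}_{K,\delta},\|\cdot\|_n)\le\log K+\max_{k}\log\mathcal{N}(\varepsilon,\mathcal{F}_{\mu_k,\delta},\|\cdot\|_n)$. Within a single tube, write $\mA=\Pi_{\mu_k}(\mA)+\Delta$ with $\|\Delta\|\le\delta$. The projection ranges over an at most $d$-dimensional subset of $\Theta$, yielding an entropy contribution of order $d\log(1/\varepsilon)$ by a volume argument. The perturbation $\Delta$ is controlled by a layerwise telescoping estimate bounding the pointwise change in the network output by $\sum_{\ell=1}^{L}\|\Delta_\ell\|_s\prod_{j\ne\ell}\|A_j\|_s\le SL\delta$, so that the $\|\cdot\|_n$-diameter of each cell is $O(SL\delta)$. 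A Maurey-type spectral covering argument in the spirit of the Bartlett--Foster--Telgarsky bound then produces
\[
\log\mathcal{N}(\varepsilon,\mathcal{F}_{\mu_k,\delta},\|\cdot\|_n)\;\lesssim\;d\log\tfrac{1}{\varepsilon}+\frac{S^{2}L^{2}\delta^{2}\log\Bar{h}}{\varepsilon^{2}}.
\]
Inserting this (together with the extra $\log K$) into a chained Dudley integral with a suitable cutoff yields, via a Massart step on the finite union, the $\sqrt{d\log K}/\sqrt n$ contribution up to logarithmic factors, while integrating the $\varepsilon^{-2}$ piece gives the $SL\delta\sqrt{\log\Bar{h}}/\sqrt n$ contribution. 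Combined with the reduction step, this produces the stated rate.

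The main obstacle is the local spectral covering step. One must show that within the localised tube $\mB_\delta(\mu_k)$ the network map is essentially $S$-Lipschitz in the parameters with the \emph{local} radius $\delta$, rather than with the global diameter of $\Theta$; the telescoping estimate only produces $SL\delta$ if the reference parameter on $\mu_k$ and the perturbed parameter differ by at most $\delta$ in each layer separately, which in turn requires that the definition of $\mB_\delta(\mu_k)$ be layerwise compatible with the projection $\Pi_{\mu_k}$. Establishing this compatibility, using the finiteness of $S=\sup_{\mA\in\mfB_{K,\delta}}\prod_\ell\|A_\ell\|_s$ guaranteed by the compactness of $\Theta$, and transporting the Maurey argument so that $\delta$ (and not $\operatorname{diam}(\Theta)$) appears in the numerator, is the technically delicate part of the proof.
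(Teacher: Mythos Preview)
Your proposal is correct and follows essentially the same route as the paper: condition on stagnation, pass to a uniform deviation over $\mfB_{K,\delta}$, reduce via symmetrization and contraction to the Rademacher complexity of the network class, and bound the latter through Dudley's integral applied to a two-part covering estimate --- one piece of order $d\log(K/\varepsilon)$ from covering the $d$-dimensional minima $\mu_k$, and one piece of order $S^2L^2\delta^2\log(\bar h)/\varepsilon^2$ from the Bartlett--Foster--Telgarsky spectral (Maurey) covering of the radial tube. The obstacle you flag is exactly the delicate point, and the paper resolves it precisely through the choice of the $\|\cdot\|_{L,2,1}$-norm in the definition of $\mB_\delta(\mu)$: since $\|\mA-\mA^*\|_{L,2,1}=\sum_\ell\|A_\ell-A_\ell^*\|_{2,1}\le\delta$ forces $\|A_\ell-A_\ell^*\|_{2,1}\le\delta$ layerwise, the Maurey lemma (their Lemma~3.2 from \cite{bartlett2017spectrally}) can be applied per layer with radius $\delta$ rather than $\mathrm{diam}(\Theta)$, and the recursive layerwise telescoping then produces the $SL\delta$ scale you anticipated.
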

This generalization gap bound depends on the network depth, the spectral norm of the weight matrices, and the radius $\delta$ of the minima neighbourhoods. While the number of network parameters $D$ does not explicitly appear in the bound, in practice, all quantities might depend in a highly non-trivial way on $D.$

On the technical side, we have developed three techniques to achieve the results: (I) evaluating the reaching probability of Gaussian SGD to the population minima neighborhood, (II) studying the probability of Gaussian SGD to stay in the neighbourhood of the population minima, and (III) metric entropy evaluation on neural networks within the population minima neighborhood.
For (I) reaching probability, we control the transition probability of the Gaussian SGD parameter update. For (II) the staying power of SGD, we apply the local essential convexity and locally uniform convergence of loss surfaces to show that the SGD stays in the neighborhood of population minima with high probability as the learning rate decreases. For (III) the entropy evaluation of neural networks, we combine the recent entropy analysis for deep neural networks in \cite{bartlett2017spectrally} with the uniform convergence tools for loss landscapes in \cite{mei2018landscape}.

\subsection{Related Work and Comparison} \label{sec:related_works}
This article is closely related to the work on SGD induced implicit regularization. We already briefly mentioned several articles in the previous section, and provide here a more in-depth overview. The argument that learning algorithms lead to implicit regularization has been shown in various settings such as matrix factorization, linear convolutional network, logistic regression, and others \cite{soudry2018implicit,ji2018risk,gunasekar2018characterizing,gunasekar2017implicit,gunasekar2018implicit,nacson2019convergence,li2018algorithmic}.
Although a precise understanding of implicit regularization in deep neural networks is still lacking, several articles assume that a norm of the parameters is bounded and give refined upper bounds on the generalization gap via uniform convergence \cite{neyshabur2015norm,neyshabur2017exploring,bartlett2017spectrally,golowich2017size,cao2020generalization}. A similar approach has been used in \cite{hardt2016train, kuzborskij2018data} based on the notion of algorithmic stability. The recent empirical work \cite{nagarajan2019uniform} shows, however, that the bounded norm assumption is violated.

Another way to evaluate the generalisation error is to introduce compressibility \cite{arora2018stronger,suzuki2019compression,li2020understanding}. These papers consider settings where the original neural network can be compressed into a neural network with  smaller capacity. By studying then uniform convergence for the compressed network class, tighter bounds for the generalisation gap are derived. \cite{arora2018stronger} compresses parameter matrices of a neural network by random projection and obtained an upper bound that can be evaluated in terms of the number of parameters of the compressed network.

The information-theoretic approach evaluates the generalization error using several discrepancies between measures such as mutual information \cite{russo2016controlling,xu2017information}. 
This method is well suited to analyze stochastic gradient Langevin dynamics (SGLD), which is a discretization of the Langevin dynamics. \cite{pensia2018generalization,bu2020tightening} derive generalization error bounds for SGLD, and \cite{negrea2019information} provides a refined analysis leading to data-dependent bounds.
In the same framework, \cite{neu2021information} considers a local perturbation of parameters and \cite{aminian2021exact} studies the Gibbs algorithm.
Although the SGLD is very similar to Gaussian SGD, the information-theoretic method is different from our approach.
Whereas we bound the generalization error by the size of stagnation sets using Rademacher complexity, the information-theoretic approach specifies parameter distributions by SGLD and evaluates generalization errors using information measures.

For non-convex optimization, bounds for the generalization error for SGD and its variations have been derived in \cite{raginsky2017non,mandt2016variational,jastrzkebski2017three,he2019control,kleinberg2018alternative,cheng2020stochastic}.
\cite{raginsky2017non,cheng2020stochastic} investigate the invariant distribution of stochastic differential equations. A particularly convenient approach is to employ Langevin dynamics in continuous time, in order to investigate global parameter search and to bound the generalization error \cite{raginsky2017non, mou2018generalization, tzen2018local}.
Although the analysis using invariant distributions is useful, it remains unclear whether those invariant distributions exist in general deep neural networks, since their existence requires specific assumptions.
Another direction is to study the local behaviour of SGD around local minima associated with loss shape  \cite{jastrzkebski2017three,he2019control,kleinberg2018alternative,liu2020loss}.
A limitation of this method is that only local properties can be investigated.

Another line of research is to study the generalization analysis of neural networks and related models in the overparametrized regime. When the number of parameters in the models is excessively large, there are multiple techniques to precisely measure generalization errors. To name a few, the spectrum-based analysis \cite{ belkin2019reconciling,mei2019generalization,liang2020just, montanari2019generalization,ba2019generalization,dobriban2018high, bartlett2020benign, tsigler2020benign}, and the use of loss functions whose shapes are almost convex or approaches zero due to a large number of network parameters \cite{allen2019learning,allen2019convergence,jacot2018neural}. 
A disadvantage of this approach is that until now it can only deal with linear or two-layer neural network models.

In contrast to earlier work, our approach aims to shed some light on the implicit regularization of Gaussian SGD induced by the loss surface and the geometry of its local minima. This source of implicit regularization arises from the structure of the deep networks and is not due to overparametrization of the model. Moreover, we consider the global behavior of SGD and allow the initial value of the algorithm to be far from the learned parameters.

\subsection{Notation}
For real numbers $a,a'$, $a \vee a' := \max\{a,a'\}$ is the maximum. 
For $d \in \N$, $I_d$ denotes an identity matrix of size $d \times d$.
The Euclidean norm of a vector $b \in \R^d$ is denoted by $\|b\|_2 := \sqrt{b^\top b}.$ For a matrix $A \in \R^{d \times d'}$, we define the matrix norms $\|A\|_{p,q} = \|(\|A_{:,1}\|_p,...,\|A_{:,h}\|_p)\|_q$ and write $\|A\|_s$ for the spectral norm (the largest singular value of $A$). Moreover, for the $L$ weight matrices $\mA =(A_1,\dots,A_L)$ in a deep network, we introduce the norms $\|\mA\|_{L,2,1} := \sum_{\ell = 1}^L \|A_\ell\|_{2,1}$ and $\|\mA\|_F^2 := \sum_{\ell =1}^L \|A_\ell\|_{2,2}^2$. We define $\langle \mA, \mA' \rangle := \sum_{\ell =1}^L \langle A_\ell, A'_\ell \rangle,$ where $\langle \cdot, \cdot \rangle$ denotes the Frobenius inner product. For $\mA$, let $\Pi_\mB(\mA)$ be the projection of $\mA$ onto the set $\mB$ with respect to the norm $\|\cdot\|_{L,2,1}$. In the following, let $C_{w}$ be a positive finite constant depending on $w$. For a set $\mB \subset \R^D$, $\lambda(\mB)$ denotes the Lebesgue measure of $\mB$.
For sequences $\{a_n\}_n$ and $\{b_n\}_n$, $a_n \lesssim b_n$ means that there exists $C>0$ such that $a_n \leq  C b_n$ holds for every $n \in \N$. Moreover, $a_n \gtrsim b_n$ iff $b_n \lesssim a_n$. Finally, we write $a_n \asymp b_n$ if both $a_n \gtrsim b_n$ and $a_n \lesssim b_n$ hold. We write $a_n = \Omega (b_n)$ for $\limsup_{n \to \infty} |a_n/b_n| > 0$.
$\mone\{E\}$ denotes the indicator function. It is $1$ if the event $E$ holds and $0$ otherwise.
For a set $F$, a norm $\|\cdot\|$ and $\varepsilon > 0$, $\mN(\varepsilon, F, \|\cdots\|)$ denotes the $\varepsilon$-covering number of $F$ with respect to $\|\cdot\|$.

\section{Setting}
\subsection{Deep Neural Network}
Let $L$ denote the number of layers, and let $A_\ell \in \R^{h_\ell \times h_{\ell
- 1}}$ be the weight matrices for each $\ell = 1,...,L$, with intermediate dimensions $h_\ell$ for $\ell = 0,...,L$. For convenience, we consider neural networks with one output unit, that is, $h_L = 1$. Moreover, $\bar{h} := \max_{j=1,...,L} h_j$ denotes the maximum width and $D= \sum_{\ell=1}^L h_\ell h_{\ell - 1}$ is the number of network parameters. For an activation function $\sigma(\cdot)$ that is $1$-Lipschitz continuous,
we define an $L$-layer neural network for $x \in \R^{h_0}$ as the function
\begin{align*}
    f(x; \mA) = A_L \sigma\big(A_{L-1} \cdots  \sigma(A_1 x) \cdots \big),
\end{align*}
with parameter tuples $\mA = (A_1,...,A_L) \in \Theta \subset \R^{D} := \R^{h_1 \times h_0} \times \R^{h_2 \times h_1} \times \cdots \times \R^{h_L \times h_{L-1}}$.

\subsection{Supervised Learning Problem} \label{sec:setting_supervised_learning}
Given observations $(X_i,Y_i) \in \mX \times \mY$ with sample spaces $\mX \subset \R^{h_0}$ and $\mY \subset \R^{h_L}$, we define the loss function $\ell:\mY^2 \to [0,1]$ as $\ell(Y_i,f(X_i,\mA)).$ To apply gradient descent methods, we need to assume that $\ell(Y_i,f(X_i,\mA))$ is differentiable with respect to the parameters. For such a loss and the sample $\mD_n := \{(X_i,Y_i)\}_{i=1}^n$, we define the corresponding empirical risk function $R_n(\mA):= n^{-1} \sum_{i=1}^n \ell(Y_i,f(X_i,\mA))$ and the expected risk function $R(\mA) := \Ep_{X,Y}[\ell(Y,f(X,\mA))]$.
To calculate derivatives of $R_n(\mA)$ and $R(\mA)$ with non-differentiable $\sigma$, we can take the sub-derivative of $\sigma$ instead, e.g., for the ReLU activation case, $\nabla \sigma(x) = \nabla \max\{x,0\} = \mone\{x \geq 0\}$.
We set $\mX = [0,1]^{h_0}$. This ensures $\sup_{x \in \mX} \|x\|_2 \leq \sqrt{h_0}$. 

\subsection{Gaussian Stochastic Gradient Descent}
\label{sec.GSGD_intro}
We study stochastic gradient descent (SGD) with Gaussian noise based on the empirical loss $R_n(\mA)$ to learn the parameters $\hat{\mA} \in \Theta.$ Let $\mA_0 \in \Theta$ be the initial parameter values and denote by $\mA_t \in \Theta$ for $t = 0,1,2,...$ the parameter values after the $t$-th SGD iteration.
We assume that $\Theta$ is compact and constraint the iterates of the algorithm to parameter tuples $\mA = (A_1,...,A_L) \in \Theta$. We assume existence of a filtration $\{\mF_t\}_{t \geq 0}$ and define an $\mA$-dependent $\mF_{t+1}$-measurable Gaussian vector $U_{t+1}(\mA)$ such that $\Ep[U_{t+1}(\mA)\mid \mF_t] = 0$ and  $\Cov(U_{t+1}(\mA) \mid \mF_t) = G(\mA)$ for all $ \mA \in \Theta$.
Here, $G: \Theta \to \R^{D}\times \R^{ D}$ is a matrix-valued map and all eigenvalues of $G(\mA)$ are assumed to be bounded from above by $c_G > 0$ and from below by $c_G' > 0$ for all $\mA \in \Theta$.
Given an initial parameter $\mA_0 \in \Theta$, we define a sequence of parameters ${\mA}_1,\mA_2,...,{\mA}_T$ by the following SGD update:
\begin{align}
    {\mA}_{t+1} = {\mA}_t - \eta_t \nabla R_{n}({{\mA}_t}) + \frac{\eta_t}{\sqrt{m}} U_{t+1}(\mA_t),\label{def:reguralized_SGD}
\end{align}
for $t=0,1,...,T-1$ with $\eta_t > 0$ the learning rate.
Here $T$ is a pre-determined deterministic stopping time $T$ and the output of the algorithm is $\hat{\mA} := \mA_T.$ If $\mA_{t+1}$ falls outside the parameter space $\Theta$, we project the vector onto $\Theta$ with respect to the $\|\cdot\|_F$-norm and then substitute it into $\mA_{t+1}$.
Furthermore, we assume that the initial point $\mA_0$ and the set of observations $\mD_n$ is $\mF_0$-measurable.

Gaussian SGD is an approximation of the widely used minibatch SGD. Theoretical and experimental similarities and differences are discussed in Section \ref{sec:minibatch_SGD}.

\section{Key Notion and Assumption}

\subsection{Population Minima and their Neighborhoods}

We introduce key concepts and assumptions related to the notion of population minima introduced in Definition \ref{def:pop_minima}. As we are considering the expected loss, everything is deterministic.

For a positive integer $d$ and a constant $c_{\mu} > 0$, we pick all population minima $\mu$ that satisfy for any $\varepsilon > 0$
\begin{align}
    &\mN(\varepsilon, \mu, \|\cdot\|_F) \leq c_\mu \varepsilon^{-d},
    \label{eq.d_def}
\end{align}
and that are separated from the boundary of the parameter space $\Theta$ with respect to the distance induced by the norm $\|\cdot\|_{L,2,1}$. Let $K$ be the number of population minima $\mu_1,...,\mu_K$ satisfying the conditions.
In the previous formula, $d$ can be viewed as a constraint on the dimension of the minima. 
If $\mu_1,...,\mu_K$ are isolated points, the inequality holds with $d = 0$. 
It should be noted that $d$ is not as large as the number of parameters $D$ (e.g. \cite{li2018measuring}).
For each $\mu \in \{\mu_1,...,\mu_K\}$, we consider a suitable notion of neighbourhood.
\begin{definition}[$\delta$-Neighborhoods of $\mu$] \label{def:neighbour}
    For $\delta > 0$, we define the $\delta$-neighborhood of $\mu$ as
\begin{align*}
    \mB_\delta(\mu) := \left\{\mA \in  \Theta \mid \inf_{\mA^* \in \mu}\|\mA - \mA^*\|_{L,2,1} \leq  \delta \right\}.
\end{align*}
This neighborhood is defined via the $\|\cdot\|_{L,2,1}$-norm, which is suitable for the generalization analysis of deep neural networks as developed in \cite{bartlett2017spectrally}.
Since the minima are assumed to be in the interior of the parameter space, we can find a sufficiently small $\delta$ such that $\mB_\delta(\mu_k) \subset \Theta$ holds for all $k=1,...,K.$ A key object in the analysis is the union of the $\delta$-neighborhoods
\begin{align*}
    \mfB_{K,\delta} := \bigcup_{k=1}^K \mB_\delta(\mu_k).
\end{align*}
\end{definition}

\subsection{Loss Surface and Gradient Noise}
We firstly discuss the local shape of the expected loss $R(\mA)$. Denote by $\Pi_\mu(\mA)$ the projection of $\mA$ onto a set $\mu$ with respect to the norm $\|\cdot\|_{L,2,1}$.

\begin{assumption}[Local Essential Convexity] \label{asmp:essential_convex}
For any $\mu \in \{\mu_1,...,\mu_K\}$ and any two parameters $\mA,\mA' \in \mB_\delta(\mu)$ with $\Pi_\mu(\mA) = \Pi_\mu(\mA')$, we have
\begin{align*}
    R(r\mA + (1-r)\mA') \leq rR(\mA) + (1-r) R(\mA'),
\end{align*}
for all $0\leq r\leq 1.$
\end{assumption}
This is a convexity property on the path between a parameter and its projected version in $\mB_\delta(\mu)$ as displayed in Figure \ref{fig:projection_convex}. The condition is weaker than local essential strong-convexity \cite{liu2015asynchronous}.
Since this version of convexity is defined for each projection path, Assumption \ref{asmp:essential_convex} does not depend on the shape of the population minima $\mu$: it is valid even if the minima $\mu$ is not isolated nor a non-convex set. 

There is no clear connection between local essential convexity and the Polyak-\L ojasiewicz (PL) condition \cite{polyak1963gradient}. While both conditions have a similar flavour, PL imposes a lower bound on the Frobenius-norm of $\nabla R(\mA)$ by a difference $R(\mA) - \inf_{\mA' \in \mB_\delta(\mu)} R(\mA')$ up to constants.

Existing work indicates that local essential convexity holds for neural networks. For example, \cite{kawaguchi2016deep,choromanska2015loss,kawaguchi2020elimination} have shown that for specific neural network architectures the Hessian matrix around minima is positive definite. This implies local essential convexity. The theory of neural tangent kernels \cite{jacot2018neural} with over-parametrized neural networks yields the same result.
We also mention that another notion of convexity has been shown by \cite{safran2016quality,milne2019piecewise} in connection with regularization and initial values.

Moreover, we need to impose some smoothness on the expected loss surface $\mA\mapsto R(\mA)$. Let $\nabla_{\ell,j,k}$ denote the partial derivative with respect to the $(j,k)$-th entry of the weight matrix $A_\ell$.
\begin{assumption}[Local Smoothness] \label{asmp:local_smoothness}
For any $\mu \in \{\mu_1,...,\mu_K\}$ and any parameters $\mA, \mA' \in \mB_\delta(\mu)$, $R$ is differentiable at $\mA$ and we have
\begin{align*}
    &|\nabla_{\ell,j,k} R(\mA) - \nabla_{\ell,j,k} R (\mA')| \leq C_{\nabla R} \|\mA - \mA'\|_F, 
\end{align*}
for all $\ell=1,...,L$ and all $(j,k) \in \{1,...,h_{\ell - 1}\} \times \{1,...,h_{\ell}\}$. 
\end{assumption}
Assumptions of this type are common, see e.g. \cite{raginsky2017non}. It should be noted that the condition only has to hold locally in the neighborhoods of the minima $\mfB_{K,\delta}.$ Since the assumption is about the expected loss, it holds for non-smooth activation functions such as the ReLU activation function. 
To describe this fact in more detail, we provide a simple example with few parameters and layers.
\begin{lemma}\label{lem:relu}
    Consider a neural network with two layers $L=2,$ widths $h_0=h_1=h_2=1$, ReLU activation $\sigma(x)=\max\{x,0\}$, and square loss function $\ell(t,t')=(t-t')^2$.
    Suppose that the maximum absolute value of each element of $\mA$ is bounded by $B > 0$.
    Then, for some distribution of $(X,Y)$, Assumption \ref{asmp:local_smoothness} holds with $C_{\nabla R} = \sqrt{2}(1 + \tfrac{4}{3}B^2)$.
\end{lemma}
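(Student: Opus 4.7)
The plan is to display a concrete distribution of $(X,Y)$ for which the expected risk is a smooth polynomial near the relevant population minimum; the Lipschitz constant of $\nabla R$ then falls out of elementary algebra. First, I would take $X \sim \mathrm{Uniform}[0,1]$ and $Y \equiv 1$ deterministically. With $h_0 = h_1 = h_2 = 1$, the parameters are scalars $\mA = (a, b) \in \R^2$ and the network is $f(x;\mA) = b\sigma(ax)$. Using $\Ep[X] = \tfrac{1}{2}$, $\Ep[X^2] = \tfrac{1}{3}$, and the fact that $\sigma(aX) = aX$ a.s.\ on $\{a \geq 0\}$ while $\sigma(aX) = 0$ a.s.\ on $\{a < 0\}$, one computes
\begin{align*}
R(\mA) = \begin{cases} 1 - ab + \tfrac{1}{3}a^2 b^2, & a > 0, \\ 1, & a \leq 0, \end{cases}
\end{align*}
so $R$ is $C^\infty$ on the open half-space $\{a>0\}$ and the only non-differentiability sits on the line $a=0$.

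Next, localize to the one-dimensional curve $\mu_1 = \{(a,b) : ab = 3/2,\ a > 0,\ |a| \vee |b| \leq B\}$, obtained by setting $\nabla R = 0$ on $\{a > 0\}$; on $\mu_1$, $R$ attains its global value $1/4$ (assuming $B \geq \sqrt{3/2}$; for smaller $B$, replace $Y \equiv 1$ by $Y \equiv c$ with $c \leq 2B^2/3$, which preserves the same algebraic structure and gives a Lipschitz constant not exceeding the claim). Since $a \geq 3/(2B) > 0$ on $\mu_1$, this curve is bounded away from the singular line, and for $\delta < 3/(2B)$ the neighbourhood $\mB_\delta(\mu_1)$ lies entirely in $\{a > 0\}$ where $R$ is smooth. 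The spurious two-dimensional flat set $\{a \leq 0\}$ of degenerate local minima is excluded from $\{\mu_1, \ldots, \mu_K\}$ by the entropy constraint \eqref{eq.d_def} as soon as the dimension parameter $d$ is chosen below $2$, so Assumption \ref{asmp:local_smoothness} only has to be verified on $\mB_\delta(\mu_1)$.

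With $\partial_a R = -b + \tfrac{2}{3} a b^2$, the factorisation $ab^2 - a'b'^2 = (a-a')b^2 + a'(b-b')(b+b')$, and the uniform bound $|a|,|a'|,|b|,|b'| \leq B$, I obtain $|ab^2 - a'b'^2| \leq B^2 |a-a'| + 2B^2|b-b'|$, whence
\begin{align*}
|\partial_a R(\mA) - \partial_a R(\mA')| \leq \tfrac{2B^2}{3}|a-a'| + \bigl(1 + \tfrac{4B^2}{3}\bigr)|b-b'|,
\end{align*}
and the symmetric bound holds for $\partial_b R$ with the roles of $a$ and $b$ swapped. Since $\tfrac{2B^2}{3} \leq 1 + \tfrac{4B^2}{3}$ and $|a-a'| + |b-b'| \leq \sqrt{2}\,\|\mA - \mA'\|_F$, both right-hand sides collapse into $\sqrt{2}\bigl(1 + \tfrac{4}{3}B^2\bigr)\|\mA - \mA'\|_F$, which is the stated $C_{\nabla R}$.

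The main obstacle is not the algebra but the geometric setup: the ReLU kink creates a family of flat local minima on $\{a \leq 0\}$ where $R$ is constant but fails to be differentiable across $a = 0$. The proof has to simultaneously position the nontrivial minimum $\mu_1$ strictly inside $\{a > 0\}$ (accomplished by choosing $X$ and $Y$ deterministically valued away from the kink after averaging) and discard the degenerate flat set from the list of considered population minima (accomplished by the entropy-dimension restriction), otherwise the differentiability clause of Assumption~\ref{asmp:local_smoothness} would fail inside some $\mB_\delta(\mu_k)$.
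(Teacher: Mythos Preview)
Your approach is essentially the paper's: exhibit a concrete distribution, compute $R$ as a polynomial in $(a,b)$, and bound the Lipschitz constant of each partial by elementary algebra. The paper takes $X\sim\mathrm{Uniform}[-1,1]$ rather than your $\mathrm{Uniform}[0,1]$, but your choice is actually cleaner: it produces exactly the polynomial $1-ab+\tfrac13 a^2b^2$ on $\{a>0\}$, whereas with $\mathrm{Uniform}[-1,1]$ the symmetry of $X$ forces $R$ to depend on $|a_1|$ rather than $a_1$, a wrinkle the paper's computation glosses over. Your final step, bounding $\tfrac{2B^2}{3}|a-a'|+(1+\tfrac{4B^2}{3})|b-b'|$ by $(1+\tfrac{4B^2}{3})(|a-a'|+|b-b'|)\le \sqrt{2}(1+\tfrac{4B^2}{3})\|\mA-\mA'\|_F$, is also tidier than the paper's square-root manipulation, though both reach the same constant.

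The one place you go beyond the paper is the geometric discussion: you explicitly locate the nontrivial minimum curve $\{ab=3/2,\ a>0\}$ away from the ReLU kink and invoke the entropy constraint \eqref{eq.d_def} with $d=1$ to discard the two-dimensional flat region $\{a<0\}$ of degenerate minima. The paper's proof simply computes on $\{a_1>0\}$ without addressing which population minima are in play or where differentiability might fail. Your extra care is warranted, since Assumption~\ref{asmp:local_smoothness} requires differentiability on every $\mB_\delta(\mu_k)$, and it costs nothing.
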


The constant $C_{\nabla R}$ will affect the SGD stagnation probabilities but does not influence the generalization gap in Theorem \ref{thm:bound_gen}.

We also impose an assumption on the effect of data-oriented uncertainty on gradient noise.
Let $Z = (Y,X)$ be a pair of the input and output variables.
\begin{assumption}[Gradient Noise] \label{asmp:sample_gradient_noise}
There exists $\xi > 0$ such that for any $\mu \in \{\mu_1,...,\mu_K\}$ and any parameter $\mA \in \mB_\delta(\mu)$, the following inequality holds: 
\begin{align*}
    \Ep_{X,Y}\left[ \exp \left( \big\langle b, \nabla \ell(Y, f(X, \mA)) - \nabla R(\mA) \big\rangle\right) \right] \leq \exp \left( \frac 12 \xi^2 \|b\|_F^2 \right).
\end{align*}
\end{assumption}
This condition is the main assumption in \cite{mei2018landscape} and allows us to establish uniform convergence of the generalization gap within the set $\mfB_{K,\delta}$.

\section{Main Result}

\subsection{Stagnation Probability}
We first introduce the lower bound on the stagnation probability for Gaussian SGD. Explicit expressions for all constants are given in the proofs. 
Recall that $\lambda(B)$ denotes the Lebesgue measure of the set $B \subset \R^D$, and $C_w$ denotes a finite positive constant depending on $w$.

\begin{theorem}[Stagnation Probability of Gaussian SGD] \label{thm:stay_prob_regularized}
    Consider the Gaussian SGD algorithm in \eqref{def:reguralized_SGD} and
    suppose Assumptions \ref{asmp:essential_convex}, \ref{asmp:local_smoothness}, and \ref{asmp:sample_gradient_noise} hold.
    For positive $\alpha,$ let the learning rates $\eta_t \asymp t^{-\alpha}$ be a monotonically decreasing sequence in $t$.
    Define $\underline{t} = \min\{t \mid \eta_t \leq C_{L, \Bar{h}, \nabla R} + C_{m,L,\Bar{h},D,G,\Theta, K}, t \geq C_{m, D,L,\Bar{h},G,\delta, \mfB}\}$.
    Then, for $\delta > C_{m,L,\Bar{h},D,G,\Theta}$
    and all sufficiently large $n$ and $\Bar{t} - \underline{t}$, the union of neighborhoods around local minima $\mfB_{K,\delta}$ is a $p^*$-stagnation set with
    \begin{align*}
        p^* \geq 1 -  (c_2 \lambda(\mfB_{K,\delta}))^{-1}\exp( - c_1 \Bar{t}^{2\alpha}) - (T - \Bar{t}) \exp(-c_3 \delta^2 \Bar{t} ),
    \end{align*}
    for $\Bar{t} \in [\underline{t}, T]$ and suitable constants $c_1 := C_{m,L,\Bar{h},D,G,\Theta}, c_2 := C_{m,D,G}$, and $c_3 := C_{ m, D, G,\Theta, \underline{t}, \alpha}$.
\end{theorem}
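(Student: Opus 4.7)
The plan is to split the stagnation event $\{\mA_t \in \mfB_{K,\delta} \text{ for all } t \in [\Bar{t},T]\}$ into a \emph{reaching phase} (the iterate arrives in $\mfB_{K,\delta}$ by step $\Bar{t}$) and a \emph{staying phase} (once inside, the iterate never exits), and to lower bound the two probabilities separately before combining them via the Markov property of the Gaussian SGD chain. The factor $\lambda(\mfB_{K,\delta})$ in the bound will come from the reaching phase and the tail factor $1 - e^{-c_1 \delta^2}$ from the staying phase.

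\textbf{Reaching phase.} For reaching I use only the Gaussian noise injected in \eqref{def:reguralized_SGD}, not the loss geometry. Since $U_{t+1}(\mA_t)$ has conditional covariance $G(\mA_t)$ with eigenvalues bounded below by $c_G'>0$, and $\eta_t \asymp t^{-\alpha}$ is bounded below on the window $[\underline{t},\Bar{t}]$, the one-step transition density of $\mA_{t+1}\mid \mF_t$ has a positive lower bound on the compact set $\Theta$ depending only on $m,D,G,\Theta,\underline{t}$. Iterating this lower bound over a constant number of steps $\Bar{t}-\underline{t}$ (enough for the accumulated Gaussian to cover $\Theta$) and integrating the resulting density over $\mfB_{K,\delta}$ yields
\begin{align*}
    \Pr\bigl(\mA_{\Bar{t}} \in \mfB_{K,\delta}\bigr) \;\geq\; c_2\, \lambda(\mfB_{K,\delta}), \qquad c_2 = C_{m,D,G,\Theta,\underline{t},\alpha}.
\end{align*}

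\textbf{Staying phase.} Conditional on $\mA_{\Bar{t}} \in \mB_{\delta/2}(\mu_k)$ for some $k$, I build a Lyapunov functional $\Phi_t := \|\mA_t - \Pi_{\mu_k}(\mA_t)\|^2$, working with a Frobenius proxy whose sublevel sets are equivalent to the $\|\cdot\|_{L,2,1}$-neighbourhood up to constants in $L,\Bar{h},D$, and show that $\Phi_t$ is a supermartingale with sub-Gaussian increments on $\{\mA_t \in \mB_\delta(\mu_k)\}$. Using the projection inequality $\|\mA_{t+1} - \Pi_{\mu_k}(\mA_{t+1})\| \leq \|\mA_{t+1} - \Pi_{\mu_k}(\mA_t)\|$ and expanding via \eqref{def:reguralized_SGD}, the conditional increment decomposes into four pieces: (i) $-2\eta_t \langle \nabla R(\mA_t), \mA_t - \Pi_{\mu_k}(\mA_t)\rangle \leq 0$ by local essential convexity (Assumption~\ref{asmp:essential_convex}) applied to the straight path between $\mA_t$ and its projection; (ii) the empirical-to-population gradient error $-2\eta_t \langle \nabla R_n(\mA_t)-\nabla R(\mA_t), \mA_t - \Pi_{\mu_k}(\mA_t)\rangle$, handled uniformly on $\mfB_{K,\delta}$ via the sub-Gaussian condition (Assumption~\ref{asmp:sample_gradient_noise}) combined with metric-entropy tools for neural networks (in the spirit of \cite{mei2018landscape,bartlett2017spectrally}), giving an $O(\delta\,\eta_t\,n^{-1/2})$ contribution up to log factors; (iii) the smoothness term $\eta_t^2\|\nabla R_n(\mA_t)\|_F^2 = O(\eta_t^2)$ by Assumption~\ref{asmp:local_smoothness} combined with boundedness of gradients on $\Theta$; and (iv) the Gaussian piece $\eta_t^2\|U_{t+1}\|_F^2/m$ with conditional mean of order $\eta_t^2 c_G D/m$ and a mean-zero martingale remainder with per-step conditional variance $O(\eta_t^2 \delta^2 c_G/m)$. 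Summing in $t$ uses $\sum_t \eta_t^2 < \infty$ for $\alpha \geq 1$; choosing $\delta \geq C_{m,L,\Bar{h},D}$ and $n$ large enough ensures the non-convex remainder terms (ii)--(iv) are dominated by the inward drift from (i). A Freedman-type maximal inequality for the martingale part then gives
\begin{align*}
    \Pr\Bigl(\sup_{\Bar{t}\leq t \leq T} \Phi_t \geq \delta^2 \,\Big|\, \mA_{\Bar{t}} \in \mB_{\delta/2}(\mu_k)\Bigr) \;\leq\; e^{-c_1\delta^2}, \qquad c_1 = C_{T,\Bar{t},m,L,\Bar{h},D,\alpha}.
\end{align*}
Combining with the reaching bound and summing over $k=1,\dots,K$ (or rather using $\mfB_{K,\delta}$ directly since the Lyapunov argument is local to whichever $\mu_k$ the iterate first enters) yields $p^* \geq c_2\,\lambda(\mfB_{K,\delta})(1 - e^{-c_1\delta^2})$.

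\textbf{Main obstacle.} The delicate step is the staying phase, for two reasons. First, the natural projection metric is $\|\cdot\|_{L,2,1}$, which is not induced by an inner product, so the squared-norm expansion and supermartingale calculus must be carried out either in an equivalent Frobenius Lyapunov function or by directly expanding the sum-of-Euclidean-norms, and the associated norm-equivalence constants (depending on $L,\Bar{h},D$) are what inflate $C_{m,L,\Bar{h},D}$ and $c_1$. Second, replacing $\nabla R_n$ by $\nabla R$ along a data-dependent trajectory requires a \emph{uniform} gradient-deviation bound over $\mfB_{K,\delta}$ whose entropy scales with $\delta$ and the network dimensions; calibrating this uniform error to be strictly smaller than the inward drift from essential convexity is precisely what forces the quantitative threshold $\delta \geq C_{m,L,\Bar{h},D}$ and the condition on $n$. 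Once these two calibrations are in place, the concentration step itself is standard.
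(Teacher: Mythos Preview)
Your decomposition into reaching and staying phases, and your reaching argument via the Gaussian transition-density lower bound, match the paper's Proposition~\ref{prop:reach_prob} essentially verbatim. The staying phase is organized differently. Rather than a Lyapunov supermartingale plus a Freedman-type maximal inequality, the paper argues step by step: Lemma~\ref{lem:local_attraction} shows that the noiseless update $\mA_t^- := \mA_{t-1} - \eta_{t-1}\nabla R_n(\mA_{t-1})$ strictly decreases the $\|\cdot\|_{L,2,1}$-distance to $\mu$ (using essential convexity, smoothness, and the same uniform gradient-deviation bound you sketch in item (ii), supplied by Lemma~\ref{lem:gradient_convergence_entropy}); Lemma~\ref{lem:noise_escape} then bounds the one-step escape probability due to the Gaussian kick by $\exp(-c\,\delta^2/\eta_t^2)$. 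Taking the product over $t$ and simplifying via Bernoulli's inequality produces the factor $1-e^{-c_1\delta^2}$. This sidesteps the two wrinkles you identify as the main obstacle: the contraction is established directly in the $\|\cdot\|_{L,2,1}$ metric, so no Frobenius proxy or norm-equivalence constants enter the staying argument itself; and because each step is treated on its own, the staying phase can start from anywhere in $\mB_\delta(\mu)$ rather than from $\mB_{\delta/2}(\mu)$, so the reaching bound delivers $\lambda(\mfB_{K,\delta})$ without the $\delta/2$ mismatch in your outline. Your route is the more standard stochastic-approximation stability argument and would also go through, provided you tighten the ``supermartingale'' claim: item (iv) contributes strictly positive conditional mean $\eta_t^2 c_G D/m$, so $\Phi_t$ is only a supermartingale up to a summable drift $\sum_t \eta_t^2$, and it is really this summability (not domination by item (i), which is merely $\leq 0$) that the threshold $\delta \geq C_{m,L,\Bar{h},D}$ is absorbing.
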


For fixed $\delta$ and sufficiently large $\Bar{t},$ the stagnation probability becomes arbitrarily close to one.

The lower bound of the stagnation probability consists of two main components:  the probability of entering the set $\mfB_{K,\delta}$, and the probability of not exiting $\mfB_{K,\delta}$ during the Gaussian SGD iterations. 
For any $\Bar{t} \in [\underline{t}, T]$, we can decompose the stagnation probability as follows:
\begin{align}
    &\Pr(\mA_t \in \mfB_{K,\delta}, \forall t \in  [\overline{t}, T]) \label{ineq:decomp_stag_prob} \\
    &=\Pr(\mA_t \in \mfB_{K,\delta}, \forall t \in  [\overline{t} +1 , T] \mid \mA_{\overline{t}} \in \mfB_{K,\delta}) \Pr(\mA_{\overline{t}} \in \mfB_{K,\delta}). \notag 
\end{align}
Here, $\Pr(\mA_{\overline{t}} \in \mfB_{K,\delta})$ describes the probability of staying $\mfB_{K,\delta}$ at time $\bar{t}$, and $\Pr(\mA_t \in \mfB_{K,\delta}, \forall t \in  [\overline{t} +1 , T] \mid \mA_{\overline{t}} \in \mfB_{K,\delta})$ is the non-exiting probability from $\mfB_{K,\delta}$. Theorem \ref{thm:stay_prob_regularized} follows by carefully bounding these terms from below. 

The decomposition into the two probabilities occurring on the right hand side of \eqref{ineq:decomp_stag_prob} allows us to link the analysis to the two phases that are commonly observed during the training process. Indeed, Gaussian SGD globally explores the parameter space first to get into a neighborhood of some local minimum. The parameter updates still can escape this neighborhood and also neighborhoods around local minima that are visited later with a certain probability. During this first training stage, Gaussian SGD can move far away from its initial values. To see this assume that the Gaussian contributions $U_{t+1}(\mA_t)$ in \eqref{def:reguralized_SGD} are multivariate standard normal. Then, we have that $\|U_{t+1}(\mA_t)\|_F^2$ is of the order $D$ with high probability. This then also means that $\|\mA_{t+1}-\mA_t\|_F^2$ is at least $O(\eta_t^2D/m).$ But as the number of iterations increases, the learning rate $\eta_t$ will get smaller, and Gaussian SGD will move slower and eventually stagnate at a local minimum. It is more likely that Gaussian SGD ends up in a wide local minimum as those are more difficult to escape from. Previous work studied the probabilistic escape from the neighborhood of a local minimum \cite{kleinberg2018alternative} and the stagnation in the neighborhood \cite{leclerc2020two} separately.

The lower bound in the previous theorem increases in $\Bar{t} \in [\underline{t}, T]$, but inequality \eqref{ineq:decomp_stag_prob} implies moreover that the parameter remains in the neighbourhood of the local minima for the last $T - \Bar{t}$ iterations. To know this is useful for situations such as fine-tuning or the stochastic weight averaging algorithm \cite{izmailov2018averaging}, where the parameter iterates need to have the same properties for a certain period of time.

As a consequence of the previous theorem, we obtain the following simplified result:
\begin{corollary} \label{cor:staying_prob}
Suppose $\underline{t} = 1$ and consider otherwise the same setting as in Theorem \ref{thm:stay_prob_regularized}. 
For any $\varepsilon \in (0,1)$, if $\Bar{t} \geq  \max\{c_1^{-1} \log ((\varepsilon^{-1 }- 1) / (c_2 \lambda(\mfB_{K,\delta})))^{1/2\alpha}, T^{1/2} / ( \varepsilon^{1/2} c_3 \delta^2 )  \}$, we have
\begin{align*}
    p^* \geq 1- \varepsilon.
\end{align*}
\end{corollary}
The imposed conditions essentially imply that $\Bar{t}$ should be of the order $O(\sqrt{T} \varepsilon^{-1/2})$.

\subsection{Generalization Gap Bound}
We provide an upper bound on the generalization error for deep neural networks, provided the SGD iterations remain in the stagnation set. As before, consider a deep neural network with $L$ layers, maximum width $\Bar{h}$, and $D$ network parameters.
We also define the largest possible product of its spectra within $\mfB_{K,\delta}$ by $S :=  \sup_{\mA \in \mfB_{K,\delta}} \prod_{\ell = 1}^L \|A_\ell\|_s$, and set $B:=\sup_{\mA \in \Theta}\|f(\cdot; \mA)\|_{L^\infty}$ for an $L^\infty$-upper bound over all neural network functions. 
The following result provides a bound on the generalization gap:
\begin{theorem}[Bound on Generalization Gap] \label{thm:bound_gen}
    If the same conditions as for Theorem \ref{thm:stay_prob_regularized} hold, then, for any $\varepsilon \in (0,1)$, with probability at least $p^* - \varepsilon$, we obtain
    \begin{align*}
        &R(\hat{\mA}) - R_n(\hat{\mA})\\
        &\leq C_{\mu,h_0,B} \frac{\sqrt{d(1+ \log K)} + SL \delta \log^{1/2}(\Bar{h}) \log n}{\sqrt{n}} + 3B\sqrt{ \frac{\log (1/\varepsilon)}{2n}}.
    \end{align*}
\end{theorem}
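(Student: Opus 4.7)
The first step is to reduce the generalisation-gap bound to a deterministic uniform-convergence statement over the stagnation set. On the event $\mathcal{E}_1 = \{\hat{\mA} \in \mfB_{K,\delta}\}$, which by Theorem \ref{thm:stay_prob_regularized} has probability at least $p^*$,
\begin{align*}
R(\hat{\mA}) - R_n(\hat{\mA}) \leq \sup_{\mA \in \mfB_{K,\delta}} \bigl(R(\mA) - R_n(\mA)\bigr).
\end{align*}
Since $\ell\circ f$ is bounded in terms of $B$, symmetrisation combined with a bounded-differences inequality (or, equivalently, the uniform-convergence framework of \cite{mei2018landscape} invoking Assumption \ref{asmp:sample_gradient_noise}) delivers, with probability at least $1-\varepsilon$,
\begin{align*}
\sup_{\mA \in \mfB_{K,\delta}} \bigl(R(\mA) - R_n(\mA)\bigr) \leq 2\, \fR_n\bigl(\ell\circ \mfB_{K,\delta}\bigr) + 3B\sqrt{\tfrac{\log(1/\varepsilon)}{2n}},
\end{align*}
where $\fR_n$ denotes empirical Rademacher complexity. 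Intersecting this event with $\mathcal{E}_1$ produces the probability $p^*-\varepsilon$ claimed in the theorem, and Lipschitz contraction in the loss argument reduces the task to bounding the Rademacher complexity of the function class $\{f(\cdot;\mA): \mA \in \mfB_{K,\delta}\}$.

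The main step is an entropy decomposition that feeds Dudley's integral. For each $\mA\in\mB_\delta(\mu_k)$ I split $\mA = \Pi_{\mu_k}(\mA) + (\mA - \Pi_{\mu_k}(\mA))$: the projected component lies in $\mu_k$, while the offset satisfies $\|\mA - \Pi_{\mu_k}(\mA)\|_{L,2,1}\leq\delta$. The projected components vary in a set of intrinsic dimension $d$ by \eqref{eq.d_def}; a union bound over $k=1,\dots,K$ makes their combined $\varepsilon$-log covering number at most $\lesssim d\log(1/\varepsilon)+\log K$, which integrates through Dudley to the $\sqrt{d(1+\log K)/n}$ contribution. For the offset I invoke the spectrally normalised covering-number bound of \cite{bartlett2017spectrally}: for networks whose product of spectral norms is bounded by $S$ and whose $\|\cdot\|_{L,2,1}$-offset from the reference parameter is at most $\delta$, the empirical $L^2$ log covering number at scale $\varepsilon$ is of order $S^2L^2\delta^2 h_0 \log(\Bar{h})/\varepsilon^2$, where $h_0$ enters through $\sup_{x\in\mX}\|x\|_2 \leq \sqrt{h_0}$. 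Plugging into Dudley's integral, truncated at scale $\asymp 1/n$ (producing the $\log n$ factor), yields the $SL\delta\log^{1/2}(\Bar{h})\log n/\sqrt{n}$ contribution, and summing the two pieces gives the stated rate with $C_{\mu,h_0,B}$ absorbing the dimensional constants.

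The hard part is that the Bartlett--Foster--Telgarsky covering result must be applied \emph{centred} at the reference parameter $\Pi_{\mu_k}(\mA)$, so that the diameter driving the entropy is the $\delta$-sized $\|\cdot\|_{L,2,1}$-offset rather than the full norm of $\mA$. This requires a layerwise perturbation analysis of the network map that is multiplicative in the spectral norms (which stay bounded by $S$ throughout $\mfB_{K,\delta}$) and additive in the offset, and this is precisely why the neighbourhoods in Definition \ref{def:neighbour} are defined in the $\|\cdot\|_{L,2,1}$-norm that matches the metric used in \cite{bartlett2017spectrally}. Once this centred covering bound is in place, the projection onto $\mu_k$, the Dudley chaining, the Lipschitz contraction of $\ell$, and the union bound over the $K$ minima that contributes only $\log K$ inside the square root are comparatively routine, and combining with the stagnation probability $p^*$ from Theorem \ref{thm:stay_prob_regularized} completes the proof.
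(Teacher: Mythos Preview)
Your proposal is correct and follows essentially the same route as the paper: condition on stagnation to reduce to uniform convergence over $\mfB_{K,\delta}$, apply symmetrisation and Lipschitz contraction, then bound the Rademacher complexity via Dudley's integral using a covering number that splits into a $d$-dimensional piece from the minima $\mu_k$ (plus $\log K$ from the union) and a spectrally-normalised piece from the Bartlett--Foster--Telgarsky argument centred at reference parameters in $\mu_k$. The paper packages the covering step as Proposition~\ref{prop:covering} (built on Lemma~\ref{lem:covering_neighbour}), but the decomposition and the use of \cite{bartlett2017spectrally} with the $\|\cdot\|_{L,2,1}$-offset are exactly as you describe; one small note is that the concentration step relies only on boundedness of $\ell$ (McDiarmid), not on Assumption~\ref{asmp:sample_gradient_noise}.
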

Observe that the bound holds with at least probability $p^*-\varepsilon$, where $p^*$ is the lower bound on the stagnation probability in Theorem \ref{thm:stay_prob_regularized}. In terms of network quantities, the number of network parameters $D$ does not affect the derived bound on the generalization gap directly. Instead, the depth $L$ and the product of spectral norms $S$ appear in the generalization gap, similar as in \cite{bartlett2017spectrally} without stagnation. In practice, $L$ is typically in the range up to a few hundred and can be viewed as bounded.
$S$ is not affected by the number of parameters and does not necessarily increase even with large models. It is also of interest to notice that the increase in the number of minima neighborhoods $K$ does not have a significant effect on the bound. This implies that if $\mfB_{K,\delta}$ is constituted from a larger number of neighborhood sets $\mB_\delta(\mu_k)$, the generalization gap increases moderately.

The number of network parameters $D$ does not appear explicitly in the bound, but affects it implicitly through the various parameters that can depend on $D$. Below we provide an example that makes the $D$ dependence more explicit.\\

\subsubsection{Example of Neural Network} \label{sec:example}
Applying Theorem \ref{thm:bound_gen} to a shallow ReLU network with most of the parameters concentrated in the first layer, we get more insights into the dependence on the number of network parameters $D.$

Consider the regression problem $\mY =\R$ with quadratic loss $\ell (y,y') = (y-y')^2.$ 
The neural networks have $L=2$ layers and $h$ units in both the input layer and the hidden layer, that is, $h_0=h_1=h$. Hence the total number of parameters is $D=h^2+h$.
Moreover, we consider ReLU activation function $\sigma(z) = \max\{z,0\},$ parameter space $\Theta = [- \Bar{c}, \Bar{c}]^D \subset \R^D$ for some $\Bar{c} \geq 1$, design distribution $X \sim N(0, I_h)$ and output $Y=f^*(X)= A_2^* \sigma (A_1^* X)$, where $\mA^* = (A_1^*, A_2^*)$ with $A_2^* = (1/\sqrt{h},...,1/\sqrt{h}) \in \R^{h}$ and $A_1^* = I \in \R^{h \times h}$ denote the true weight matrices.

Based on the previous theorem, we can bound the generalization gap in this setting. 
\begin{proposition} \label{prop:gap_example}
    Suppose that the same setup as in Theorem \ref{thm:stay_prob_regularized} holds for $\eta_t, \delta,$ and $\underline{t}$. Let $\varepsilon \in (0,1).$ Conditionally on the event that the Gaussian SGD stagnates in $\mB_\delta(\mA^*)$, the following inequality holds with probability at least $1-\varepsilon$:
    \begin{align*}
        &R(\hat{\mA}) - R_n(\hat{\mA})\\
        &\leq C_{\mu,B}  (\delta + \delta^3)\sqrt{\frac{h\log h}n}\log n+ 3B\sqrt{ \frac{\log (1/\varepsilon)}{2n}}.
    \end{align*}
\end{proposition}
The generalization gap is primarily described by the width parameter $h$ and the radius $\delta$.
If $\delta$ is large, the probability for stagnation is large and thus the inequality will hold with high probability. 
On the contrary, for small $\delta$, the upper bound becomes tighter but the probability that the inequality holds decreases. Since $h=O(\sqrt{D}),$ the $\sqrt{h}$ term in the bound is of the order $O(D^{1/4})$.

\subsection{Relation between Gaussian SGD and Minibatch SGD} \label{sec:minibatch_SGD}

We firstly discuss minibatch SGD, which is defined as follows. For each $t$, we randomly pick a minibatch of $m$ observations $(Y^{t}_j, X^t_j)_{j=1}^m$ from the full sample $\{(X_i,Y_i)\}_{i=1}^n$ and set $\hat{R}_n^t(\mA) := m^{-1} \sum_{j=1}^m \ell(Y_j^t, F_\mA(X_j^t))$.
The minibatch SGD algorithm generates the sequence $\{\tilde{\mA}_t\}_t$ by the following updating equation
\begin{align}
    \tilde{\mA}_{t+1} = \tilde{\mA}_t - \eta_t\nabla \hat{R}^t_n(\tilde{\mA}). \label{def:original_SGD}
\end{align}
For the filtration $\{\mF_t\}_{t \geq 0}$ introduced before, \textit{batch gradient noise} is defined as the $\mA$-dependent $\mF_{t+1}$-measurable random vector $W_{t+1}(\mA) = \sqrt{m}(\nabla R_n(\mA) -  \nabla \hat{R}_n^t(\mA) ).$ This random variable measures the effect of the subsampling on the gradient and allows to rewrite \eqref{def:original_SGD} as
\begin{align}
    \tilde{\mA}_{t+1} = \tilde{\mA}_t - \eta_t \nabla R_{n}({\mA_t}) + \frac{\eta_t}{\sqrt{m}} W_{t+1}(\tilde{\mA}_{t}). \label{def:simple_SGD}
\end{align}
We note that $\Ep[W_{t+1}(\mA)\mid \mF_t] = 0$ holds for all $ \mA \in \Theta$ and conditional on the observations, $W_{t+1}(\mA)$ is an $\mA$-dependent $\R^D$-valued random variable with zero mean and finite variance.

For large batch-size $m,$ mini-batch SGD and Gaussian SGD as defined in \eqref{def:reguralized_SGD} behave very similar. Indeed for large $m$, $W_{t+1}(\mA)$ follows asymptotically a Gaussian distribution by the conditional multiplier central limit theorem. More precisely, for given training dataset and the minibatch sampling regarded as independent multipliers, $W_{t+1}(\mA)$ weakly converges to a Gaussian law almost surely (e.g. Lemma 2.9.5 in \cite{van1996weak}). Empirically, several studies \cite{panigrahi2019non,csimcsekli2019tail,xie2020diffusion} investigate the tail behavior of minibatch SGD, and some of them report that gradient noise has Gaussian-like tail probabilities.

We suspect that the lower bound on the stagnation probability in Theorem \ref{thm:stay_prob_regularized} also holds for mini-batch SGD. 
This is because if the noise $W_{t+1}(\mA)$ of minibatch SGD and the noise $U_{t+1}(\mA)$ of the Gaussian SGD are sufficiently close, e.g. in the sense of the chi-square divergence, then by a Girsanov-type change of measure transformation \cite{girsanov1960transforming} one can show that the parameter updates are nearly the same.

As a comparison, we state the information theoretic bound in \cite{negrea2019information} for the generalization gap induced by SGLD. Considering mini-batch risk $\Tilde{R}_S(\mA) := $ $|S|^{-1} \sum_{(Y,X) \in S} \ell (Y, f(X;\mA))$ with randomly sampled data subset $S \subset \mD_n,$ the sequence of parameters learned by the SGLD process $\{{\mA}^{\mathrm{SGLD}}_t\}_{t=1}^T$ is given by
\begin{align*}
    {\mA}^{\mathrm{SGLD}}_{t+1} = {\mA}^{\mathrm{SGLD}}_t - \eta_t \nabla \tilde{R}_S ({\mA}^{\mathrm{SGLD}}_t) + \sqrt{\frac{2 \eta_t}{\beta_t}} E_t,
\end{align*}
where $\eta_t > 0$ is the learning rate, $\beta_t > 0$ is an inverse temperature parameter,  and $E_t$ is an independently drawn standard normal Gaussian random vector. For this scheme, \cite{negrea2019information} proves the inequality
\begin{align*}    \Ep\left[R_n({\mA}^{\mathrm{SGLD}}_T) - R({\mA}^{\mathrm{SGLD}}_T)\right] \leq C_+ \Ep \left[   \sqrt{\frac{\sum_{t=1}^T \beta_t \eta_t \mathrm{Trace} ( \hat{\Sigma}_t )}{n}} \right].
\end{align*}
This bound on the generalization gap decodes the gradient information through the gradient covariance matrix $\hat{\Sigma}_t := \Ep[ \Var(\nabla \Tilde{R}_S(\mA^{\mathrm{SGLD}}_t))].$ The inequality is structurally different from our approach based on stagnation sets and local uniform convergence and it remains unclear how it can be linked to the geometry of the local minima or the depth and width parameters in the network architecture.

\subsection{Application to Optimization Error Bound} \label{sec:optimization_bound}
We now apply the stagnation results obtained above to evaluate the generalization gap using Gaussian SGD with respect to the number of iterations. We pick a local minimum $\mu \in \{\mu_1,...,\mu_K\}$ and define the minimum value of the expected loss $R(\mA)$ in $\mB_\delta(\mu)$ as $R_* := \min_{\mA \in \mB_\delta(\mu)} R(\mA)$. Assuming that for fixed $\mu$ stagnation in $\mB_\delta(\mu)$ occurs, we obtain a new bound on the generalization gap.

Let $\Ep_{\mE}[\cdot]$ be the expectation conditionally on the event $\{\mA_t \in \mB_\delta(\mu), t \in [\Bar{t}, T]\}$. Recall that $m$ is as defined in \eqref{def:reguralized_SGD}.
\begin{theorem}[Optimization Error Bound] \label{thm:bound_gen_var}
    Suppose Assumptions \ref{asmp:essential_convex}, \ref{asmp:local_smoothness}, and \ref{asmp:sample_gradient_noise} hold and the learning rate is $\eta_t = c t^{-1}$ for some positive constant $c$. Then, conditionally on the event of stagnating in $\mB_\delta(\mu)$ with given $\mu$, we obtain
    \begin{align*}
        &\Ep_{\mE}\left[R(\mA_T) - R_*\right] \leq \frac{C_{\xi, \mu, R, \nabla R} \delta^2 (1 + m^{-1} + n^{-1/2})}{T}.
    \end{align*}
\end{theorem}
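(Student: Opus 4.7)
My plan is to treat Theorem \ref{thm:bound_gen_var} as a standard descent-lemma SGD analysis adapted to the stagnation set, taking advantage of the fact that on the event $\mE$ every iterate satisfies $\|\mA_t - \mA_t^*\|_{L,2,1} \leq \delta$, where $\mA_t^* := \Pi_\mu(\mA_t)$. I would use the potential $D_t := \|\mA_t - \mA_t^*\|_F^2$, and begin with the observation that $\mA_{t+1}^*$ is by definition a closer point in $\mu$ to $\mA_{t+1}$ than $\mA_t^*$, so that $D_{t+1} \leq \|\mA_{t+1} - \mA_t^*\|_F^2$. Plugging in the Gaussian SGD update \eqref{def:reguralized_SGD}, using $\Ep[U_{t+1}(\mA_t)\mid \mF_t]=0$ and $\Cov(U_{t+1}\mid\mF_t)=G(\mA_t)$, I obtain the one-step bound
\begin{equation*}
\Ep[D_{t+1}\mid \mF_t] \,\leq\, D_t - 2\eta_t \big\langle \nabla R_n(\mA_t), \mA_t - \mA_t^*\big\rangle + \eta_t^2\big(\|\nabla R_n(\mA_t)\|_F^2 + m^{-1}\,\mathrm{tr}(G(\mA_t))\big).
\end{equation*}

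The central step is to translate the inner product into the optimization gap $g_t := R(\mA_t) - R_*$. Splitting $\nabla R_n = \nabla R + (\nabla R_n - \nabla R)$, Assumption \ref{asmp:essential_convex} applied on the segment $[\mA_t^*,\mA_t]$ (both share the same projection $\mA_t^*$ on $\mu$) together with differentiability at $\mA_t$ yields the first-order inequality $\langle \nabla R(\mA_t), \mA_t - \mA_t^*\rangle \geq g_t$. The residual term is controlled by Cauchy--Schwarz, $\|\mA_t - \mA_t^*\|_F \leq \delta$, and the sub-Gaussian gradient assumption \ref{asmp:sample_gradient_noise}, which via a standard concentration/uniform-convergence argument (as in \cite{mei2018landscape}) gives $\Ep\|\nabla R_n(\mA_t) - \nabla R(\mA_t)\|_F \lesssim \xi n^{-1/2}$. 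The gradient norm $\|\nabla R(\mA_t)\|_F \leq C_{\nabla R}\delta$ follows from Assumption \ref{asmp:local_smoothness} and $\nabla R(\mA_t^*)=0$, while $\mathrm{tr}(G(\mA_t)) \leq c_G D$ by the eigenvalue bound on $G$. Substituting all these into the recursion gives
\begin{equation*}
\Ep_\mE[D_{t+1}\mid\mF_t] \,\leq\, D_t - 2\eta_t\,\Ep_\mE[g_t\mid\mF_t] + C\eta_t\,\delta\, n^{-1/2} + C\eta_t^2\big(\delta^2 + D/m\big),
\end{equation*}
which after summing from $\Bar{t}$ to $T-1$ telescopes, using $D_{\Bar{t}}\leq \delta^2$ and $\sum_{t\leq T}\eta_t^2\lesssim 1$, $\sum_{t\leq T}\eta_t\lesssim \log T$ for $\eta_t=c/t$, to a bound of the form $\sum_t \eta_t\,\Ep_\mE[g_t] \lesssim \delta^2(1 + m^{-1} + n^{-1/2}\log T)$.

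To convert this weighted sum into a last-iterate bound of order $\delta^2/T$ I would then invoke local smoothness in the form of the quadratic majorisation $g_t \leq (C_{\nabla R}/2) D_t$ and rerun the analysis with the potential $\varphi_t := t\, D_t$. Under the learning rate $\eta_t = c/t$, a careful choice of the constant $c$ (depending on $C_{\nabla R}$) together with essential convexity in the orthogonal-to-$\mu$ direction — which is what the set-up actually provides, since $\mu$ consists of local minima and $R$ is $C^1$ with Lipschitz gradient there — yields the strong-convexity-like recursion $\Ep_\mE[D_{t+1}]\leq (1-c'\eta_t)\,\Ep_\mE[D_t] + C\eta_t^2(\delta^2+D/m) + C\eta_t\delta/\sqrt{n}$, whence $\Ep_\mE[D_T] \lesssim \delta^2(1+m^{-1}+n^{-1/2})/T$ and the conclusion follows from $g_T \leq (C_{\nabla R}/2)D_T$. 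The main obstacle is precisely this final step: essential convexity alone (as opposed to essential \emph{strong} convexity) does not immediately give a one-step contraction, so one has to exploit that the projection decomposition isolates the orthogonal direction along which $R$ does grow quadratically near $\mu$, and ensure that the Gaussian noise does not push the iterates into directions tangent to $\mu$ in a way that spoils the contraction — the absolute continuity of the Gaussian update combined with the lower eigenvalue bound $c_G'>0$ on $G$ is what ultimately saves the argument.
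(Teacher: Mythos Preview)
Your overall strategy matches the paper's skeleton: same potential $D_t=\|\mA_t-\Pi_\mu(\mA_t)\|_F^2$, same projection inequality $D_{t+1}\le\|\mA_{t+1}-\Pi_\mu(\mA_t)\|_F^2$, same expansion of the Gaussian SGD update, and the same splitting $\nabla R_n=\nabla R+(\nabla R_n-\nabla R)$ with the residual controlled uniformly on $\mB_\delta(\mu)$ via Assumption~\ref{asmp:sample_gradient_noise}. The divergence is in how the cross term $-2\eta_t\langle\nabla R(\mA_t),\mA_t-\mA_t^*\rangle$ is turned into a contraction. You use essential convexity to lower-bound the inner product by $g_t$, then (correctly) observe that this only delivers an averaged-iterate bound, and finally try to manufacture a one-step contraction by appealing to a ``strong convexity in the orthogonal-to-$\mu$ direction''. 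That last move is the genuine gap you yourself flag: nothing in Assumptions~\ref{asmp:essential_convex}--\ref{asmp:sample_gradient_noise} yields quadratic growth of $R$ away from $\mu$, and the absolute-continuity-of-Gaussians remark does not produce it either.

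The paper does not pass through $g_t$ at this point. It invokes the smoothness-based Lemma~\ref{lem:smooth_convex_lower} (together with Assumption~\ref{asmp:local_smoothness}) to replace the cross term directly by a negative multiple of $D_t$, arriving at a recursion of the form
\[
\Ep_\mE[D_{t+1}]\;\le\;(1-3\eta_t C_{\nabla R}L^{1/2}\bar h)\,\Ep_\mE[D_t]+\eta_t^2\big(C_{\nabla R}^2L\bar h^2\delta^2+m^{-1}L\bar h^2 c_G+\Xi_n^2\big).
\]
With the specific choice $\eta_t=(2t+1)/\big(2C_{\nabla R}L^{1/2}\bar h(t+1)^2\big)\asymp c/t$ the contraction factor becomes $t^2/(t+1)^2$, so that $\Delta_t:=t^2\Ep_\mE[D_t]$ satisfies $\Delta_{t+1}\le\Delta_t+O(1)$ and a plain telescope gives $\Ep_\mE[D_T]=O(T^{-1})$. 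The final passage to $\Ep_\mE[R(\mA_T)-R_*]$ is by Lipschitz continuity of $R$, not by the quadratic majorisation $g_T\le(C_{\nabla R}/2)D_T$ you propose. Two smaller discrepancies: the uniform gradient bound actually used (Lemma~\ref{lem:gradient_convergence_entropy}) scales like $n^{-1/4}$, and the $n^{-1/2}$ in the theorem appears because this quantity enters the recursion squared; and the paper never needs to convert a bound on $\sum_t\eta_t g_t$ into a last-iterate statement, since it works with a contraction on $D_t$ from the outset rather than a descent inequality for $g_t$.
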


We find that the error linearly converges in $1/T$, which is the optimal rate for SGD under the Polyak-\L ojasiewicz condition \cite{karimi2016linear}. 
Moreover, the bound increases with the radius of the neighborhoods $\delta$. 
Comparing with Theorem \ref{thm:bound_gen}, there is no dependence on the product of spectral norms $S$. While the generalization gap in Theorem \ref{thm:bound_gen} is derived by uniform convergence on $\mfB_{K,\delta},$ the derivation for Theorem \ref{thm:bound_gen_var} depends on the gradient of the loss function associated with the deep neural networks.

The value $R_*$ is by definition the optimal attainable risk (oracle risk) over all network parameter choices in $\mB_\delta(\mu).$ The size of this quantity depends on the approximation capabilities of the network, which we do not study in this work. As completely different techniques are used, separating expressive power and generalization error is common in the theoretical neural networks literature.

\subsection{Proof Outline}

We provide an overview of the proofs for Theorem \ref{thm:stay_prob_regularized} and Theorem \ref{thm:bound_gen}. 
Full proofs are given in a later section.

\subsubsection{Stagnation Probability (Theorem \ref{thm:stay_prob_regularized})}

As in \eqref{ineq:decomp_stag_prob}, we decompose the stagnation probability into the reaching probability $\Pr(\mA_{\overline{t}} \in \mfB_{K,\delta}),$ and the non-escaping probability $\Pr(\mA_t \in \mfB_{K,\delta}, \forall t \in  [\overline{t} +1 , T] \mid \mA_{\overline{t}} \in \mfB_{K,\delta})$.
For both terms, we derive a lower bound.

\textbf{(i) Reaching probability}:
We first derive a recursive formula for the ratio
\begin{align*}
    r_t :=  \frac{\Pr(\mA_{{t}} \in \mfB_{K,\delta})}{\Pr(\mA_{{t}} \notin \mfB_{K,\delta})}.
\end{align*}
Denoting the conditional probability for transitions by $p_t := \Pr(\mA_{{t}} \in \mfB_{K,\delta} \mid \mA_{{t}-1} \notin \mfB_{K,\delta})$ and $q_t := \Pr(\mA_{{t}} \notin \mfB_{K,\delta} \mid \mA_{{t}-1} \in \mfB_{K,\delta})$, it follows that
\begin{align}
    r_t 
    &= \frac{(1-q_t) r_{t-1} + p_t}{q_t r_{t-1} + 1-p_t} =: F(r_{t-1} ; p_t,q_t). \notag
\end{align}
Using properties of the map $F (\cdot; p_t, q_t)$, we can deduce that $r_{\Bar{t}}$ is larger than $r_{\underline{t}}^* - \varepsilon$ for a small $\varepsilon > 0$ and $r_{\underline{t}}^*$ a fixed point $r_{\underline{t}}^* = F(r_{\underline{t}}^*; p_{\underline{t}}, q_{\underline{t}})$. This is the key argument that allows us to finally obtain the lower bound
    \begin{align*}
        \Pr(\mA_{\bar{t}} \in \mfB_{K,\delta}) = \frac{r_{\Bar{t}}}{1 + r_{\Bar{t}}}  \geq 1 - \frac{\exp(  - C_{m,L,\Bar{h},D, \delta, G, \Theta} \bar{t}^{2\alpha})}{C_{m,D,G}\lambda (\mfB_{K,\delta})},
    \end{align*}
for sufficiently large $\overline{t}$.

\textbf{(ii) Non-escaping probability}:
To derive a lower bound on $\Pr(\mA_t \in \mfB_{K,\delta}, \forall t \in  [\overline{t} +1 , T] \mid \mA_{\overline{t}} \in \mfB_{K,\delta})$, we decompose the expression into a step-wise non-escaping probability. Picking one local minima $\mu \in \{\mu_1,...,\mu_K\}$, we find 
\begin{align}
    &\Pr \big( \mA_t \in \mB_\delta(\mu) , \forall t \in \{\tau+1,...,T\} \mid \mA_{\tau }\in \mB_\delta(\mu)\big) \notag \\
    &= \prod_{t = \tau + 1}^T \Pr \big( \mA_t \in \mB_\delta(\mu) \mid \mA_{t-1} \in \mB_\delta(\mu) \big). \label{eq:outline_stay}
\end{align}
We now evaluate the step-wise conditional staying probability $\Pr \left( \mA_t \in \mB_\delta(\mu) \mid \mA_{t-1} \in \mB_\delta(\mu) \right).$ To this end, we define the updated parameter without gradient noise as $\mA_{t}^- := \mA_{t-1} - \eta_{t - 1} \nabla R_n(\mA_{t - 1})$ and observe that $\mA_t = \mA_{t}^- + \frac{\eta_{t-1}}{\sqrt{m}} U_{t}(\mA_{t-1})$ holds, implying
\begin{align}
    &\Pr \left( \mA_t \in \mB_\delta(\mu) \mid \mA_{t-1} \in \mB_\delta(\mu) \right) \notag \\
    &\geq \Pr \left( \mA_t \in \mB_\delta(\mu) \mid \mA_{t}^- \in \mB_\delta(\mu) \right)  \notag \\
    & \quad \times \Pr \left( \mA_t^- \in \mB_\delta(\mu) \mid \mA_{t-1} \in \mB_\delta(\mu) \right). \label{eq:outline_stay_step}
\end{align}
Lemma \ref{lem:local_attraction} and Assumption \ref{asmp:essential_convex} yield the lower bound $\Pr \left( \mA_t^- \in \mB_\delta(\mu) \mid \mA_{t-1} \in \mB_\delta(\mu) \right) \geq 1- \varepsilon$ with sufficiently small $\varepsilon > 0$.
Then, for $\mA_t^- \in \mB_\delta(\mu)$, we obtain
\begin{align*}
    &\Pr \left( \mA_t \in \mB_\delta(\mu) \mid \mA_{t}^- \in \mB_\delta(\mu) \right) \\
    & \geq 1 - \Pr \left(\mA_{t}^- + m^{-1/2}{\eta_{t-1}}W_{t}(\mA_{t-1}) \notin \mB_\delta(\mu)\mid \mA_{t}^- \in \mB_\delta(\mu) \right).
\end{align*}
Since $m^{-1/2}{\eta_{t-1}}W_{t}(\mA_{t-1})$ is Gaussian, we can control its tail behavior.
Combining this result with \eqref{eq:outline_stay_step} and \eqref{eq:outline_stay} yields the lower bound for the non-escaping probability.

\textbf{(iii) Combining the result}: 
Combining the previous results yields Theorem \ref{thm:stay_prob_regularized}.

\subsubsection{Generalization Gap Bound (Theorem \ref{thm:bound_gen})}
This upper bound is based on uniform convergence over the union of neighborhoods around local minima $\mfB_{K,\delta}$. Suppose that the event $\hat{\mA} \in \mfB_{K,\delta}$ holds, whose probability is guaranteed to be no less than $p^*$ by Theorem \ref{thm:stay_prob_regularized}.
For a function class $\mF$, we consider the standard definition of the empirical Rademacher complexity $\fR_n(\mF) := \Ep_{u_{1:n} | X_{1:n}}\left[\sup_{f \in \mF} \frac{1}{n} \sum_{i=1}^n u_i f(X_i) \right]$ with independent Rademacher variables $u_i, i=1,...,n$, that is, both $u_i = 1$ and $u_i=-1$ have probability $1/2.$
Then, for sufficiently small $\varepsilon > 0$, we obtain
\begin{align*}
    &R( \hat{\mA}) - R_n(\hat{\mA}) \\
    & \leq \sup_{\mA \in \mfB_{K,\delta}} R(\mA) - R_n(\mA) \\
    &\leq 2 \fR_n\big( \{\ell (\cdot, f(\cdot; \mA)) \mid \mA \in \mfB_{K,\delta}\} \big) + 3B\sqrt{ \frac{\log (1/\varepsilon)}{2n}}, 
\end{align*}
with probability at least $p^*-\varepsilon$.
To bound the Rademacher complexity term, we evaluate the covering number of the function class $\mF(\mfB_{K,\delta}\mid X_{1:n}) := \{\{f(X_i;\mA)\}_{i=1}^n \mid \mA \in \mfB_{K,\delta}\}$ with respect to the distance induced by the norm $\|\cdot\|_F$.
In Proposition \ref{prop:covering}, with $\varepsilon > 0$, we obtain
\begin{align}
    &\log\mN\big(\varepsilon, \mF(\mfB_{K,\delta}\mid X_{1:n}),\|\cdot\|_F \big) \notag \\
    &\leq C_\mu d \log\Big(\frac K{\varepsilon}\Big)  + \frac{ n h_0 S^2 L^2 \delta^2 \log (2 \Bar{h}^2) }{\varepsilon^2}. \label{ineq:outline_covering}
\end{align}
Since the number of parameters $D$ does not appear in this covering inequality, we obtain the claim of Theorem \ref{thm:bound_gen}.

The bound on the covering number is based on two key ingredients. Firstly, we apply the spectrum-based recursive covering developed by \cite{bartlett2017spectrally}. This technique evaluates the covering of neural network functions using the spectral norm of the parameter matrix $A_\ell$ in the $\ell$-th layer. More precisely, let $\varepsilon_a, \varepsilon_x > 0$ be fixed values, $\check{X}_{\ell - 1}, \check{X}_{\ell - 1}' \in \R^{h_{\ell -1}}$ be two possible outputs of the $(\ell-1)$-st layer satisfying $\|\check{X}_{\ell - 1} - \check{X}_{\ell - 1}'\|_2 \leq \varepsilon_x$, and $A_\ell, A_\ell' \in \R^{h_{\ell - 1} \times h_\ell}$ be parameter matrices for the $\ell$-th layer such that $\|A_\ell - A_\ell'\|_F \leq \varepsilon_A$.
Then, the discrepancy between two outputs of the $\ell$-th layer is bounded by
\begin{align*}
    \big\|\sigma (A_{\ell} \check{X}_{\ell - 1}) - \sigma (A_{\ell}' \check{X}_{\ell - 1}') \big\|_2 &\leq \|A_{\ell} \check{X}_{\ell - 1} - A_{\ell}' \check{X}_{\ell - 1}'\|_2 \\
    &\leq \|A_j\|_s \|\check{X}_{\ell - 1} - \check{X}_{\ell - 1}'\|_2 + \varepsilon_a \\
    &\leq \|A_j\|_s \varepsilon_x + \varepsilon_a,
\end{align*}
whenever $\sigma$ is a $1$-Lipschitz continuous activation function. Applying this inequality, we can cover the output space of the $\ell$-th layer recursively, using the covering of the output space of the previous layer and a covering of the weight matrices in the $\ell$-th hidden layer. Working inductively layer by layer, we obtain the final bound on the covering number of deep neural networks.

We also need a covering bound for the union of neighborhoods of local minima $\mfB_{K,\delta}$. For that we apply the matrix covering bound in \cite{zhang2002covering}. Since the ball $\mB_\delta(\{\mA\})$ is taken with respect to the distance induced by the norm $\|\cdot\|_{L,2,1}$, we can derive a covering bound which does not depend on the number of network parameters.
We extend this bound to neighbourhoods of a local minimum $\mu,$ proving in Lemma \ref{lem:covering_neighbour} that
\begin{align*}
    \log \mN \big(\varepsilon, \mB_\delta(\mu), \|\cdot\|_F\big) \leq C_\mu d \log \Big(\frac 1{\varepsilon}\Big) + \frac{ L \delta^2 \Bar{h}^2}{\varepsilon^2} \log \big(2 \Bar{h}^2\big).
\end{align*}
Both covering bounds combined yield the covering inequality \eqref{ineq:outline_covering}. Together with Dudley's theorem the proof of Theorem \ref{thm:bound_gen} can be completed.

\subsubsection{Optimization Error Bound (Theorem \ref{thm:bound_gen_var})}

The proof of Theorem \ref{thm:bound_gen_var} is obtained by a combination of the derived covering bound in Lemma \ref{lem:covering_neighbour} and the convergence analysis under the Polyak-\L ojasiewicz condition (e.g. Theorem 1 in \cite{karimi2016linear}). While in the seminal convergence analysis by \cite{mei2018landscape}, the uniform convergence error of the empirical loss landscape depends on the number of parameters $D$, we consider convergence of the local loss surface within $\mfB_{K,\delta}$. Because the entropy bound for the set $\mfB_{K,\delta}$ does not depend directly on $D,$ we obtain the error bound in Theorem \ref{thm:bound_gen_var}.

\section{Simulation Study}

We study empirically the effect of the number of parameters $D$, the radius $\delta$ of the neighborhood around the local minima, and the stagnation probability $p^*$ on the generalization gap in the neural network example in Section \ref{sec:example}. 

For each run, we generate $n=100$ independent training samples with true parameter matrices $A_1^* = I_h \in \R^{h \times h},$ $A_2^* = (1,1,...,1)^\top \in \R^{1\times h}$ and widths $h \in \{5,6,...,29,30\}$. The number of parameters $D=h+h^2$ lies therefore in the interval $[30, 930]$. Training is done using Gaussian SGD \eqref{def:reguralized_SGD} with multivariate standard normal distributed noise variables $U_{t+1}(\mA_t)$ and hyper-parameters $\eta_t = 0.01$, $T=100$, and $m \in \{10,20,...,100\}$. 
The generalization error $R(\hat{\mA})$ is estimated by averaging over $10,000$ independently generated samples. Moreover, we compute the empirical distribution of the generalization gap $R_n(\hat{\mA}) - R(\hat{\mA})$ by repeating each simulation $200$ times.
We regard the distribution by the repetition as an empirical approximation of the distribution of outputs of the Gaussian SGD, then we estimate the probability that the Gaussian SGD stagnates in the $\delta$-neighbourhood $\mB_\delta(\mu)$ for each $\delta \in (0,0.01)$ by the empirical approximation. 
\footnote{Code is available at \url{https://github.com/insou/pop_minima}.}

Figure \ref{fig:gap_D} plots the generalization gap for different choices of $\delta$ and $D.$ The displayed generalization gap is the mean value over the repetitions. The error bars visualize the standard deviations. The plot shows empirically that the generalization gap is primarily influenced by the radius $\delta.$ Consistent with our theory, smaller values of $\delta$ lead to more implicit regularization and therefore to a decrease in the generalization gap. On the contrary, the number of parameters $D$ only plays a minor role. Interestingly, for every $\delta$ there is a regime where the generalization gap increases with the number of parameters until some saturation is attained. Beyond this regime, the number of parameters appears to have hardly any influence on the generalization gap.

Figure \ref{fig:stag_prob_nn} shows how $\delta$ and the width $h$ (recall that $h=O(\sqrt{D})$) influence the stagnation probability. In line with the theoretical bounds, the stagnation probability increases as $\delta$ get larger. The plot shows moreover, that the stagnation probability can get close to one and that a larger network width $h$ decreases the stagnation probability.

\begin{figure}
\centering
    \includegraphics[width=0.99\hsize]{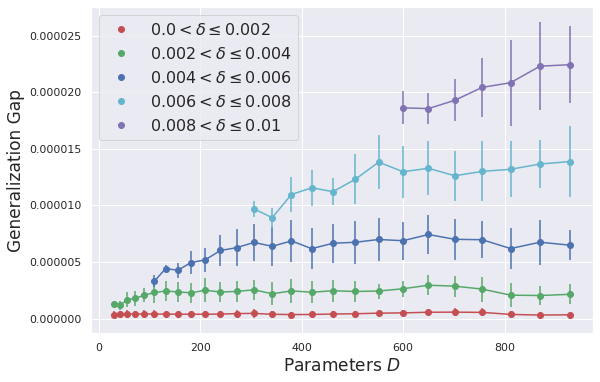}
    \caption{Dependence of the generalization gap on the number of parameters $D$ and $\delta$.}
    \label{fig:gap_D}
\end{figure}
\begin{figure}
    \centering
    \includegraphics[width=0.79\hsize]{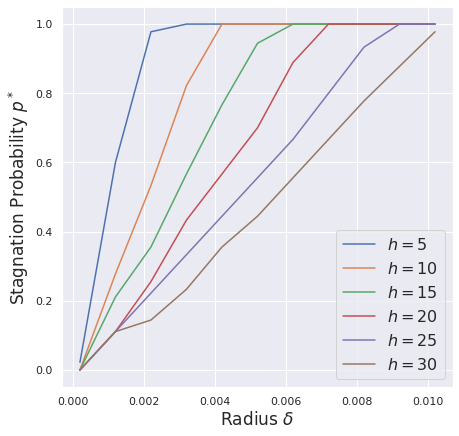}
    \caption{Stagnation probabilities in the population minima $\mB_\delta(\mu)$ for different choices of $\delta$ and $h$.
    } \label{fig:stag_prob_nn}
\end{figure}

\section{Conclusion}

This work is a first attempt to describe the generalization error of deep learning under implicit regularization induced by the local shape of the loss surface. Previous work on implicit regularization focuses on norm bounds of the weight matrices. Instead we argue that the shape of the local minima for complex loss surfaces constraint later Gaussian SGD iterations resulting in improved bounds for the generalization gap. Implicit regularization through the shape of local minima is in line with the two phases of global and local search in SGD updates, also called the \textit{two-regime} \cite{leclerc2020two}.
This is also reflected in the lower bound on the stagnation probability as it depends on the reaching~probability and the non-escaping~probability. The switch between reaching and non-escaping of local minima is driven by the decay of the learning rate. 

We view this work as a very first step to explore loss surfaced based implicit regularization and hope that the presented ideas spark further research into this direction. We want to stress that in many respects, the obtained results in this work are preliminary. While we argue that the imposed conditions are natural for deep neural networks and also occur in related work, verifying them is an extremely hard problem beyond the scope of this work. Also it is clear that much tighter lower bounds for the staying probability should be obtainable. Finally, while we obtain bounds that do not directly depend on the number of neural network parameters, it remains unclear to which extend this dependence reoccurs through other quantities in these bounds.

\section{Proof of Theorem \ref{thm:stay_prob_regularized}}

As already mentioned before, we need to lower bound the reaching probability and the non-escaping probability from $\mfB_{K,\delta}$.
Let $C_\Theta:=\sup_{\mA \in \Theta } \|\mA\|_{F}$. 
For a subset $\mB \subset \Theta$ and a parameter $\mA' \in \Theta$,  $\mB + \mA' := \{\mA + \mA' \in \Theta \mid \mA \in \mB\}$ denotes the shifted set.
We recall that if the updated parameter leaves $\Theta$, it is projected back onto $\Theta$.
However, since the proofs in this section do not depend on the location of parameters in $\Theta$ and $\mB_\delta(\mu)$ does by construction not overlap with the boundary $\Theta$, this projection does not affect the result.

\subsection{Reaching Probability}

We derive a lower bound for the reaching probability $\Pr(\mA_{\overline{t}} \in \mfB_{K,\delta})$ at time $\Bar{t}$ as displayed in \eqref{ineq:decomp_stag_prob}.

\begin{theorem}[Reaching Probability] \label{thm:reaching_prob_updated}
    Suppose Assumptions \ref{asmp:essential_convex}, \ref{asmp:local_smoothness}, and
    \ref{asmp:sample_gradient_noise} hold. 
    Consider a monotonically decreasing learning rate $\eta_t \asymp t^{-\alpha}$ with $\alpha > 0$ and let $\underline{t}$ be the first iteration $t$ such that  $t \geq C_{m, D,L,\Bar{h},G,\delta, \mfB}$ and $\eta_{t} \leq $ $\min\{C_{\mu, R, \nabla R} /({L} \Bar{h}^2 \delta^2) ,$ $ C_{\xi, \mu} n^{1/2}/( L^2 \delta^2 \Bar{h}^4 \log (\Bar{h}D))\}$.
    Suppose moreover that $n$ is sufficiently large such that  $n / \log n \geq (\eta_{\underline{t}} + 1)^{2(\beta_0 + 1)} C^2 \xi^2 d / (\beta \delta)^2$, and $\delta > C_{m,L,\Bar{h},D,G,\Theta}$ hold.
    Then, for any $\varepsilon > 0$ and any $\Bar{t}$ such that $\Bar{t} - \underline{t} \geq  C_*(- \Bar{t}^{2\alpha}C_{m,L,\Bar{h},D, \delta, G, \Theta} - \log (C_{m,D,G}\lambda (\mfB_{K,\delta})  )$,  we obtain
    \begin{align*}
        \Pr(\mA_{\bar{t}} \in \mfB_{K,\delta}) & \geq 1 - \frac {\exp\big(  - C_{m,L,\Bar{h},D, \delta, G, \Theta} \bar{t}^{2\alpha}\big)}{C_{m,D,G}\lambda (\mfB_{K,\delta})}.
    \end{align*}
\end{theorem}
\begin{proof}
Define the conditional probabilities $p_t := \Pr(\mA_{{t}} \in \mfB_{K,\delta} \mid \mA_{{t}-1} \notin \mfB_{K,\delta})$ and $q_t := \Pr(\mA_{{t}} \notin \mfB_{K,\delta} \mid \mA_{{t}-1} \in \mfB_{K,\delta}).$ 
By Lemma \ref{lem:upper_reach_prob} and Lemma \ref{lem:noise_escape}, a sufficient large $t$ such that $t \geq \max\{(C_{D,L, \Bar{h}, G} \delta^2 / \log 2), C'_{m,D,G}/ \log (2 \lambda(\mfB_{\delta,K} C_{m,G})) \}^{1/2\alpha}$ implies $p_t \leq 1/2$ and $q_t \leq 1/2$, hence we have $p_t + q_t \leq 1$.

These are the probabilities of entering or exiting the set $\mfB_{K,\delta}$. By the Markov property of the Gaussian SGD update, we have 
\begin{align}
    &\Pr(\mA_{{t}} \in \mfB_{K,\delta}) \notag \\
    &= (1-q_t)\Pr(\mA_{{t} -1} \in \mfB_{K,\delta}) + p_t \Pr(\mA_{{t} -1} \notin \mfB_{K,\delta}),  \label{eq:update_prob_in}
\end{align}
and 
\begin{align}
    &\Pr(\mA_{{t}} \notin \mfB_{K,\delta})\notag \\
    &= q_t\Pr(\mA_{{t} -1} \notin \mfB_{K,\delta}) + (1-p_t) \Pr(\mA_{{t} -1} \in \mfB_{K,\delta}). \label{eq:update_prob_out}
\end{align}
We now study the ratio
\begin{align}
    r_t :=  \frac{\Pr(\mA_{{t}} \in \mfB_{K,\delta})}{\Pr(\mA_{{t}} \notin \mfB_{K,\delta})} = \frac{\Pr(\mA_{{t}} \in \mfB_{K,\delta})}{1-\Pr(\mA_{{t}} \in \mfB_{K,\delta})} \in (0,\infty).
    \label{eq.123ratio}
\end{align}
Since the map $(0,1) \ni z \mapsto z/(1-z) \in (0,\infty)$ is bijective, there is an one-to-one correspondence between $r_t$ and $\Pr(\mA_{{t}} \in \mfB_{K,\delta})$.
Using \eqref{eq:update_prob_in} and \eqref{eq:update_prob_out}, $r_t$ can be rewritten into the recursive form
\begin{align}
    r_t &= \frac{(1-q_t) \Pr(\mA_{{t-1}} \in \mfB_{K,\delta}) + p_t (1 - \Pr(\mA_{{t-1}} \notin \mfB_{K,\delta}))}{q_t \Pr(\mA_{{t-1}} \in \mfB_{K,\delta}) + (1-p_t)(1 - \Pr(\mA_{{t-1}} \notin \mfB_{K,\delta}))} \notag \\
    &= \frac{(1-q_t) r_{t-1} + p_t}{q_t r_{t-1} + 1-p_t} =: F(r_{t-1} ; p_t,q_t). \notag 
\end{align}
Based on this recursion, we now derive a lower bound for $r_{\bar{t}}.$ By Lemma \ref{lem:r_lower_bound}, we obtain for $1 \leq \underline{t} < \Bar{t},$
\begin{align}
    r_{\bar{t}} \geq \min_{t  \in [\underline{t}, \overline{t}]} \frac{p_t}{q_t} - C (1 - \Bar{c})^{\Bar{t} - \underline{t} + 1}, \label{ineq:r_tbar}
\end{align}
for some $\Bar{c} \in (0,1)$.
Applying the bounds for $p_t$ and $q_t$ in Proposition \ref{prop:lower_reach_prob} and \eqref{ineq:lower_stay_prob}, it follows that
\begin{align}
     \frac{p_{{t}}}{q_{{t}}} &\geq \frac{ C_{m,D,G} \lambda (\mfB_{K,\delta})\exp(- C_{m,G,\Theta} {\eta}_t^{-2})}{ C_G \exp ( - C_{m,L,\Bar{h},D}\delta^2{\eta}_t^{-2})}\notag \\
     &= \exp\Big( t^{2\alpha} ( C_{m,L,\Bar{h},D} \delta^2 - C_{m,G,\Theta} )  \notag \\
     & \qquad \qquad + \log (C_{m,D,G}\lambda (\mfB_{K,\delta}))\Big), \label{ineq:ratio_lower}
\end{align}
for every $t \geq \underline{t}$. By the definition \eqref{eq.123ratio} of $r_t$, as well as \eqref{ineq:r_tbar} and \eqref{ineq:ratio_lower},
\begin{align*}
    \Pr(\mA_{\bar{t}} \in \mfB_{K,\delta}) &= \frac{r_{\Bar{t}}}{1 + r_{\Bar{t}}} \\
    & \geq 1 - \frac{1}{r_{\Bar{t}}} \\
    & \geq 1 - \left\{\exp(  \bar{t}^{2\alpha} ( C_{m,L,\Bar{h},D} \delta^2 - C_{m,G,\Theta} ) \right. \\
    & \qquad \left. + \log (C_{m,D,G}\lambda (\mfB_{K,\delta}))) - C (1- \Bar{c})^{\Bar{t} - \underline{t}}\right\}^{-1}
\end{align*}
The lower bound for $\Bar{t}-\underline{t}$, entails the inequality $2^{-1} \exp(  \bar{t}^{2\alpha} ( C_{m,L,\Bar{h},D} \delta^2 - C_{m,G,\Theta} ) + \log (C_{m,D,G}\lambda (\mfB_{K,\delta}))) \geq  C (1- \Bar{c})^{\Bar{t} - \underline{t}}$ proving
\begin{align*}
    &\Pr(\mA_{\bar{t}} \in \mfB_{K,\delta}) \\
    &\geq 1 - \frac{1}{2}\exp\Big( - \bar{t}^{2\alpha} ( C_{m,L,\Bar{h},D} \delta^2 - C_{m,G,\Theta} )  \\
    & \qquad \qquad \qquad - \log \big(C_{m,D,G}\lambda (\mfB_{K,\delta})\big)\Big).
\end{align*}
Using the condition $\delta^2 > C_{m,G,\Theta} / C_{m,L,\Bar{h},D}$ and adjusting the constants, we obtain the statement.
\end{proof}

\begin{lemma} \label{lem:r_lower_bound}
Suppose that there exists $\Bar{c} > 0$ such that $p_t, q_t \in [\Bar{c} , 1-\Bar{c}]$ and $p_t + 2 q_t \leq 1$ for all $t \in [t',T]$ with some $t' \geq 1$.
For a positive sequence $(r_t)_t$ defined as in \eqref{eq.123ratio}, we have
\begin{align*}
    r_{\bar{t}} \geq \min_{t  \in [\underline{t}, \overline{t}]} \frac{p_t}{q_t} - C (1 - \Bar{c})^{\Bar{t} - \underline{t} + 1},
\end{align*}
for some positive constant $C$.
\end{lemma}
\begin{proof}
We use that $F(\cdot; p_t, q_t)=((1-q_t)\cdot+p_t)/(q_t\cdot +1-p_t)$ is a contraction and show that $r_t$ gets close to a fixed point of $F(\cdot; p_t, q_t)$ as $t$ increases.

To prove this, we need several elementary properties of $F(\cdot; p_t, q_t).$ Since $\partial_{r'} F(r';p_t,q_t) = (1-p_t-q_t) / (1 - p_t + q_t r')^2$, $ F(r';p_t,q_t)$ is monotonically increasing in $r'$ whenever $p_t + q_t \leq 1$. Also, $F(r';p_t,q_t)$ is strictly concave in $r'$ since $\partial_{r'}^2 F(r';p_t,q_t) < 0$ holds. Moreover, we have $F(0; p_t,q_t) = p_t/(1-p_t) > 0$.

Using these properties, for each $t$, there exists a unique fixed point $r_t^*$ satisfying
\begin{align*}
    r_t^* = F(r_t^* ; p_t,q_t).
\end{align*}
Since $p_t/q_t$ is a fixed point, we must have $r_t^* = p_t/q_t$.
We show that $r_t^*$ is an attractor.
That is, for $r \in (0,\infty)$, we have
\begin{align}
    |F(r; p_t,q_t) - F(r_t^{*}; p_t,q_t)| &= \Big|1-\frac{q_t(1+r)}{1+q_t r-p_t}\Big|
     |r-r_t^*| \notag \\
     &\leq (1- \Bar{c}) |r-r_t^*|,\label{ineq:contraction_F}
\end{align}
where the last inequality follows from the fact that $q_t (1 + r) > \Bar{c}$ implies $1 - \frac{q_t(1+r)}{1+q_t r-p_t} \leq 1 - \Bar{c}$, and $q_t + (2- \Bar{c})p_t  < 2 - \Bar{c}$ implies $\frac{q_t(1+r)}{1+q_t r-p_t}  - 1\leq 1 - \Bar{c}$.
The first condition $q_t (1 + r) > \Bar{c}$ is implied by $q_t \geq \Bar{c}$, and the second condition $q_t + (2- \Bar{c})p_t  < 2 - \Bar{c}$ follows from $p_t + 2q_t \leq 1$.

Due to the concavity of $F(\cdot;p_t,q_t)$, we know that $ r \geq F(r; p_t,q_t) \geq r_t^*$ holds if and only if $r < r_{t}^*$, and $ r \leq F(r; p_t,q_t) \leq r_t^*$ holds if and only if $r > r_{t}^*$.

For any $t \in [\underline{t}, \Bar{t}]$, we derive now a lower bound for $r_{\bar{t}} - \min_{t} r_{{t}}^*$. We treat two cases, depending on whether 
the sequence $(r_t)_{t \in [\underline{t}, \overline{t}]}$ is increasing or not. If $(r_t)_{t \in [\underline{t}, \overline{t}]}$ is not an increasing sequence, there exists $t' \in [\underline{t}, \overline{t}-t]$ such that $r_{t'+1} \leq r_{t'}$. By the property of $F(\cdot; p_t, q_t)$ derived above, this implies $r_{t'} \geq r_{t'}^*$.
Because of $r_{t'}^* \geq \min_{t} r_{{t}}^*$, we know that for any $\tau \in [1, \Bar{t} - t']$ the value $r_{t' + \tau}$  cannot be larger than $\min_{t \in [\underline{t}, \overline{t}]} r_t^*$.
Thus $r_{\bar{t}} - \min_{t \in [\underline{t}, \overline{t}]} r_{{t}}^* \geq 0$.

We now consider the second case assuming that $(r_t)_{t \in [\underline{t}, \overline{t}]}$ is an increasing sequence, that is, $r_t\leq r_{t+1}$ for all $t \in [\underline{t}, \overline{t}-1]$.
Without loss of generality, we can moreover assume that $\max_{t \in [\underline{t}, \overline{t}]}r_{t} \leq \min_{t  \in [\underline{t}, \overline{t}]} r_{t}^*$ since
otherwise, by the monotone increase of $(r_t)_t$, $r_{\Bar{t}}=\max_{t \in [\underline{t}, \overline{t}]}r_{t} \geq \min_{t  \in [\underline{t}, \overline{t}]} r_{t}^* = \min_{t  \in [\underline{t}, \overline{t}]} p_t/q_t$ automatically implies the statement of Lemma \ref{lem:r_lower_bound}.
Observe that $z \mapsto (z - r_{t'}) / (z - r_{t'-1})$ is an increasing function due to $r_{t'} \geq r_{t'-1}.$ Together with \eqref{ineq:contraction_F} this gives
\begin{align*}
    \min_{t  \in [\underline{t}, \overline{t}]} r_{t}^* - r_{\bar{t}} &= \left(\min_{t  \in [\underline{t}, \overline{t}]} r_{t}^* - r_{\underline{t}}\right) \prod_{t' = \underline{t}+1}^{\Bar{t}} \frac{\min_{t  \in [\underline{t}, \overline{t}]} r_{t}^* - r_{t'}}{\min_{t  \in [\underline{t}, \overline{t}]} r_{t}^* - r_{t'-1}} \\
    & \leq \left(\min_{t  \in [\underline{t}, \overline{t}]} r_{t}^* - r_{\underline{t}}\right)  \prod_{t' = \underline{t}+1}^{\Bar{t}} \frac{ r_{t'}^* - r_{t'}}{ r_{t'}^* - r_{t'-1}} \\
    &\leq \left(\min_{t  \in [\underline{t}, \overline{t}]} r_{t}^* - r_{\underline{t}}\right) (1 - \Bar{c})^{\Bar{t} - \underline{t}}.
\end{align*}
Combining both cases, we obtain that
\begin{align*}
     r_{\bar{t}} &\geq \min_{t  \in [\underline{t}, \overline{t}]} r_{t}^* - \left(\min_{t  \in [\underline{t}, \overline{t}]} r_{t}^* - r_{\underline{t}}\right) (1 - \Bar{c})^{\Bar{t} - \underline{t}} \\
     & \geq \min_{t  \in [\underline{t}, \overline{t}]} r_{t}^* - \left(\min_{t  \in [\underline{t}, \overline{t}]} r_{t}^* \right) (1 - \Bar{c})^{\Bar{t} - \underline{t}} \\
     & \geq \min_{t  \in [\underline{t}, \overline{t}]} r_{t}^* - C_{\Bar{c}} (1 - \Bar{c})^{\Bar{t} - \underline{t} + 1},
\end{align*}
Together with the definition $r_t^*=p_t/q_t$, we obtain the result. 
\end{proof}

\begin{proposition} \label{prop:lower_reach_prob}
Consider Gaussian SGD as defined in Section \ref{sec.GSGD_intro}. For any $\mA_t \in \Theta$ and any learning rate $\eta_t \asymp t^{-2\alpha}$ with $\eta_t\leq 1$ for all $t$, we obtain
    \begin{align*}
        &\Pr( \mA_{t+1} \in \mfB_{K,\delta} \mid \mA_{t})\geq \lambda (\mfB_{K,\delta}) C_{ m, D, G} \exp \left( -C_{m,G,\Theta}t^{2\alpha} \right). 
    \end{align*}
\end{proposition}
\begin{proof}
We set $\Tilde{U}_{t+1} := m^{-1/2} {\eta_t}U_{t+1}(\mA_t) \sim \mN(0,\eta_t^2G(\mA_t)/m),$ 
and define the event $\mE_{t+1} := \{\Tilde{U}_{t+1} \in \mfB_{K,\delta} - \mA_t + \eta_t \nabla R_n(\mA_t) \}.$ By assumption, all eigenvalues of the covariance matrix $G(\mA_t)$ lie in the interval $[c_G',c_G]$ with $c_G'>0.$ Recall that $\lambda(\mB)$ denotes the Lebesgue measure of the set $\mB$ and that the Lebesgue measure is invariant under location shifts. Therefore
\begin{align*}
    &\Pr_{\Tilde{U}_{t+1}| \mA_t} ({\mE}_{t+1}) \\
    &= \int_{\mfB_{K,\delta} - \mA_t + \eta_t \nabla R_n(\mA_t)} d\Pr_{\Tilde{U}_{t+1}| \mA_t}(u) \\
    & \geq  \lambda \big(\mfB_{K,\delta} - \mA_t + \eta_t \nabla R_n(\mA_t)\big) \\
    & \quad \inf_{u \in \mfB_{K,\delta} - \mA_t + \eta_t \nabla R_n(\mA_t)} \frac{d\Pr_{\Tilde{U}_{t+1}| \mA_t}}{d\lambda}(u) \\
    &\geq 
    \lambda (\mfB_{K,\delta}) \inf_{u \in \mfB_{K,\delta} - \mA_t + \eta_t \nabla R_n(\mA_t)}
    \frac{1}{ \sqrt{ (2\pi)^D | \frac{\eta_t^2}{m}G(\mA_t)| }} \\
    & \quad \times \exp\left(- \frac{m}{2\eta_t^2 } \mathrm{vec}(u)^\top (G(\mA_t))^{-1} \mathrm{vec}(u) \right) \\
    & \gtrsim \frac{\lambda (\mfB_{K,\delta})}{ {(2\pi \eta_t^2/ m )^{D/2} | G(\mA_t)|^{1/2} }} \exp\left(- \frac{m}{2\eta_t^2c_G' } C_\Theta^2 \right)\\
    & \geq \frac{\lambda (\mfB_{K,\delta})}{ {(2\pi / m )^{D/2} | G(\mA_t)|^{1/2} }} \exp\left(- \frac{m}{2\eta_t^2 c_G'} C_\Theta^2 \right),
\end{align*}
where the second to last inequality follows from $\|u\|_2^2 \leq \max_{\mA \in \Theta} \|\mA\|_F^2 = C_{\Theta}^2$ and $\mathrm{vec}(u)^\top (G(\mA_t))^{-1} \newline \mathrm{vec}(u) \leq \|u\|_2^2 /c_G' \leq C_\Theta^2  / c_G'$ for any $\mA \in \mfB_{K,\delta}$, and in the last step we applied $\eta_t \leq 1$ for any $t$. 

Since $ct^{-\alpha}\leq \eta_{t}$, we obtain the statement 
with $C_{m, D, G} = (2\pi  c_G^2/m)^{-D/2}$ and $C_{m,G,\Theta} =m C_\Theta^2/(c c_G')^2$.
\end{proof}

\begin{lemma}\label{lem:upper_reach_prob}
    Consider Gaussian SGD as defined in Section \ref{sec.GSGD_intro}. For any $\mA_t \in \Theta$ and any learning rate $\eta_t \asymp t^{-2\alpha}$ with $\eta_t\leq 1$ for all $t$, we obtain
    \begin{align*}
        &\Pr( \mA_{t+1} \in \mfB_{K,\delta} \mid \mA_{t} \notin \mfB_{K,\delta}) \notag  \\
        &\leq  \lambda (\mfB_{K,\delta}) C'_{ m, D, G} \exp \left( -C_{m,G}t^{2\alpha} \right). 
    \end{align*}
\end{lemma}
\begin{proof}
Similar to the proof of Proposition \ref{prop:lower_reach_prob}, we set $\Tilde{U}_{t+1} := m^{-1/2} {\eta_t} U_{t+1}(\mA_t) \sim \mN(0,\eta_t^2G(\mA_t)/m),$ 
and $\mE_{t+1} := \{\Tilde{U}_{t+1} \in \mfB_{K,\delta} - \mA_t + \eta_t \nabla R_n(\mA_t) \}.$ 
Define $\Delta_t = \min_{\mA \in \mfB_{K,\delta}} \|\mA_t  - \mA\|_F > 0$.
We derive the following upper bound
\begin{align*}
    &\Pr_{\Tilde{U}_{t+1}| \mA_t \notin \mfB_{K,\delta}} ({\mE}_{t+1}) \\
    & \leq  \lambda \big(\mfB_{K,\delta} - \mA_t + \eta_t \nabla R_n(\mA_t)\big) \\
    & \quad \max_{u \in \mfB_{K,\delta} - \mA_t + \eta_t \nabla R_n(\mA_t)} \frac{d\Pr_{\Tilde{U}_{t+1}| \mA_t \notin \mfB_{K,\delta}}}{d\lambda}(u) \\
    &= 
    \lambda (\mfB_{K,\delta}) \max_{u \in \mfB_{K,\delta} - \mA_t + \eta_t \nabla R_n(\mA_t)}
    \frac{1}{ \sqrt{ (2\pi)^D | \frac{\eta_t^2}{m}G(\mA_t)| }} \\
    & \quad \times \exp\left(- \frac{m}{2\eta_t^2 } \mathrm{vec}(u)^\top (G(\mA_t))^{-1} \mathrm{vec}(u) \right) \\
    & \leq \frac{\lambda (\mfB_{K,\delta})}{ {(2\pi \eta_t^2/ m )^{D/2} | G(\mA_t)|^{1/2} }} \exp\left(- \frac{m}{2\eta_t^2c_G } \Delta_t^2 \right),
\end{align*}
where the last inequality follows from the bound on the eigenvalues of $G(\mA_t)$ and the definition of $\Delta_t$.
Since $ct^{-\alpha}\leq \eta_{t}$, we obtain the statement 
with $C'_{m, D, G} = (2\pi  (c'_G)^2/m)^{-D/2}$ and $C_{m,G} =m \Delta_t^2 /(c c_G)^2$.
\end{proof}

\subsection{Non-escaping Probability}
In this section, we derive a lower bound for $\Pr(\mA_t \in \mfB_{K,\delta}, \forall t \geq \tau \mid \tau_\mB = \tau).$
As preparation, for time $\overline{t} > \underline{t} \geq 0$, constants $c \in (0,1)$, $c' > 0$ and a parameter $\kappa > 0$, we define
\begin{align}
    \zeta(\underline{t}, \overline{t}, c, c' ,\kappa) := \prod_{\tau = \underline{t}}^{\overline{t}} (1 - c \exp( -  c' \tau^\kappa)).
    \label{eq.zeta_def}
\end{align}
Obviously, $\zeta(\underline{t}, \overline{t}, c, c' ,\kappa) \in (0,1)$ and $\zeta(\underline{t}, \overline{t}, c, c' ,\kappa)$ increases in $c$, and decreases in $c'$.

\begin{lemma}[Noise Gradient Lemma] \label{lem:noise_escape}
    Suppose the parameter space $\Theta$ contains $\{0\}$.
    Let $\mA \in \mB_\delta(\mu)$ be a parameter such that $\varepsilon = \|\mA - \Pi_\mu (\mA)\|_{L,2,1}$ with $\varepsilon \in (0,\delta)$. For an $\R^D$-valued random variable $W \sim \mathcal{N}(0,G(\mA))$ and $\delta \in (0,D)$, we obtain
    \begin{align*}
        \Pr_{W} \left( \mA + W \notin \mB_\delta(\mu) \right) \leq \exp\left(-\frac{ D(\delta - \varepsilon)^2}{128 L \bar{h} c_G' }\right).
    \end{align*}
\end{lemma}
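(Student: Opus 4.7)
The plan is to reduce the escape event to a tail bound for $\|W\|_{L,2,1}$ and then close with Gaussian concentration. Since $\Pi_\mu(\mA) \in \mu$, the triangle inequality yields
\[
\inf_{\mA^* \in \mu}\|\mA + W - \mA^*\|_{L,2,1} \leq \|\mA - \Pi_\mu(\mA)\|_{L,2,1} + \|W\|_{L,2,1} = \varepsilon + \|W\|_{L,2,1},
\]
so $\{\mA + W \notin \mB_\delta(\mu)\} \subseteq \{\|W\|_{L,2,1} > \delta - \varepsilon\}$. The whole problem therefore reduces to establishing an exponential tail bound of the claimed form for the Gaussian semi-norm $\|W\|_{L,2,1}$. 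This step uses the hypothesis $\delta > \varepsilon$ in a crucial way and is the only place where the geometry of $\mB_\delta(\mu)$ enters; the remainder is purely a concentration estimate.

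For the concentration step, I would first observe that $\|\cdot\|_{L,2,1}$ decomposes as a sum of at most $N := \sum_{\ell=1}^L h_{\ell-1} \leq L\bar h$ Euclidean column norms, so by Cauchy--Schwarz $\|V\|_{L,2,1} \leq \sqrt{L\bar h}\,\|V\|_F$ for every $V$ in parameter space. Writing $W = G(\mA)^{1/2}Z$ with $Z \sim \mathcal{N}(0,I_D)$ and using the paper's spectral bound on $G(\mA)$, the map $Z \mapsto \|G(\mA)^{1/2}Z\|_{L,2,1}$ is Lipschitz (with respect to $\|\cdot\|_2$) with constant proportional to $\sqrt{L\bar h\,c_G'}$; here I follow the convention used in the proof of Proposition \ref{prop:reach_prob}, in which $c_G'$ appears as the relevant spectral upper bound. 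The Gaussian Lipschitz (Borell--TIS) inequality then gives
\[
\Pr\bigl(\|W\|_{L,2,1} > \Ep\|W\|_{L,2,1} + s\bigr) \leq \exp\!\Bigl(-\tfrac{s^2}{2L\bar h\,c_G'}\Bigr),
\]
and a Cauchy--Schwarz moment estimate controls $\Ep\|W\|_{L,2,1} \lesssim \sqrt{L\bar h\, D\, c_G'}$. Setting $s$ to be a suitable fraction of $\delta - \varepsilon$ (so that the mean is absorbed into the remaining budget) and combining these two displays produces exponential decay in $(\delta-\varepsilon)^2$.

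The main obstacle is matching the precise constant in the exponent --- in particular the factor of $D$ in the numerator of the stated bound. A plain Gaussian Lipschitz argument alone delivers $\exp(-c(\delta-\varepsilon)^2/(L\bar h\,c_G'))$ without any explicit $D$, so the additional dimensional factor must enter via the moment estimate: once the $\sqrt{L\bar h D c_G'}$ scale of $\Ep\|W\|_{L,2,1}$ is subtracted and the two bounds are combined, a careful choice of $s$ effectively pulls a $D$ into the numerator. Tracking the absolute constant $128$ and verifying that the hypothesis $\delta \in (0,D)$ is exactly the right regime for this absorption is then bookkeeping. Should the $D$ in the numerator turn out to be a typographical issue (as would be consistent with the analogous constants used in the proof of Proposition \ref{prop:reach_prob}), the triangle-inequality reduction plus Gaussian Lipschitz concentration would already finish the proof in the essential $(\delta-\varepsilon)^2/(L\bar h c_G')$ form.
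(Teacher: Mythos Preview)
Your reduction via the triangle inequality to $\{\|W\|_{L,2,1} > \delta-\varepsilon\}$ is exactly the paper's first move, stated there as the inclusion $\mB_{\delta-\varepsilon}(\{\mA\}) \subset \mB_\delta(\mu)$. The paper then makes the same norm comparison $\|W\|_{L,2,1} \leq \sqrt{L\bar h}\,\|W\|_F$ you invoke, reducing everything to a tail bound for $\|W\|_F$. The only real difference is the concentration tool: where you appeal to Borell--TIS on the Lipschitz map $Z \mapsto \|G(\mA)^{1/2}Z\|_{L,2,1}$, the paper quotes Proposition~1 of Hsu--Kakade--Zhang (2012), a Hanson--Wright-type bound for the quadratic form $\|W\|_F^2$. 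Both devices are standard and deliver an exponent of order $-(\delta-\varepsilon)^2/(L\bar h\,c)$; your route is marginally cleaner in that it avoids passing through the squared norm and its trace centering.

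On the factor of $D$ in the numerator: your attempt to conjure it from the moment estimate does not work. Subtracting a mean of order $\sqrt{L\bar h\,D\,c}$ can only shrink the available deviation $s = (\delta-\varepsilon) - \Ep\|W\|_{L,2,1}$; it cannot make the exponent scale like $D(\delta-\varepsilon)^2$. But this is not a defect of your argument relative to the paper's: the Hsu--Kakade--Zhang bound, applied with $\Sigma = G(\mA)$, yields $t \sim (\delta-\varepsilon)^2/(L\bar h\,c_G)$ with no dimensional factor in the numerator either, and the hypothesis $\delta \in (0,D)$ is invoked only in passing. Your suspicion that the $D$ is a slip is therefore well founded, and your concentration argument already proves the bound in the $(\delta-\varepsilon)^2/(L\bar h\,c)$ form that the downstream use in Theorem~\ref{thm:non-escape} actually requires.
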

\begin{proof}
For any $\mA' \in \mB_{\delta - \varepsilon}(\{\mA\})$, we have
$\|\mA' - \Pi_\mu(\mA)\|_{L,2,1} \leq \|\mA' - \mA\|_{L,2,1} + \| \mA - \Pi_\mu(\mA)\|_{L,2,1} \leq \delta-\varepsilon + \varepsilon = \delta$. This yields the inclusions
\begin{align}
    \mB_{\delta - \varepsilon}(\{\mA\}) \subset \mB_\delta(\Pi_\mu(\mA)) \subseteq \mB_\delta(\mu). \label{ineq:inclusion_ball}
\end{align}
We now bound the probability that $\mA + W$ does not lie in $\mB_\delta(\mu)$ anymore.
Here, we additionally consider a neighbourhood of $\{\mA\}$ in terms of the norm $\|\mA\|_F$.
Lemma \ref{lem:norm_F_L21} states that $\|\mA\|_{L,2,1} \leq \sqrt{L \bar{h}} \|\mA\|_F.$ Thus,
\begin{align*}
    &\Pr_{W} \big(\mA + W \notin \mB_{\delta - \varepsilon}(\{\mA\}) \big) \\
    &\leq \Pr_{W} \left(\mA + W \notin \left\{z: \|z - \mA\|_F \geq (\delta-\varepsilon) / \sqrt{L \bar{h}} \right\} \right).
\end{align*}
By the Gaussianity of $W$, we obtain
\begin{align*}
   &\Pr_W \left(\mA + W \notin \left\{z: \|z - \mA\|_F \geq (\delta-\varepsilon) / \sqrt{L \bar{h}} \right\} \right)\\
   &= \Pr_W \left(\{z : \|z\|_F \geq (\delta-\varepsilon) / \sqrt{L \bar{h}} \} \right) \\
    &\leq  \exp\left(-\frac{ D(\delta - \varepsilon)^2}{64 L \bar{h} c_z }\right).
\end{align*}
The last inequality follows from Proposition 1 in \cite{hsu2012tail} and since, without loss of generality, we can assume $\delta < D.$
\end{proof}

We derive now a lower bound for the non-escaping probability.

\begin{theorem}[Non-Escaping Probability] \label{thm:non-escape}
    Work under the assumptions of Theorem \ref{thm:stay_prob_regularized}.
    If $\mA_{\bar{t} }\in \mB_\delta(\mu)$ holds at time $\bar{t}$, then for any $t > \bar{t}$ such as $\eta_{t} \leq \min\{C_{\mu, R} (C_{\nabla R} \sqrt{L} \Bar{h} \delta)^{-2} ,$ $ C_{\xi, \mu} n^{1/2}/( L^2 \delta^2 \Bar{h}^4 \log (\Bar{h}D))\}$ and all sufficiently large $n$, we have
    \begin{align*}
        &\Pr_{\mA} \left( \mA_\tau \in \mB_\delta(\mu) , \forall \tau \in (\bar{t},t] \mid \mA_{\bar{t} }\in \mB_\delta(\mu)\right) \\
        &\geq \prod_{\tau = \bar{t}+ 1}^t \left\{ 1 -  C_{G} \exp \left( - C_{m,L,\Bar{h},D} \delta^2 \eta_\tau^{-2} \right)\right\}.
    \end{align*}
\end{theorem}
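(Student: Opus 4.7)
The plan is to use the Markov property of the Gaussian SGD iterates to reduce the non-escaping event over $(\bar t, t]$ to a telescoping product of one-step conditional staying probabilities $\Pr(\mA_\tau \in \mB_\delta(\mu)\mid \mA_{\tau-1}\in \mB_\delta(\mu))$. For each step I introduce the noiseless intermediate update $\mA_\tau^- := \mA_{\tau-1} - \eta_{\tau-1}\nabla R_n(\mA_{\tau-1})$, so that $\mA_\tau = \mA_\tau^- + (\eta_{\tau-1}/\sqrt m) U_\tau(\mA_{\tau-1})$, and further factor each step-wise staying probability into a deterministic drift piece $\Pr(\mA_\tau^- \in \mB_\delta(\mu)\mid \mA_{\tau-1} \in \mB_\delta(\mu))$ and a Gaussian dispersion piece $\Pr(\mA_\tau \in \mB_\delta(\mu)\mid \mA_\tau^- \in \mB_\delta(\mu))$, as in \eqref{eq:outline_stay_step}.

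For the deterministic drift factor, I invoke Lemma \ref{lem:local_attraction} together with the local essential convexity Assumption \ref{asmp:essential_convex} and the local smoothness Assumption \ref{asmp:local_smoothness}: along the projection path from $\mA_{\tau-1}$ to $\Pi_\mu(\mA_{\tau-1})$, a population-gradient step $\mA \mapsto \mA - \eta \nabla R(\mA)$ contracts $\|\mA - \Pi_\mu(\mA)\|_{L,2,1}$ once the learning rate satisfies $\eta_\tau \leq (2 C_{\nabla R}\sqrt L \bar h)^{-1}$. To transfer this conclusion from $\nabla R$ to the empirical gradient $\nabla R_n$ that actually appears in the update, I apply the sub-Gaussian gradient condition in Assumption \ref{asmp:sample_gradient_noise} together with a covering argument based on Lemma \ref{lem:covering_neighbour}, obtaining uniform concentration of $\nabla R_n - \nabla R$ on $\mB_\delta(\mu)$. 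The required sample size $n/\log n \gtrsim \eta_t^2 \, C_{R,\nabla R, d, L, \delta, \bar h}$ is precisely the threshold that makes this uniform deviation negligible compared to the contraction. Consequently the deterministic factor is $1-o(1)$ and in fact $\mA_\tau^-$ lies within $\tfrac12 \delta$ of $\mu$ in $\|\cdot\|_{L,2,1}$ with overwhelming probability.

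For the Gaussian dispersion factor, the noise $(\eta_{\tau-1}/\sqrt m) U_\tau(\mA_{\tau-1})$ has covariance $(\eta_{\tau-1}^2/m) G(\mA_{\tau-1})$, whose eigenvalues are bounded between $c_G'$ and $c_G$. Rescaling Lemma \ref{lem:noise_escape} by the factor $\eta_{\tau-1}/\sqrt m$ and using that $\mA_\tau^-$ has already been pulled to within $\delta/2$ of $\mu$, I obtain
\begin{align*}
\Pr\big(\mA_\tau \notin \mB_\delta(\mu) \,\big|\, \mA_\tau^- \in \mB_\delta(\mu)\big)
\;\leq\; C_G \exp\!\left(-\, \frac{D\, m\, \delta^2}{C_{L,\bar h}\, \eta_{\tau-1}^2}\right),
\end{align*}
which is exactly the step-wise complement bound $C_G \exp(-C_{m,L,\bar h,D}\,\delta^2 \eta_\tau^{-2})$ stated in the theorem. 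Taking the product of the step-wise lower bounds $1 - C_G \exp(-C_{m,L,\bar h,D}\delta^2 \eta_\tau^{-2})$ over $\tau \in (\bar t, t]$ then yields the claimed inequality.

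The main obstacle is the deterministic drift step. One must prove a quantitative contraction toward $\mu$ along the projection path from essential convexity alone (which is weaker than strong convexity and does not give contraction for free), and then show that this contraction survives the replacement of $\nabla R$ by $\nabla R_n$ \emph{uniformly} on $\mB_\delta(\mu)$ rather than merely pointwise. The key enabler is that the covering entropy of $\mB_\delta(\mu)$ from Lemma \ref{lem:covering_neighbour} depends on the intrinsic dimension $d$ of the minimum together with $L, \bar h, \delta$, but not directly on the ambient dimension $D$; this is what makes the sample-size requirement compatible with overparametrization and matches the $D$-free character of the final generalization bound.
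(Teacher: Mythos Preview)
Your plan matches the paper's proof almost step for step: the Markov factorization into one-step staying probabilities, the split $\mA_\tau = \mA_\tau^- + (\eta_{\tau-1}/\sqrt m)\,U_\tau(\mA_{\tau-1})$, Lemma~\ref{lem:local_attraction} for the drift factor, and Lemma~\ref{lem:noise_escape} (rescaled by $\eta_{\tau-1}/\sqrt m$) for the Gaussian dispersion factor, followed by the product over $\tau$.

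One point to correct: Lemma~\ref{lem:local_attraction} only gives a \emph{strict decrease} $\|\mA_\tau^- - \Pi_\mu(\mA_{\tau-1})\|_{L,2,1} < \|\mA_{\tau-1} - \Pi_\mu(\mA_{\tau-1})\|_{L,2,1}$, not a contraction to within $\delta/2$ of $\mu$ as you assert. The paper does not claim the $\delta/2$ bound either; it concludes only $\mA_\tau^-\in\mB_\delta(\mu)$, sets the drift failure probability in Lemma~\ref{lem:local_attraction} to $\varepsilon=\exp(-\eta_{\tau-1}^{-2})$, and then absorbs both the $(\delta-\varepsilon_{\mathrm{dist}})^2$ factor from Lemma~\ref{lem:noise_escape} and this additional $\exp(-\eta_\tau^{-2})$ into the constants $C_G$ and $C_{m,L,\bar h,D}$. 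So drop the $\delta/2$ claim and instead track the extra $\exp(-\eta_\tau^{-2})$ term explicitly before merging it with the Gaussian tail, as the paper does.
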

\begin{proof}
For the $t$-th iteration, we define the following event
\begin{align*}
    \mE_t := \{\mA_t \in \mB_\delta(\mu)\}.
\end{align*}
By the Markov property of the Gaussian SGD update rule \eqref{def:simple_SGD}, we have
\begin{align}
    &\Pr_{\mA} \big( \mA_\tau \in \mB_\delta(\mu) , \forall \tau \in (\bar{t},t] \mid \mA_{\bar{t} }\in \mB_\delta(\mu)\big) \notag  \\
    &= \Pr_\mA \left( \bigcap_{\tau = \bar{\tau}+1}^t \mE_\tau  \mid \mE_{\bar{t}}\right) = \prod_{\tau = \bar{t} + 1}^t \Pr_\mA \left(  \mE_\tau \mid \mE_{\tau - 1}\right). \label{ineq:prob_non-escape1}
\end{align}
We now evaluate the step-wise non-escaping probability $\Pr_\mA \left(  \mE_\tau \mid \mE_{\tau - 1}\right)$ for $\tau \geq \bar{t} + 1$. In a first step, we consider the parameter update $\mA_{\tau}^- := \mA_{\tau-1} - \eta_{\tau-1} \nabla R_n(\mA_{\tau - 1})$ and study the probability that $\mA_{\tau}^- \in \mB_\delta(\mu)$ holds. To this end, we consider the obvious inequality $\|\mA_{\tau}^- - \Pi_\mu(\mA_{\tau}^-)\|_{L,2,1} \leq \|\mA_{\tau}^- - \Pi_\mu(\mA_{\tau-1})\|_{L,2,1}$. 
By Lemma \ref{lem:local_attraction}, we obtain
\begin{align*}
    \|\mA_{\tau}^- - \Pi_\mu(\mA_{\tau-1})\|_{L,2,1} < \|\mA_{\tau- 1} - \Pi_\mu(\mA_{\tau-1})\|_{L,2,1},
\end{align*}
with probability at least $1-\varepsilon$ for the specified $\eta_{\tau - 1}$ and $n$.
Setting $\varepsilon = \exp(-1/\eta_{\tau-1}^2)$ gives
\begin{align*}
    \|\mA_{\tau}^- - \Pi_\mu(\mA_{\tau-1}^-)\|_{L,2,1}& \leq \|\mA_{\tau-1}^- - \Pi_\mu(\mA_{\tau-1})\|_{L,2,1} \\
    &\leq \|\mA_{\tau-1} - \Pi_\mu(\mA_{\tau-1})\|_{L,2,1}.
\end{align*}
Hence, we have $\mA_{\tau}^- \in \mB_\delta(\mu)$.

We apply Lemma \ref{lem:noise_escape} and evaluate $\Pr_\mA \left(  \mE_\tau \mid \mE_{\tau - 1}\right)$. Replacing in the notation of Lemma \ref{lem:noise_escape}
the parameter $\mA$ by $\mA_{\tau - 1}$ and the gradient noise $U$ by $\frac{\eta_\tau}{ \sqrt{m} }U_{\tau}(\mA_{\tau-1})$, we obtain
\begin{align*}
    &\Pr_\mA \left(  \mE_\tau \mid \mE_{\tau - 1}\right) \\
    &= \Pr_{U} \left( \mA + U \in \mB_\delta(\mu)\right)\\
    & = 1 - \Pr_U(\mA + U \notin \mB_\delta(\mu)) - \varepsilon \\
    & \geq 1 - \exp\left(-\frac{ C Dm\delta^2}{L \bar{h} \eta_{\tau}^2 c_G'}\right) \sqrt{\frac{|m^{-1}\eta_\tau^2 I|}{|\Sigma_U|}} - \exp \left(-\frac{1}{\eta_\tau^{2}} \right) ,
\end{align*}
where $\Sigma_U$ is the covariance matrix of $\frac{\eta_\tau}{ \sqrt{m} }U_{\tau}(\mA_{\tau-1})$, and $C$ is a universal constant. 
From that, we obtain 
\begin{align}
    \Pr_\mA \left(  \mE_\tau \mid \mE_{\tau - 1}\right) \geq 1 -  C_{G} \exp \left( - C_{m,L,\Bar{h},D} \delta^2 \eta_\tau^{-2} \right), \label{ineq:lower_stay_prob}
\end{align}
with $C_{G} = 2 \sqrt{\frac{|m^{-1} \eta_{\tau}^2I|}{|\Sigma_U|}} = 2 \sqrt{\frac{|I|}{|G(\mA_{\tau - 1})|}}$ and $C_{m,L,\Bar{h},D} = CDm/( L \Bar{h})$.
We substitute this result into \eqref{ineq:prob_non-escape1} and obtain the statement.
\end{proof}

Next, we derive a local attraction property of the parameter updates. As preparation, we define for each positive integer $n$ and any $\varepsilon>0,$
\begin{align}
    \Xi_{n,\varepsilon} = C_{\xi,\mu} n^{-1/4} \sqrt{{ L^2 \delta^2 \Bar{h}^4  \varepsilon^{-1} \log (\Bar{h}D/\varepsilon) }}. \label{def:xi}
\end{align}
Lemma \ref{lem:gradient_convergence_entropy} shows that with probability at least $1-\varepsilon,$ this expression is an upper bound for $\|R_n (\mA) - R(\mA)\|$ and all  $\mA\in \mB_\delta(\mu)$.

\begin{lemma}[Local Attraction] \label{lem:local_attraction}
Suppose Assumptions \ref{asmp:essential_convex} and \ref{asmp:local_smoothness} hold.
Let $\varepsilon > 0,$ $\eta \leq  C_{\mu, R}(C_{\nabla R} L^{1/2} \Bar{h} \delta )^{-2}/2$ and assume that $n$ is sufficiently large. Then, with probability $1-\varepsilon,$ any $\mA \in \mB_\delta(\mu) \backslash \mu$ satisfies
\begin{align*}
    &\|\mA - \eta \nabla R(\mA) - \Pi_\mu(\mA)\|_{L,2,1} <  \|\mA - \Pi_\mu(\mA)\|_{L,2,1}.
\end{align*}
\end{lemma}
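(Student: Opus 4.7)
The plan is to establish the inequality in two stages: a deterministic estimate for the population gradient $\nabla R$ built from Assumptions \ref{asmp:essential_convex}--\ref{asmp:local_smoothness}, followed by a uniform-convergence transfer to the empirical gradient $\nabla R_n$ which produces the probability $1-\varepsilon$. (The statement as written contains $\nabla R$, but the way the lemma is invoked in the proof of Theorem \ref{thm:non-escape} clearly requires the empirical gradient $\nabla R_n$; otherwise no randomness would be present to justify the $1-\varepsilon$ probability. I read this as a typo and carry out the transfer explicitly.)

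For the deterministic step, set $\mA^* := \Pi_\mu(\mA)$. Since $\mA^*$ lies in the interior of the parameter space and is a local minimum of $R$, we have $\nabla R(\mA^*)=0$. Assumption \ref{asmp:essential_convex} applied with $\mA'=\mA^*$ (valid because $\Pi_\mu(\mA^*)=\mA^*=\Pi_\mu(\mA)$) shows that $g(r):=R((1-r)\mA^*+r\mA)$ is convex on $[0,1]$, hence $R(\mA)-R(\mA^*)\leq \langle\nabla R(\mA),\mA-\mA^*\rangle$. Combined with the descent lemma from Assumption \ref{asmp:local_smoothness} in the form $R(\mA-\eta\nabla R(\mA))\leq R(\mA)-(\eta/2)\|\nabla R(\mA)\|_F^2$ (valid for $\eta\leq 1/C_{\nabla R}$) and the minimality $R(\mA-\eta\nabla R(\mA))\geq R(\mA^*)$, this yields the Polyak-type bound $\langle \nabla R(\mA),\mA-\mA^*\rangle\geq (\eta/2)\|\nabla R(\mA)\|_F^2$. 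Plugging into
\begin{align*}
    \|\mA-\eta\nabla R(\mA)-\mA^*\|_F^2 = \|\mA-\mA^*\|_F^2 - 2\eta\langle \nabla R(\mA),\mA-\mA^*\rangle + \eta^2\|\nabla R(\mA)\|_F^2
\end{align*}
gives a strict contraction in Frobenius norm with positive slack of order $\eta^2\|\nabla R(\mA)\|_F^2$.

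Converting this Frobenius contraction into a strict contraction in $\|\cdot\|_{L,2,1}$ is the main obstacle, since the two norms are not equivalent in any way that preserves strict inequalities. My plan is to decompose $\nabla R(\mA)=\alpha(\mA-\mA^*)+\mathbf{w}$ into its Frobenius-orthogonal components along and transverse to $\mA-\mA^*$, with $\alpha=\langle\nabla R(\mA),\mA-\mA^*\rangle/\|\mA-\mA^*\|_F^2\geq 0$ by convexity. The parallel piece produces an exact scalar contraction $(1-\eta\alpha)(\mA-\mA^*)$, whose $\|\cdot\|_{L,2,1}$ length is exactly $(1-\eta\alpha)\|\mA-\mA^*\|_{L,2,1}$ (provided $\eta\alpha\leq 1$), while the transverse residual contributes at most $\eta\|\mathbf{w}\|_{L,2,1}\leq \eta\sqrt{L\Bar{h}}\|\mathbf{w}\|_F$ via Lemma \ref{lem:norm_F_L21}. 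The hypothesis $\eta\leq (2C_{\nabla R}\sqrt{L\Bar{h}})^{-1}$ is precisely calibrated so that, after using the smoothness bound $\|\mathbf{w}\|_F\leq \|\nabla R(\mA)\|_F\leq C_{\nabla R}\|\mA-\mA^*\|_F$ and the Polyak-type lower bound on $\alpha$, the parallel contraction strictly dominates the transverse perturbation.

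For the empirical transfer, decompose
\begin{align*}
    \mA-\eta\nabla R_n(\mA)-\mA^* = \bigl(\mA-\eta\nabla R(\mA)-\mA^*\bigr) + \eta\bigl(\nabla R(\mA)-\nabla R_n(\mA)\bigr),
\end{align*}
apply Lemma \ref{lem:gradient_convergence_entropy} to obtain $\sup_{\mA\in\mB_\delta(\mu)}\|\nabla R_n(\mA)-\nabla R(\mA)\|_F\leq \Xi_{n,\varepsilon}$ with probability at least $1-\varepsilon$ (with $\Xi_{n,\varepsilon}$ as in \eqref{def:xi}), and upgrade to $\|\cdot\|_{L,2,1}$ through Lemma \ref{lem:norm_F_L21}. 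The imposed sample-size condition $n/\log n\gtrsim \varepsilon^{-1}\log(1/\varepsilon)\cdot C_{R,\nabla R,d,L,\delta,\Bar{h}}$ drives $\Xi_{n,\varepsilon}\to 0$ fast enough that the stochastic error $\eta\sqrt{L\Bar{h}}\,\Xi_{n,\varepsilon}$ is strictly smaller than the deterministic slack, so strict $\|\cdot\|_{L,2,1}$ contraction survives. I expect the subtlest point to be the lower bound on $\alpha$: the Polyak inequality controls $\alpha$ in terms of $\|\nabla R(\mA)\|_F^2/\|\mA-\mA^*\|_F^2$, which may be small when $\mA$ is close to $\mu$; absorbing the $\sqrt{L\Bar{h}}$ factor in that regime is what forces the particular numerical threshold in the hypothesis $\eta\leq (2C_{\nabla R}L^{1/2}\Bar{h})^{-1}$, and one may need to refine the analysis by exploiting that near $\mu$ the gradient $\nabla R(\mA)$ points predominantly in the direction of $\mA-\mA^*$ because the Hessian of $R$ at $\mA^*$ degenerates only along the tangent directions to $\mu$.
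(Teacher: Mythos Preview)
You correctly diagnose the typo: the paper's own proof works with $\nabla R_n$ and transfers from $\nabla R$ via Lemma~\ref{lem:gradient_convergence_entropy}, just as you propose. The gap is in your deterministic step. Your parallel/transverse decomposition plus triangle inequality in $\|\cdot\|_{L,2,1}$ requires $\alpha\,\|\mA-\mA^*\|_{L,2,1}>\sqrt{L\bar h}\,\|\mathbf w\|_F$; after inserting $\|\cdot\|_{L,2,1}\geq\|\cdot\|_F$, $\|\mathbf w\|_F\leq\|\nabla R(\mA)\|_F$, and the best available lower bound on $\alpha$ (co-coercivity $\alpha\|\mA-\mA^*\|_F^2\geq(C_{\nabla R}\sqrt L\,\bar h)^{-1}\|\nabla R(\mA)\|_F^2$), this reduces to a demand of the form $\|\nabla R(\mA)\|_F\gtrsim C_{\nabla R}L\bar h^{3/2}\|\mA-\mA^*\|_F$, i.e.\ a strong-convexity type lower bound that Assumptions~\ref{asmp:essential_convex}--\ref{asmp:local_smoothness} do not supply. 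Your proposed rescue via Hessian degeneracy structure at $\mu$ lies outside the stated hypotheses, so the argument as planned does not close.

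The paper sidesteps this by never passing through a triangle inequality in $\|\cdot\|_{L,2,1}$. It expands $\|\cdot\|_{L,2,1}$ row by row, applies the identity $a-b=(a^2-b^2)/(a+b)$ to each row-wise $\ell_2$-norm difference $\Delta_{\ell,j}$, and then (via a H\"older-type step with the positive denominators $\check\Delta_{\ell,j}$) reduces the sign of the full $\|\cdot\|_{L,2,1}$ difference to the sign of the single aggregated scalar
\[
\sum_{\ell,j}\tilde\Delta_{\ell,j}\;=\;\eta^2\|\nabla R_n(\mA)\|_F^2-\eta\,\langle\nabla R_n(\mA),\,\mA-\Pi_\mu(\mA)\rangle.
\]
This is already the Frobenius-level quantity, and no $\sqrt{L\bar h}$ norm-conversion factor ever enters. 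Its negativity then follows from $\langle\nabla R(\mA),\mA-\Pi_\mu(\mA)\rangle\geq(C_{\nabla R}\sqrt L\,\bar h)^{-1}\|\nabla R(\mA)\|_F^2$ together with $\eta\leq(2C_{\nabla R}\sqrt L\,\bar h)^{-1}$, plus the empirical error $\Xi_{n,\varepsilon}$ from Lemma~\ref{lem:gradient_convergence_entropy} absorbed by the sample-size condition. The structural point your plan is missing is that the row-wise square-difference trick collapses the $\|\cdot\|_{L,2,1}$ problem directly to the Frobenius inner product, bypassing any need to control the direction of $\nabla R(\mA)$ relative to $\mA-\mA^*$.
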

\begin{proof}
Here, $A_{\ell,j} = (A_{\ell,j,1},...,A_{\ell,j,h_{\ell - 1}})^\top \in \R^{h_{\ell}}$ denotes the $j$-th row of $A_{\ell}$, and $\nabla_{A_{\ell,j}}$ denotes the derivative with respect to $A_{\ell,j}$.
The difference of the two norms in the statement can be rewritten as
\begin{align}
    & \|\mA - \eta \nabla R(\mA) - \Pi_\mu(\mA)\|_{L,2,1} - \|\mA - \Pi_\mu(\mA)\|_{L,2,1} \notag \\
    &= \sum_{\ell=1}^L \sum_{j=1}^{h_{\ell}}\Delta_{\ell,j} \label{def:delta},
\end{align}
where we define 
\begin{align*}
    \Delta_{\ell,j} &:= \|A_{\ell, j} - \eta \nabla_{A_{\ell, j}} R_n(\mA) - (\Pi_\mu(\mA))_{\ell, j}\|_2 \\
    & \quad - \|A_{\ell, j} - (\Pi_\mu(\mA))_{\ell, j}\|_2. 
\end{align*}
To study $\Delta_{\ell,j}$, we use
\begin{align*}
    &\|A_{\ell, j} - \eta \nabla_{A_{\ell, j}} R_n(\mA) - (\Pi_\mu(\mA))_{\ell, j}\|_2^2 \\
    &=  \|A_{\ell, j} - (\Pi_\mu(\mA))_{\ell, j}\|_2^2 +  \Tilde{\Delta}_{\ell, j},
\end{align*}
where we define
\begin{align*}
    \Tilde{\Delta}_{\ell, j} &=   \eta^2 \|\nabla_{A_{\ell, j}} R_n(\mA)\|_2^2 \\
    & \quad - \eta \langle \nabla_{A_{\ell, j}} R_n(\mA), A_{\ell, j} - (\Pi_\mu(\mA))_{\ell, j} \rangle.
\end{align*}
By sorting the terms and using that for $a,b>0,$ $a-b=(a^2-b^2)/(a+b)$, we define
\begin{align*}
    \Check{ \Delta}_{\ell,j} &:= \|A_{\ell, j} - (\Pi_\mu(\mA))_{\ell, j}\|_2 \\
    & \quad + \|A_{\ell, j} - \eta \nabla_{A_{\ell, j}} R_n(\mA) - (\Pi_\mu(\mA))_{\ell, j}\|_2,
\end{align*}
and obtain
\begin{align*}
    \Delta_{\ell,j} =\frac{\Tilde{\Delta}_{\ell,j}}{ \Check{ \Delta}_{\ell,j}}.
\end{align*}
The denominator $\Check{ \Delta}_{\ell,j}$ is positive.
Substituting the previous display into \eqref{def:delta} and applying H\"older's inequality, we obtain
\begin{align*}
    &\|\mA - \eta \nabla R(\mA) - \Pi_\mu(\mA)\|_{L,2,1} - \|\mA - \Pi_\mu(\mA)\|_{L,2,1}  \\
    &\leq \left(\sum_{\ell=1}^L \sum_{j=1}^{h_{\ell}}\Tilde{\Delta}_{\ell,j} \right) \left( \max_{\ell= 1,...,L} \max_{j=1,...,h_\ell} \Check{ \Delta}_{\ell,j}^{-1} \right).
\end{align*}
To prove the result, it remains to show that $\sum_{\ell=1}^L \sum_{j=1}^{h_{\ell}}\Tilde{\Delta}_{\ell,j}$ is negative. Lemma \ref{lem:norm_F_L21} and Lemma \ref{lem:gradient_lip} ensure that
\begin{align}
    \|\nabla R(\mA)\|_F \notag &= \|\nabla R(\mA) - \nabla R(\Pi_{\mu}(\mA))\|_F \notag \\
    &\leq C_{\nabla R} \sqrt{L} \Bar{h} \| R(\mA) - R(\Pi_{\mu}(\mA))\|_{F}\notag \\
    &\leq C_{\nabla R} \sqrt{L} \Bar{h} \| R(\mA) - R(\Pi_{\mu}(\mA))\|_{L,2,1} \notag \\
    &\leq C_{\nabla R} \sqrt{L} \Bar{h} \delta. \label{ineq:bound_nabla_R}
\end{align}
By the essential convexity condition in Assumption \ref{asmp:essential_convex}, both $\nabla R(\mA)$ and $\mA - \Pi_\mu (\mA)$ are descent direction. Therefore, there exists $C_{\mu, R} > 0$, such that
\begin{align}
    \langle \nabla R(\mA), \mA - \Pi_\mu(\mA) \rangle &\geq C_{\mu, R} \notag \\
    &\geq \frac{C_{\mu, R}}{2}+ \frac{C_{\mu, R} \|\nabla R(\mA)\|_F^2}{2(C_{\nabla R} \sqrt{L} \Bar{h} \delta)^2}. \label{ineq:descent_direction}
\end{align}
With ${\Xi}_{n,\varepsilon}$ as in \eqref{def:xi}, the probabilistic bound  derived in Lemma \ref{lem:gradient_convergence_entropy} gives for any $\varepsilon \in (0,1),$
\begin{align*}
    \sum_{\ell=1}^L \sum_{j=1}^{h_{\ell}}\Tilde{\Delta}_{\ell,j}&= \sum_{\ell=1}^L \sum_{j=1}^{h_{\ell}}\eta^2 \|\nabla_{A_{\ell, j}} R_n(\mA)\|_2^2 \\
    & \quad - \eta \langle \nabla_{A_{\ell, j}} R_n(\mA), A_{\ell, j} - (\Pi_\mu(\mA))_{\ell, j} \rangle \\
    & = \eta^2 \|\nabla R_n(\mA)\|_F^2 - \eta \langle \nabla R_n(\mA), \mA - \Pi_\mu(\mA) \rangle \\
    & = \eta^2 \|\nabla R(\mA)\|_F^2 - \eta \langle \nabla R(\mA), \mA - \Pi_\mu(\mA) \rangle \\
    & \quad + \eta^2 (\|\nabla R_n(\mA)\|_F^2 - \|\nabla R(\mA)\|_F^2 )  \\
    & \quad + \eta \langle \nabla R(\mA) - \nabla R_n(\mA), \mA - \Pi_\mu(\mA) \rangle\\
    & \leq  \bigg(\eta^2 -  \frac{\eta C_{\mu, R}}{2(C_{\nabla R} \sqrt{L} \Bar{h} \delta)^2}  \bigg)\|\nabla R(\mA)\|_F^2 - \eta\frac{C_{\mu, R}}2 \\
    & \quad + \eta^2 \big(\| \nabla R(\mA) - \nabla R_n(\mA)\|_F^2\\
    & \quad + 2 \| \nabla R(\mA) - \nabla R_n(\mA)\|_F \| \nabla R(\mA)\|_F\big) \\
    & \quad + \eta \| \nabla R(\mA) - \nabla R_n(\mA)\|_F\|\mA - \Pi_\mu(\mA)\|_F \\
    & \leq  \bigg(\eta^2 -  \frac{\eta C_{\mu, R}}{2(C_{\nabla R} \sqrt{L} \Bar{h} \delta)^2}  \bigg)\|\nabla R(\mA)\|_F^2- \eta\frac{C_{\mu, R}}2 \\
    & \quad + \eta^2 ({\Xi}_{n,\varepsilon}^2 + 2 {\Xi}_{n,\varepsilon} C_{\nabla R}\sqrt{L}\Bar{h} )  + \eta {\Xi}_{n,\varepsilon} \delta
\end{align*}
with probability at least $1-\varepsilon$. Here, the first inequality follows from \eqref{ineq:descent_direction}, the elementary inequality $\|a\|_2^2 - \|b\|_2^2  = \|a-b\|_2^2 + 2 \langle a-b,b\rangle \leq \|a-b\|_2^2 + 2\|a-b\|_2 \|b\|_2$ and Cauchy-Schwartz inequality.
The second inequality combines the entropy bound in Lemma \ref{lem:covering_neighbour} and Theorem 1 in \cite{mei2018landscape}.
Furthermore, it follows from the property of minima that $\nabla_{A_{\ell, j}} R(\mA)\mid_{\mA = \Pi_\mu(\mA)} = 0$.
Based on this inequality, $\eta \leq   C_{\mu, R} (C_{\nabla R} L^{1/2} \Bar{h} \delta )^{-2}/2$ and the fact that ${\Xi}_{n,\varepsilon}\to 0$ as $n\to \infty,$ we obtain
\begin{align*}
     \sum_{\ell=1}^L \sum_{j=1}^{h_{\ell}}\Tilde{\Delta}_{\ell,j} < 0
\end{align*}
for all sufficiently large $n.$ The proof is complete.
\end{proof}

\subsection{Combining all bounds}

\begin{proof}[Proof of Theorem \ref{thm:stay_prob_regularized}]
Arguing as for the inequality in \eqref{ineq:decomp_stag_prob}, we obtain
\begin{align*}
    &\Pr(\mA_t \in \mfB_{K,\delta}, \forall t \in  [\overline{t}, T]) \\
    &=\Pr(\mA_t \in \mfB_{K,\delta}, \forall t \in  [\overline{t} +1 , T] \mid \mA_{\overline{t}} \in \mfB_{K,\delta}) \Pr(\mA_{\overline{t}} \in \mfB_{K,\delta}) \\
    & \geq  \Big(1 - \frac{\exp(  - C_{m,L,\Bar{h},D, \delta, G, \Theta} \bar{t}^{2\alpha} )}{C_{m,D,G}\lambda (\mfB_{K,\delta})}\Big) \\
    & \quad \times \prod_{\tau = \bar{t}+ 1}^T \left\{ 1 -  C_{G} \exp \left( - C_{m,L,\Bar{h},D} \delta^2 \eta_\tau^{-2} \right)\right\},
\end{align*}
by Theorem \ref{thm:reaching_prob_updated} and Theorem \ref{thm:non-escape}. Bernoulli's inequality yields 
\begin{align*}
    &\prod_{\tau = \bar{t}+ 1}^T \left\{ 1 -  C_{G} \exp \left( - C_{m,L,\Bar{h},D} \delta^2 \eta_\tau^{-2} \right)\right\} \\
    &\geq \Big(1- C_{G} \exp\big(- C_{m,L,\Bar{h},D} \delta^2 (\overline{t} + 1)^{2\alpha}\big)\Big)^{T -\overline{t } - 1} \\
    & \geq 1 - (T -\overline{t } - 1) C_{G} \exp\Big(- C_{m,L,\Bar{h},D} \delta^2 (\overline{t} + 1)^{2\alpha}\Big) \\
    &= 1 - \exp\Big(- C_{m,L,\Bar{h},D} \delta^2 (\overline{t} + 1)^{2\alpha} + \log (T-\overline{t}-1)\Big).
\end{align*}
The right hand side can be lower bounded by an expression of the form $\geq 1 -  \exp( - C_{T,\Bar{t},m,L, \Bar{h}, D, \alpha} \delta^2)$ for all sufficiently large $\delta$.
The assertion follows.
\end{proof}

\section{Entropy Bound}
We derive a bound for the covering number of the ball around a local minimum $\mB_\delta(\mu)$. Interestingly, the covering number does not depend on the number of network parameters $D.$

\begin{lemma}[Lemma 3.2 in \cite{bartlett2017spectrally}] \label{lem:sparsification}
    Let $A \in \R^{d \times m}$ and $X \in \R^{n \times d}$.
    If $p,q,r,s \in [1,\infty]$ satisfy $1/p+1/q=1, 1/s + 1/r=1,$ and $p \leq 2$, then,
    \begin{align*}
        &\log \mN\big(\varepsilon, \left\{XA: \|A\|_{q,s} \leq a, \|X\|_{F}\leq b \right\}, \|\cdot\|_F \big)\\
        &\leq \left\lceil \frac{a^2 b^2 m^{2/r}}{ \varepsilon^2} \right\rceil \log (2dm).
    \end{align*}
\end{lemma}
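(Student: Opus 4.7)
The plan is to prove this lemma by Maurey's empirical method (also called sparsification). The idea is that for any fixed $X$, $A$ with $\|X\|_F\leq b$ and $\|A\|_{q,s}\leq a$, I will construct a random surrogate matrix $\tilde{A}$ that is a $k$-term average of signed elementary matrices $\pm e_ie_j^\top$ and for which $\Ep\|X(A-\tilde{A})\|_F^2$ is at most a constant times $a^2b^2m^{2/r}/k$. Choosing $k=\lceil a^2b^2m^{2/r}/\varepsilon^2\rceil$ then yields a deterministic $\varepsilon$-approximation by the probabilistic method, and since there are at most $(2dm)^k$ such surrogates, taking logs immediately produces the stated covering bound.

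To implement this I would decompose $M:=XA$ column-wise as $M_{:,j}=\sum_{i}A_{ij}X_{:,i}$ for $j\in[m]$ and approximate each column using $k_j$ i.i.d.\ samples with total budget $\sum_{j}k_j=k$. For each column $j$, let $a_j=\|A_{:,j}\|_q$ (so $\sum_{j}a_j^s\leq a^s$) and introduce an $\R^n$-valued random vector $\xi^{(j)}$ taking the value $\mathrm{sign}(A_{ij})(|A_{ij}|/\pi_{ij})X_{:,i}$ with probability $\pi_{ij}\propto |A_{ij}|\|X_{:,i}\|_2$; this gives an unbiased estimator $\Ep[\xi^{(j)}]=M_{:,j}$ for which a direct variance computation yields
\[
\Ep\Bigl\|M_{:,j}-k_j^{-1}\sum_{\ell=1}^{k_j}\xi^{(j)}_\ell\Bigr\|_2^2 \;\leq\; \frac{Z_j^2}{k_j},\qquad Z_j \;=\; \sum_i |A_{ij}|\|X_{:,i}\|_2.
\]
H\"older's inequality in the $(q,p)$ duality bounds $Z_j\leq \|A_{:,j}\|_q\|X\|_{2,p}\leq a_jb$, the last step using $p\leq 2$ to pass from $\|X\|_{2,p}$ to $\|X\|_F=b$ (this is just Cauchy--Schwarz when $p=2$, and for $p<2$ it is where one has to invoke a type-$2$/Khintchine estimate for $\ell^q$). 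Summing the column-wise errors and optimising the allocation $k_j\propto a_j$ subject to $\sum_{j}k_j=k$ yields
\[
\Ep\|X(A-\tilde{A})\|_F^2 \;\leq\; \frac{b^2\bigl(\sum_j a_j\bigr)^2}{k} \;\leq\; \frac{b^2 m^{2/r}\bigl(\sum_j a_j^s\bigr)^{2/s}}{k} \;\leq\; \frac{a^2 b^2 m^{2/r}}{k},
\]
where the middle step is H\"older between the $\ell^1$ and $\ell^s$ norms over $j\in[m]$ and is exactly what produces the $m^{2/r}$ factor.

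The probabilistic method then supplies a deterministic realisation $\tilde{A}$ with $\|X(A-\tilde{A})\|_F\leq\varepsilon$, and since each such $\tilde{A}$ is specified by an ordered tuple of $k$ signed atoms drawn from a set of cardinality $2dm$, there are at most $(2dm)^k$ possible surrogates, so $\log\mN(\varepsilon,\{XA:\|A\|_{q,s}\leq a,\|X\|_F\leq b\},\|\cdot\|_F)\leq k\log(2dm)$ as claimed. The main obstacle I anticipate is the H\"older step bounding $Z_j$ by $a_jb$ while keeping only $\|X\|_F$ (rather than a stronger mixed column-norm of $X$) on the right-hand side: for $p=2$ this is a routine Cauchy--Schwarz computation, but for $p<2$ one has to essentially use the assumption $p\leq 2$ either via a type-$2$/Khintchine estimate or by rearranging the sampling to exploit the fact that $\ell^q$ has type $2$ for $q\geq 2$, and this is the technical heart of the sparsification argument and what dictates the precise norms appearing in the statement.
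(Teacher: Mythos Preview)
The paper does not prove this lemma; it is quoted as Lemma~3.2 of \cite{bartlett2017spectrally} and invoked only in the instance $p=q=2$, $r=\infty$, $s=1$, $X=I_d$ (the display immediately following the statement). There is therefore no in-paper argument to compare against. Your Maurey/sparsification plan is precisely the method of the cited source, and for the case $p=2$ that the present paper actually uses, your column-wise sampling with $\pi_{ij}\propto|A_{ij}|\,\|X_{:,i}\|_2$, the variance bound $Z_j^2/k_j$, and the H\"older step $Z_j\le a_j\|X\|_F$ all go through verbatim.

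One caution regarding the full statement: the line ``$\|X\|_{2,p}\le\|X\|_F$ using $p\le2$'' is written in the wrong direction---$\ell^p$ norms on a fixed finite index set are \emph{decreasing} in $p$, so in fact $\|X\|_{2,p}\ge\|X\|_F$ when $p\le2$. You correctly flag this as the crux and point toward a type-$2$/Khintchine workaround, but be aware that your particular sampling distribution cannot recover the bound $Z_j\le a_jb$ for $p<2$ by any post-hoc inequality; one must reorganise the empirical-method step so that the second moment is governed by $\sum_i\|X_{:,i}\|_2^2=\|X\|_F^2$ directly rather than by $\bigl(\sum_i\|X_{:,i}\|_2^p\bigr)^{2/p}$, and it is the Hilbert (hence type-$2$ with constant~$1$) structure of the Frobenius target that makes this possible. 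For the purposes of the present paper this subtlety is moot.
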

Applying this lemma with $p=q=2, r= \infty,s=1,n=d$, and $X=I_d$ gives
    \begin{align}
        \log \mN\big( \varepsilon, \{A \in \R^{d \times h}, \|A\|_{2,1} \leq a\}, \|\cdot\|_F\big)\leq \frac{a^2 d^2}{\varepsilon^2} \log (2dh).
    \label{lem:sparsification_2}
    \end{align}
This result shows that for covering a $\|\cdot\|_{2,1}$-ball, the number of columns $d$ is important, but not the overall number of matrix elements. \cite{bartlett2017spectrally} uses the previous bound to control the covering number of neural networks.
The following results apply the covering bound to measure the size of the neighborhood $\mB_\delta(\mu)$.

\subsection{Entropy Bound Results}

\begin{lemma}[Covering Number of $\mB_\delta(\mu)$]\label{lem:covering_neighbour_point}
    For $\mA \in \Theta$ and $\varepsilon > 0$, we have
    \begin{align*}
        \log \mN \big(\varepsilon, \mB_\delta(\{\mA\}), \|\cdot\|_F\big) \leq  \frac{ L \delta^2 \Bar{h}^2}{\varepsilon^2} \log (2 \Bar{h}).
    \end{align*}
\end{lemma}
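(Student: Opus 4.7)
The plan is to relate the $\|\cdot\|_{L,2,1}$-ball $\mB_\delta(\{\mA\})$ to a single $\|\cdot\|_{2,1}$-ball of an appropriate matrix and then invoke \eqref{lem:sparsification_2}.

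First, by translation invariance of $\|\cdot\|_F$, it suffices to bound the $\varepsilon$-covering number of the zero-centered set $\{\mC : \|\mC\|_{L,2,1} \leq \delta\}$ in $\|\cdot\|_F$.

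Second, identify the tuple $\mC = (C_1, \dots, C_L)$ with the padded block-matrix $\tilde{C} \in \R^{\Bar{h} \times m}$ where $m = \sum_{\ell=1}^L h_{\ell-1} \leq L \Bar{h}$, obtained by placing all columns of $C_1, \dots, C_L$ side by side and zero-padding each column to length $\Bar{h}$ whenever $h_\ell < \Bar{h}$. Since column $\ell_2$-norms and the Frobenius norm are invariant under zero-padding and re-stacking,
\begin{align*}
\|\mC\|_{L,2,1} = \sum_{\ell=1}^L \|C_\ell\|_{2,1} = \|\tilde{C}\|_{2,1}, \qquad \|\mC - \mC'\|_F = \|\tilde{C} - \tilde{C}'\|_F,
\end{align*}
so the covering problem reduces to covering $\{\tilde{C} \in \R^{\Bar{h} \times m} : \|\tilde{C}\|_{2,1} \leq \delta\}$ in $\|\cdot\|_F$.

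Third, apply \eqref{lem:sparsification_2} with $a = \delta$, $d = \Bar{h}$, and $h = m$ to get
\begin{align*}
\log \mN\big(\varepsilon,\{\tilde{C} : \|\tilde{C}\|_{2,1} \leq \delta\},\|\cdot\|_F\big) \leq \frac{\delta^2 \Bar{h}^2}{\varepsilon^2}\log(2\Bar{h}\, m) \leq \frac{\delta^2 \Bar{h}^2}{\varepsilon^2}\log(2L\Bar{h}^2).
\end{align*}
Since $\log(2L\Bar{h}^2) = \log L + 2\log(2\Bar{h}) \leq L \log(2\Bar{h})$ for $L \geq 2$ (and the inequality is trivial for $L = 1$), the stated bound $\tfrac{L\delta^2\Bar{h}^2}{\varepsilon^2}\log(2\Bar{h})$ follows.

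The main obstacle is handling the variable column heights across layers; zero-padding is what makes the $\|\cdot\|_{2,1}$-ball interpretation clean and reduces an $L$-layer problem to a single invocation of the Bartlett--Foster--Telgarsky covering lemma. A more delicate alternative would be a layer-by-layer product cover with budget allocation $\varepsilon_\ell$ satisfying $\sum_\ell \varepsilon_\ell^2 \leq \varepsilon^2$; the straightforward uniform choice $\varepsilon_\ell = \varepsilon/\sqrt{L}$ yields a suboptimal $L^2$-factor, and one has to stratify by the per-layer norms $\|C_\ell\|_{2,1}$ and allocate $\varepsilon_\ell$ adaptively (for instance, proportional to $\|C_\ell\|_{2,1}/\sqrt{\sum_k \|C_k\|_{2,1}^2}$) to exploit the $\ell_1$-structure $\sum_\ell \|C_\ell\|_{2,1} \leq \delta$ and recover the single factor of $L$.
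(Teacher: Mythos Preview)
Your proof is correct and takes a genuinely different route from the paper's. The paper proceeds layer by layer: it covers each per-layer ball $\{A_\ell':\|A_\ell'-A_\ell\|_{2,1}\le\delta\}$ separately at scale $\tau$ via \eqref{lem:sparsification_2}, forms the product cover, and then sets $\tau=\varepsilon/\sqrt{L}$ so that the Frobenius error across layers adds to $\varepsilon$. Your approach instead concatenates all (zero-padded) columns into a single $\bar h\times m$ matrix with $m=\sum_\ell h_{\ell-1}$, observes that $\|\cdot\|_{L,2,1}$ becomes $\|\cdot\|_{2,1}$ and $\|\cdot\|_F$ stays $\|\cdot\|_F$ under this isometric embedding, and applies \eqref{lem:sparsification_2} once. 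This is cleaner and in fact sharper: your bound $\frac{\delta^2\bar h^2}{\varepsilon^2}\log(2L\bar h^2)$ has only a logarithmic dependence on $L$, whereas the paper's product-cover argument, read literally, produces $\sum_{\ell}\frac{\delta^2\bar h^2}{\tau^2}\log(2\bar h^2)$ with $\tau=\varepsilon/\sqrt{L}$, i.e.\ an $L^2$ prefactor---exactly the phenomenon you flag in your closing paragraph. The concatenation trick exploits the $\ell_1$-structure of $\|\cdot\|_{L,2,1}$ automatically, without any adaptive budget allocation.

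Two very minor points. First, the identity you wrote, $\log(2L\bar h^2)=\log L+2\log(2\bar h)$, is off by $\log 2$; what you need and what holds is the inequality $2L\bar h^2\le(2\bar h)^L$ for $L\ge 2$ and $\bar h\ge 1$. Second, the $L=1$ case is not quite ``trivial'' in the sense you suggest: applying \eqref{lem:sparsification_2} directly yields $\log(2\bar h^2)$ rather than $\log(2\bar h)$, so there is a stray factor of $2$ inside the logarithm; the paper's own argument has the same slip. Neither affects anything downstream.
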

\begin{proof} Given $\mA = (A_1,...,A_L),$ define for each $\ell=1,\ldots,L,$ and any $\varepsilon > 0$,
\begin{align*}
    \mB_{\varepsilon}^\ell (A_\ell) = \left\{A_\ell ' \in \R^{h_{\ell-1} \times h_\ell} \, \big |\, \|A_{\ell} ' - A_\ell\|_{2,1} \leq \varepsilon \right\}.
\end{align*}
Since a location shift does not change the covering number, \eqref{lem:sparsification_2} gives
\begin{align*}
    \log \mN( \tau,  \mB_{\delta}^\ell (A_\ell), \|\cdot\|_F ) \leq \frac{\delta^2 \Bar{h}^2}{\tau^2} \log (2 \Bar{h}) =: \log (N_\ell (\tau)),
\end{align*}
for $\tau > 0$. For each $\ell$, we now consider a covering set of $\mB_{\delta}^\ell(A_\ell)$ denoted by $\{A_{\ell,j}\}_{j=1}^{N_\ell(\tau)}$. We argue that $\{(A_{1,j_1},...,A_{L,j_L})\}_{j_1,...,j_L = 1}^{N_1(\tau),...,N_L(\tau)}$ are the centers of a $\sqrt{L} \tau$-covering of $\mB_\delta(\{\mA\}).$ Indeed, for any $\mA' \in \mB_\delta(\{\mA\})$, there exists $j_1,...,j_L$ such that
\begin{align*}
    \|\mA' -  (A_{1,j_1},...,A_{L,j_L})\|_F^2 = \sum_{\ell = 1}^L \|A_\ell ' - A_{\ell,j_\ell}\|_F^2 \leq L \tau^2.
\end{align*}
Since the cardinality of $\{(A_{1,j_1},...,A_{L,j_L})\}_{j_1,...,j_L = 1}^{N_1(\tau),...,N_L(\tau)}$ is bounded by $\prod_{\ell = 1}^L N_\ell (\tau)$, taking $\tau=\varepsilon/\sqrt{L}$ yields the claim.
\end{proof}

\begin{lemma}\label{lem:covering_neighbour}
    For any $\varepsilon \in (0, (L \Bar{h})^{1/2}]$, we have
    \begin{align*}
        \log \mN (\varepsilon, \mfB_{K,\delta}, \|\cdot\|_F) \leq C_\mu d \log \Big(\frac K{\varepsilon}\Big) + \frac{ 4 L \delta^2 \Bar{h}^2}{\varepsilon^2} \log (2 \Bar{h}).
    \end{align*}
\end{lemma}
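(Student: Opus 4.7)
The plan is to combine a low-dimensional net of each population minimum $\mu_k$ with the single-point ball covering of Lemma~\ref{lem:covering_neighbour_point}, joined by the triangle inequality in $\|\cdot\|_{L,2,1}$. The bound is built from two quantitatively separate pieces: a ``low-dimensional'' contribution of size $\sim d\log(K/\varepsilon)$ coming from covering the minima themselves, and a ``high-dimensional'' contribution of size $\sim L\delta^2 \Bar h^2 \varepsilon^{-2}\log(2\Bar h)$ coming from each ball.

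First, for every $k\in\{1,\dots,K\}$ I would produce an $\varepsilon$-net $\mN_k\subseteq \mu_k$ in the $\|\cdot\|_{L,2,1}$ metric. Since $\|\cdot\|_{L,2,1}\leq \sqrt{L\Bar h}\,\|\cdot\|_F$ (Lemma~\ref{lem:norm_F_L21}), any $(\varepsilon/\sqrt{L\Bar h})$-net of $\mu_k$ in $\|\cdot\|_F$ is automatically an $\varepsilon$-net in $\|\cdot\|_{L,2,1}$; applying the hypothesis \eqref{eq.d_def} therefore gives
\begin{align*}
|\mN_k|\leq c_\mu \bigl(\sqrt{L\Bar h}/\varepsilon\bigr)^d.
\end{align*}
The range restriction $\varepsilon\in(0,(L\Bar h)^{1/2}]$ in the statement ensures $|\mN_k|\geq 1$ so this net is non-trivial.

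Second, for any $\mA\in \mB_\delta(\mu_k)$ choose $\mA^\star\in \mu_k$ with $\|\mA-\mA^\star\|_{L,2,1}\leq \delta$ and $\mA^\star_{k,j}\in \mN_k$ with $\|\mA^\star-\mA^\star_{k,j}\|_{L,2,1}\leq \varepsilon$; the triangle inequality yields $\mA\in \mB_{\delta+\varepsilon}(\{\mA^\star_{k,j}\})$, so
\begin{align*}
\mfB_{K,\delta}\;\subseteq\;\bigcup_{k=1}^K\;\bigcup_{\mA^\star_{k,j}\in\mN_k}\mB_{\delta+\varepsilon}\bigl(\{\mA^\star_{k,j}\}\bigr).
\end{align*}
Each enlarged single-point ball is then covered in the Frobenius metric at resolution $\varepsilon$ by Lemma~\ref{lem:covering_neighbour_point}:
\begin{align*}
\log\mN\bigl(\varepsilon, \mB_{\delta+\varepsilon}(\{\mA^\star_{k,j}\}), \|\cdot\|_F\bigr)\leq \frac{L(\delta+\varepsilon)^2\Bar h^2}{\varepsilon^2}\log(2\Bar h).
\end{align*}
One may assume $\varepsilon\leq \delta$ (otherwise the claimed bound already follows from the net count alone, since the second term dominates trivially), so $(\delta+\varepsilon)^2\leq 4\delta^2$ and the correct leading constant $4$ is recovered.

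Finally, a union bound over the $K$ minima and over the net points, followed by taking logarithms, gives
\begin{align*}
\log\mN(\varepsilon,\mfB_{K,\delta},\|\cdot\|_F)\;\leq\;\log K+d\log\!\bigl(c_\mu^{1/d}\sqrt{L\Bar h}/\varepsilon\bigr)+\frac{4L\delta^2 \Bar h^2}{\varepsilon^2}\log(2\Bar h),
\end{align*}
which yields the stated inequality after absorbing $\log c_\mu$ and the $\tfrac{d}{2}\log(L\Bar h)$ overhead into the $\mu$-dependent constant $C_\mu$. I expect the main bookkeeping obstacle to be precisely this absorption step together with the factor-of-$4$ accounting: all nontrivial content is in the single-point covering (already proved) and in the $\|\cdot\|_F$-to-$\|\cdot\|_{L,2,1}$ conversion for the net of $\mu_k$, while the rest is a clean union bound plus triangle inequality.
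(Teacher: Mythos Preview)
Your proposal is correct and follows essentially the same two-level covering strategy as the paper: net each minimum $\mu_k$, enlarge via the triangle inequality so that every point of $\mB_\delta(\mu_k)$ lands in a single-point $\|\cdot\|_{L,2,1}$-ball around a net point, then invoke Lemma~\ref{lem:covering_neighbour_point} and union-bound. The only cosmetic difference is where the $\|\cdot\|_F$-to-$\|\cdot\|_{L,2,1}$ conversion is applied: you pre-shrink the Frobenius net of $\mu_k$ by $\sqrt{L\bar h}$ to obtain an $\|\cdot\|_{L,2,1}$-net and then get radius $\delta+\varepsilon$, whereas the paper keeps the Frobenius $\varepsilon$-net of $\mu$ and performs the norm conversion inside the triangle inequality to reach radius $\delta+\sqrt{L\bar h}\,\varepsilon\le 2\delta$; both routes produce the same $4\delta^2$ factor and the same absorption of constants into $C_\mu$.
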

\begin{proof} It is enough to consider the case $K=1.$ In this case, $\mfB_{K,\delta}=\mB_\delta(\mu).$ We now show that
\begin{align}
    &\log \mN (\varepsilon, \mB_\delta(\mu), \|\cdot\|_F)  \notag  \\
    &\leq \log \mN(\varepsilon, \mu, \|\cdot\|_F) + \sup_{\mA \in \mu} \log \mN(\varepsilon, \mB_{2\delta}(\{\mA\}), \|\cdot\|_F). \notag 
\end{align}
Note that the second term depends on a $2\delta$-neighbourhood of $\mA$.
For $\varepsilon > 0$, we set $N_1 = N_1(\varepsilon):=\mN(\varepsilon, \mu, \|\cdot\|_F)$ and $N_2 = N_2(\varepsilon):=\sup_{\mA \in \mu}\mN(\varepsilon, \mB_{2\delta}(\{\mA\}), \|\cdot\|_F).$ 
Let $\{\mA_j\}_{j=1}^{N_1(\varepsilon_1)}$ denote the centers of the $\varepsilon$-balls covering $\mu.$
Also, for each $\mA_j$, let $\{\mA_{j,k}\}_{k=1}^{N_2(\varepsilon_2)}$ contain the centers of the $\varepsilon$-balls covering $\mB_{2\delta}(\mA_j).$

Let $\mA \in \mB_\delta(\mu)$ be arbitrary.
We denote its projection on $\mu$ by $\Pi_\mu(\mA)$, and write $\mA_j$ for its nearest element in the covering set of $\mu,$ that is, $j \in \argmin_{j' = 1,...,N_1} \|\mA_{j'} - \Pi_\mu(\mA)\|_{F}$.
For $\varepsilon \in (0, \delta/\sqrt{L\Bar{h}} ]$, we obtain
\begin{align*}
    \|\mA - \mA_j\|_{L,2,1}
    &\leq \|\mA - \Pi_\mu(\mA)\|_{L,2,1} + \|\Pi_\mu(\mA) - \mA_j\|_{L,2,1} \\
    & \leq \delta + \sqrt{L\Bar{h}}  \|\Pi_\mu(\mA) - \mA_j\|_F \leq \delta + \sqrt{L \Bar{h} } \varepsilon \leq 2 \delta,
\end{align*}
by the fact $\mA \in \mB_{2\delta}(\{\mA_j\})$ and Lemma \ref{lem:norm_F_L21}.
Then, there exists $\mA_{j,k}$ from the covering set of $\mB_{2\delta}$ such that $\|\mA - \mA_{j,k}\| \leq \varepsilon$. Together with Lemma \ref{lem:covering_neighbour_point}, we obtain the result.
\end{proof}

We now study the set of neural networks functions with parameters in $\mB_\delta(\mu)$. More precisely, to bound the Rademacher complexity it is natural to work with vectors of function values.
For the set of  $n$ inputs $X_{1:n} = (X_1,...,X_n)$, define
\begin{align*}
    \mF(\mB_{\delta}(\mu)\mid X_{1:n}) := \{(f(X_1),...,f(X_n)): f \in \mF(\mB_{\delta}(\mu))\}
\end{align*}
and recall that $\|\cdot\|_s$ denotes the spectral norm.
\begin{proposition} \label{prop:covering}
    Assume that the activation function is $1$-Lipschitz. Let $s_\ell := \sup_{\mA \in \mB_\delta(\mu)} \|A_\ell\|_{s} $ and $S := \prod_{\ell=1}^L s_\ell$.
    Then, we have
        \begin{align*}
        &\log\mN(\varepsilon, \mF(\mfB_{K,\delta}\mid X_{1:n}),\|\cdot\|_F ) \\
        &\leq C_\mu d \log\Big(\frac {K}{\varepsilon}\Big)  + \frac{4 n h_0 S^2 L^2 \delta^2 \log (2 \Bar{h}^2) }{\varepsilon^2}.
    \end{align*}
\end{proposition}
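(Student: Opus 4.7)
The plan is to turn a covering of $\mfB_{K,\delta}$ in parameter space into a covering of the vectors $(f(X_i;\mA))_{i=1}^n \in \R^n$, matching the two summands produced by Lemma \ref{lem:covering_neighbour} with the two summands in the claim. First, I would reduce to a single population minimum by noting $\mfB_{K,\delta}=\bigcup_{k=1}^K \mB_\delta(\mu_k)$, so that an overall $\varepsilon$-covering is assembled as a union of per-component coverings; this contributes the $\log K$ that merges with the first summand in the form $C_\mu d \log(K/\varepsilon)$.

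Next, I would fix a parameter scale $\tau>0$ and invoke Lemma \ref{lem:covering_neighbour} to obtain a $\tau$-covering $\{\mA^{(j)}\}$ of $\mfB_{K,\delta}$ in $\|\cdot\|_F$, whose log-cardinality is at most $C_\mu d \log(K/\tau) + (4L \delta^2 \bar h^2/\tau^2)\log(2\bar h^2)$. The central step is then to transfer parameter closeness to function closeness. Using the $1$-Lipschitz activation and the spectrum-based recursion sketched around \eqref{ineq:outline_covering}, induction on $\ell$ yields
\begin{equation*}
    \bigl\| z_\ell(X;\mA) - z_\ell(X;\mA^{(j)}) \bigr\|_2 \leq \sum_{\ell'=1}^\ell \Bigl( \prod_{\ell''=\ell'+1}^\ell \|A_{\ell''}\|_s \Bigr) \|A_{\ell'}-A_{\ell'}^{(j)}\|_s \, \|z_{\ell'-1}(X;\mA^{(j)})\|_2,
\end{equation*}
where $z_\ell(X;\mA)$ denotes the layer-$\ell$ output and $\|z_{\ell'-1}(X;\mA^{(j)})\|_2 \leq \sqrt{h_0} \prod_{\ell''<\ell'} s_{\ell''}$ because $\|X\|_2 \leq \sqrt{h_0}$. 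Taking $\ell = L$, using $\|A_\ell-A_\ell^{(j)}\|_s \leq \|A_\ell-A_\ell^{(j)}\|_F$, and summing over the $n$ inputs yields a bound of the form $\bigl\|(f(X_i;\mA))_i - (f(X_i;\mA^{(j)}))_i\bigr\|_F \lesssim \sqrt{n h_0}\, S\, L\, \tau$. Choosing $\tau = \varepsilon/(\sqrt{n h_0}\, S\, L)$ makes $\{\mA^{(j)}\}$ induce an $\varepsilon$-covering of $\mF(\mfB_{K,\delta}\mid X_{1:n})$; substituting this $\tau$ back into the log-cardinality of Lemma \ref{lem:covering_neighbour} produces the two claimed terms, with the $n h_0 S^2 L^2$ factor in the second summand arising directly from $1/\tau^2$.

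The main obstacle is getting the depth and width factors as sharp as advertised, namely $L^2$ without an extra $\bar h^2$ factor in the second summand. A naive application of the recursion together with $\sum_\ell \|A_\ell - A_\ell^{(j)}\|_F / s_\ell \leq L\|\mA-\mA^{(j)}\|_F$ produces an extra $L$ or $\bar h$. To avoid this, as in \cite{bartlett2017spectrally}, I would not cover each $A_\ell$ in isolation but instead apply Lemma \ref{lem:sparsification} to the data-dependent map $A_\ell \mapsto X_{\ell-1} A_\ell^\top$ acting on the actual layer-$(\ell-1)$ outputs of the $n$ samples, choosing per-layer scales $\tau_\ell \propto s_\ell/S$ so that the errors propagated through the remaining $L-\ell$ layers balance. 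With this layerwise scheduling inserted into the construction of $\{\mA^{(j)}\}$, the remaining bookkeeping (union bound over $K$, collection of logarithmic factors through $\log(2\bar h^2)$) is routine and completes the proof.
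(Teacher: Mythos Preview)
Your proposal is correct and follows essentially the same route as the paper: a two-level covering where one first covers the minima $\mu_k$ (producing the $C_\mu d\log(K/\varepsilon)$ term) and then, centered at each covering point, applies the data-dependent layerwise covering of \cite{bartlett2017spectrally} via Lemma~\ref{lem:sparsification} to the products $A_\ell F_{\mA_{1:\ell-1}}(\mathbf X)$ rather than to the weight matrices in isolation. One small correction: for the propagated errors to actually balance, the per-layer radii should be $\varepsilon_\ell \propto 1/\prod_{\ell'>\ell} s_{\ell'}$ (the paper takes $\varepsilon_\ell=\varepsilon/(L\prod_{\ell'>\ell} s_{\ell'})$), not $\tau_\ell\propto s_\ell/S$; with your choice the error after propagation is $\tau_\ell\prod_{\ell'>\ell}s_{\ell'}\propto 1/\prod_{\ell'<\ell}s_{\ell'}$, which is not layer-independent.
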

\begin{proof} Introduce the reference parameter $\mM = (M_1,...,M_L) \in \R^D$ and denote the input data matrix by $\mathbf{X} = (X_1,...,X_n) \in \R^{h_0 \times n}.$ Define now the neural network function with matrix input as
\begin{align*}
    F_\mA(\mathbf{X}) &= A_L \sigma A_{L-1} \cdots A_2 \sigma A_1 \mathbf{X} \\
    &= \big(F_\mA(X_1),...,F_\mA(X_n)\big) \in \R^{h_L \times n}.
\end{align*}
and, for $k_1,...,k_L \geq 0$, consider the following hypothesis set
\begin{align*}
    &\mG\big(k_1,...,k_L\mid \mM, X_{1:n}\big) \\
    &:= \left\{ F_\mA(\mathbf{X}) \mid \|A_j - M_j\|_{2,1} \leq k_j, j = 1,...,L \right\}.
\end{align*}
We also define $\Bar{k} = \max_\ell k_\ell$ and $s_\ell := \sup_{A_\ell: \|A_j - M_j\|_{2,1} \leq k_j} \|A_\ell\|_{s}$.

Firstly, we derive a bound for the covering number of $\mG(k_1,...,k_L\mid \mM, X_{1:n})$ for given $\mM,$ by showing that for all sufficiently small $\varepsilon > 0$,
\begin{align}
    &\log \mN\big(\varepsilon, \mG(k_1,...,k_L\mid \mM, X_{1:n}), \|\cdot\|_F\big) \notag \\
    &\leq n h_0 \frac{L^3\bar{k}^2  S^2 }{ \varepsilon^2} \log (2\Bar{h}^2). \label{ineq:cover_ineq1}
\end{align}
We define the sub-tuple of parameter matrices as $\mA_{1:\ell} := (A_1,...,A_\ell)$ and $F_{A_{1:\ell}}$ denotes the network consisting of the first $\ell$ layers of $F_\mA$.

To show the covering bound, we define
\begin{align*}
    \mF_\ell(M_\ell,k_\ell) := \left\{A_\ell F_{\mA_{1:\ell-1}}(\mathbf{X}),  \|A_\ell- M_\ell\|_{2,1} \leq k_\ell \right\}
\end{align*}
and show by induction that
\begin{align}
    &\log \mN(\varepsilon, \mG(k_1,...,k_L\mid \mM, X_{1:n}), \|\cdot\|_F) \notag \\
    &\leq \sum_{\ell=1}^L \sup_{\mA_{1:\ell-1}} \log \mN\left(\varepsilon_\ell,  \mF_\ell(M_\ell,k_\ell), \|\cdot\|_F \right) \label{ineq:covering1},
\end{align}
with values $\varepsilon_1,...,\varepsilon_L > 0$ that depend on $\varepsilon$.
To derive \eqref{ineq:covering1}, we consider for each $\ell=1,...,L-1$ a covering $\{A_{\ell,j}\}_{j=1}^{N_\ell}$ of the set $\{A_\ell F_{\mA_{1:\ell-1}}(\mathbf{X}) \mid  \|A_\ell- M_\ell\|_{2,1} \leq k_\ell \}$ with corresponding covering number
\begin{align*}
    N_\ell = \sup_{\mA_{1:\ell-1}: \|A_j - M_j\|_{2,1} \leq k_j} \mN(\varepsilon_\ell , \mF_\ell(M_\ell,k_\ell), \|\cdot\|_F).
\end{align*}
Now, we show that the vectors
\begin{align}
    A_{L,j_L} \sigma(A_{L-1,j_{L-1}} \cdots  \sigma(A_{1,j_1} \textbf{X}) \cdots ), \label{ineq:compose}
\end{align}
for all combinations of $j_\ell = 1,...,N_\ell$ and all $\ell = 1,...,L$ are the centers of a covering for $\mG(k_1,...,k_L \mid  \mM, X_{1:n})$. Observe that the number of elements in the covering is $\prod_{\ell = 1}^L N_\ell$.

To find the radius of the covering,
fix arbitrary $(A_1,...,A_L).$ Then, there exists a tuple $(j_1,...,j_L)$ such that by applying the triangle inequality iteratively for each layer and the $1$-Lipschitz property of the activation function $\sigma$,
\begin{align}
    &\Big\|A_{L} \sigma(A_{L-1} \cdots  \sigma(A_{1} \textbf{X}) \cdots ) \notag \\
    & \qquad - A_{L,j_L} \sigma(A_{L-1,j_{L-1}} \cdots  \sigma(A_{1,j_1} \textbf{X}) \cdots ) \Big\|_F \notag \\
    &\leq \Big\|A_{L} \sigma(A_{L-1} \cdots  \sigma(A_{1} \textbf{X}) \cdots ) \notag \\
    & \qquad - A_{L} \sigma(A_{L-1,j_{L-1}} \cdots  \sigma(A_{1,j_1} \textbf{X}) \cdots ) \Big\|_F \notag \\
    &\quad + \Big\|A_{L,j_L} \sigma(A_{L-1} \cdots  \sigma(A_{1} \textbf{X}) \cdots )  \notag \\
    & \qquad \quad  - A_{L,j_L} \sigma(A_{L-1} \cdots  \sigma(A_{1} \textbf{X}) \cdots ) \Big\|_F \notag \\
    &\leq \|A_L\|_{s} \Big\|\sigma(A_{L-1} \cdots  \sigma(A_{1} \textbf{X}) \cdots ) \notag  \\
    & \qquad - \sigma(A_{L-1,j_{L-1}} \cdots  \sigma(A_{1,j_1} \textbf{X}) \cdots ) \Big\|_F  +  \varepsilon_L \notag \\
    & \leq s_L \Big\|A_{L-1} \sigma(A_{L-2} \cdots  \sigma(A_{1} \textbf{X}) \cdots )  \notag \\
    & \qquad - A_{L-1,j_L} \sigma(A_{L-2,j_{L-1}} \cdots  \sigma(A_{1,j_1} \textbf{X}) \cdots ) \Big\|_F + \varepsilon_L \notag \\
    & \leq \varepsilon_L + s_L \varepsilon_{L-1} + s_L s_{L-1} \varepsilon_{L-2} + \cdots + (s_L s_{L-1} \cdots s_1) \varepsilon_1 \notag \\
    & \leq  \sum_{\ell = 1}^L  \varepsilon_\ell  \left(\prod_{\ell' = \ell + 1}^{L} s_{\ell'} \right), \notag 
\end{align}
where we set $\prod_{\ell' = L + 1}^{L} s_{\ell'} = 1$ in the final upper bound. From this inequality, we obtain that \eqref{ineq:compose} forms a $\sum_{\ell = 1}^L  \varepsilon_\ell (\prod_{\ell' = \ell + 1}^{L} s_{\ell'} )$-covering. Set $c_\ell := 1/(L \Pi_{\ell' = \ell + 1}^{L} s_{\ell'})$ and $\varepsilon_\ell := c_\ell\varepsilon.$ Using \eqref{ineq:covering1} and Lemma \ref{lem:sparsification}, we obtain 
\begin{align*}
    &\log \mN(\varepsilon, \mG(k_1,...,k_L\mid  \mM, X_{1:n}), \|\cdot\|_F)\\
    &\leq \sum_{\ell=1}^L \sup_{\mA_{1:\ell-1}} \log \mN\left(c_\ell \varepsilon,  \mF_\ell(M_\ell,k_\ell), \|\cdot\|_F \right)\\
    &\leq \sum_{\ell=1}^L \sup_{\mA_{1:\ell-1}} \frac{k_\ell^2 \|F_{\mA_{1:\ell-1}}(\textbf{X}) \|_F^2 }{(c_\ell \varepsilon)^2} \log (2\Bar{h}^2).
\end{align*}
For any $j=1,...,L$, the norm $\| F_{\mA_{1:\ell-1}}(\textbf{X})\|_s$ can be bounded iteratively by
\begin{align*}
    &\| F_{\mA_{1:\ell-1}}(\textbf{X})\|_s \leq \| F_{\mA_{1:\ell-1}}(\textbf{X})\|_F \notag \\
    &\leq \|A_{\ell-1}\|_s \|F_{\mA_{1:\ell-2}}(\textbf{X})\|_F\leq \cdots \leq \|\textbf{X}\|_F \prod_{\ell'=1}^{\ell-1} s_{\ell'},
\end{align*}
using again the $1$-Lipschitz continuity of $\sigma$.
Combining the bound with the definition of $c_\ell$, we finally obtain that
\begin{align}
    &\log \mN(\varepsilon, \mG(k_1,...,k_L\mid  \mM, X_{1:n}), \|\cdot\|_F) \notag \\
    &\leq \sum_{\ell=1}^L \frac{k_\ell^2 \|\textbf{X}\|_F^2 \prod_{\ell'=1}^{\ell-1} s_{\ell'}^2 }{(c_\ell \varepsilon)^2} \log (2\Bar{h}^2). \label{ineq:bound_f_covering}
\end{align}
Together with $\|\textbf{X}\|_F = ({ \sum_{i=1}^n \|X_i\|_2^2})^{1/2} \leq \sqrt{n} \sqrt{h_0}$, we obtain \eqref{ineq:cover_ineq1}. 

Finally, we derive a covering bound for $\mN(\varepsilon, \mF(\mfB_{K,\delta} \mid X_{1:n}), \|\cdot\|_F)$, by following the proof of Lemma \ref{lem:covering_neighbour}. We study the case $K=1$.
For $\varepsilon' > 0$, we set $N'_1 = N'_1(\varepsilon'):=\mN(\varepsilon_1', \mu, \|\cdot\|_F)$ and write $\{\mA_j\}_{j=1}^{N_1'}$ for the $\varepsilon'$-covering set for $\mu$.
Also, for $j=1,...,N'_1$, we set  $N'_2 = N'_2(\varepsilon'):=\mN(\varepsilon', \mG(k_1,...,k_L \mid \mA_j, X_{1:n}), \|\cdot\|_F)$ and write $\{\mA_{j,k}\}_{k=1}^{N_2'}$ for the $\varepsilon'$-covering set of $\mF(\mB_{2\delta}(\{\mA_j\}) \mid \mA_j, X_{1:n})$.

We show that $\{\mA_{j,k}\}_{j,k=1}^{N'_1, N'_2}$ is a covering set of $\mF(\mfB_{K,\delta} \mid X_{1:n})$.
Fix $\mA \in \mB_\delta(\mu)$, and pick $\mA_j$ from the covering set $\{\mA_j\}_{j=1}^{N_1'}$ such that $j = \argmin_{j' =1,...,N_1'} \|\mA_{j'} - \Pi_{\mu}(\mA)\|$.
From the proof of Lemma {\ref{lem:covering_neighbour}}, we know that $\mA \in \mB_{2\delta}(\{\mu_j\}).$
Hence, we obtain
\begin{align*}
    &\log \mN(\varepsilon, \mF(\mB_\delta(\mu) \mid X_{1:n}), \|\cdot\|_F)\\
    &\leq \log \mN(\varepsilon, \mu, \|\cdot\|_F) \\
    & \quad + \sup_{\mA \in \mu} \log \mN(\varepsilon, \mF(\mB_{2\delta}(\mu) \mid X_{1:n}), \|\cdot\|_F).
\end{align*}
Combining this with the inequality in \eqref{ineq:bound_f_covering} yields the statement.
\end{proof}

\section{Proof of Generalization Bound (Theorem \ref{thm:bound_gen})}

This proof combines the lower bound on the stagnation probability (Theorem \ref{thm:stay_prob_regularized}), the entropy bound (Proposition \ref{prop:covering}), and the standard empirical process technique based on the Rademacher complexity (summarised in \cite{gine2016mathematical}, for example).
For the proof, we define the empirical Rademacher complexity of the function class $\mF$ as
\begin{align*}
    \fR_n(\mF) := \Ep_{u_{1:n} | X_{1:n}}\left[\sup_{f \in \mF} \frac{1}{n} \sum_{i=1}^n u_i f(X_i) \right],
\end{align*}
where $u_i,i=1,...,n$ are independent Rademacher variables taking values $1$ or $-1$ with probability $1/2.$
\begin{proof}[Proof of Theorem \ref{thm:bound_gen}]
By Theorem \ref{thm:stay_prob_regularized}, $\mA_t \in \mfB_{K,\delta}$ holds for all $t \geq \bar{t}$ with some $\bar{t}$ and probability at least $p^*$. We now analyze the output of the SGD algorithm $\mA_T$ for $T \geq \bar{t}$ and can thus assume $\mA_T \in \mfB_{K,\delta}$.

We bound the generalization gap as usual. With probability $1-\varepsilon$, 
\begin{align*}
    &R( \mA_T) - R_n(\mA_T)\\
    & \leq \sup_{\mA \in \mfB_{K,\delta}} R(\mA) - R_n(\mA) \\
    &\leq 2 \fR_n( \{\ell (\cdot, f(\cdot, \mA)) \mid \mA \in \mfB_{K,\delta}\} ) + 3B\sqrt{ \frac{\log (1/\varepsilon)}{2n}} \\
    & \leq 2 \fR_n( \mF(\mfB_{K,\delta}) ) + 3B\sqrt{ \frac{\log (1/\varepsilon)}{2n}}\\
    & = 2 \fR_n( \mF(\mfB_{K,\delta} \mid X_{1:n}) ) + 3B\sqrt{ \frac{\log (1/\varepsilon)}{2n}},
\end{align*}
where the last inequality follows from the contraction inequality (Theorem 3.2.1 in \cite{gine2016mathematical}) associated with the Lipschitz continuity of $\ell$, which is implied by the differentiability setting of $\ell$ described in Section \ref{sec:setting_supervised_learning}. 

The empirical Rademacher complexity is bounded by the metric entropy bound of $\mfB_{K,\delta}$. 
The standard argument based on Dudley's integral (e.g. Theorem 5.22 in \cite{wainwright2019high} and Lemma A.5 in \cite{bartlett2017spectrally}) and Proposition \ref{prop:covering} yield
\begin{align*}
    &\fR_n( \mF(\mB_{\delta} (\mu) \mid X_{1:n}) ) \\
    & \lesssim \underbrace{ \frac{1}{\sqrt{n}} \int_0^{B} \sqrt{C_\mu d \log\Big(\frac K{\varepsilon'}\Big)} \, d \varepsilon' }_{=:T_1} \\
    &\quad + \underbrace{ \inf_{\alpha > 0} \left\{ \frac{4\alpha}{ \sqrt{n}}  + \frac{12}{n} \int_\alpha^{\sqrt{n}} \sqrt{\frac{ n h_0 S^2 L^2 \delta^2 \log (2 \Bar{h}^2) }{(\varepsilon')^2}} \, d \varepsilon' \right\}}_{=: T_2}.
\end{align*}
To bound the first term $T_1,$ integration by parts and some calculations give
\begin{align*}
    T_1 
    &= \sqrt{\frac{C_\mu d}{n}} \int_0^B \left( B \sqrt{\log K} + \int_{\infty}^{\sqrt{\log(1/B)}} y \exp(-y^2) (-2y) dy \right) \\
    &\leq \sqrt{\frac{C_\mu d}{n}} (B \sqrt{\log K} + \sqrt{B} + \sqrt{\pi}). 
\end{align*}
For the second term $T_2$, we follow the calculation in Lemma A.8 in \cite{bartlett2017spectrally} and obtain
\begin{align}
    T_2 &= \inf_{\alpha > 0} \left\{ \frac{4\alpha}{ \sqrt{n}}  + \frac{ 12 \sqrt{ h_0 S^2 L^2 \delta^2 \log (2 \Bar{h}^2) }}{\sqrt{n}} \log (\sqrt{n}/\alpha) \right\}  \notag \\
    &\lesssim \frac{SL\delta \log^{1/2}\Bar{h}\log n}{\sqrt{n}}. \label{ineq:bound_constant_mu0}
\end{align}
Finally, we have
\begin{align*}
    &\fR_n\big(\mF(\mfB_{K,\delta} \mid X_{1:n}) \big) \leq C_{\mu,h_0,B} \frac{\sqrt{d \log K} + SL \delta \log^{1/2}\Bar{h} \log n}{\sqrt{n}},
\end{align*}
proving the assertion. 
\end{proof}

\section{Proof of Optimization Bound (Theorem \ref{thm:bound_gen_var})}

The following lemma establishes a connection between local essential convexity and gradients.

\begin{theorem}[Generalization Gap Bound in Neighbourhood] \label{thm:gen1}
    Suppose Assumption \ref{asmp:essential_convex},  \ref{asmp:local_smoothness}, and \ref{asmp:sample_gradient_noise} hold.
    If, for a positive constant $c$, the learning rate is $\eta_t = c ( C_{\nabla R} L \Bar{h}^2)^{-1} / t$ for all $t \geq \Bar{t},$ with some $c > 0$, then, conditionally on the event $\{\mA_t \in \mB_\delta(\mu): t \in [\Bar{t}, T]\}$,
    we obtain 
    \begin{align*}
        \Ep_{\mE}[R(\mA_T) - R(\Pi_\mu(\mA_T))] \leq \frac{C_{\Bar{t}, \xi, \mu, \nabla R} \delta^2 (1 + m^{-1} + n^{-1/2})}{T}.
    \end{align*}
\end{theorem}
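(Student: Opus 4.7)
My plan is to track the squared distance to the projection $\Delta_t := \|\mA_t - \Pi_\mu(\mA_t)\|_F^2$ under the stagnation event $\mE$, derive a one-step contractive recursion
\begin{align*}
\Ep[\Delta_{t+1}\mid \mF_t,\mE] \leq (1-c_0\eta_t)\Delta_t + C\eta_t^2\big(1 + m^{-1} + \Xi_{n,\varepsilon}^2\big),
\end{align*}
solve the recursion under the prescribed schedule $\eta_t \asymp 1/t$ to obtain $\Ep_\mE[\Delta_T] = O(\delta^2(1+m^{-1}+n^{-1/2})/T)$, and finally convert the distance bound into the claimed loss-gap bound via the smoothness descent inequality at the local minimum, where $\nabla R(\Pi_\mu(\mA_T)) = 0$.

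To derive the recursion I would first use $\Delta_{t+1}\leq \|\mA_{t+1}-\Pi_\mu(\mA_t)\|_F^2$, substitute the update \eqref{def:reguralized_SGD}, and take conditional expectation over $U_{t+1}$ using its mean-zero property and bounded covariance. Splitting $\nabla R_n = \nabla R + (\nabla R_n - \nabla R)$ produces the drift term $-2\eta_t\langle \nabla R(\mA_t),\mA_t-\Pi_\mu(\mA_t)\rangle$, a curvature term $\eta_t^2\|\nabla R(\mA_t)\|_F^2$, a Gaussian-noise term $O(\eta_t^2 D c_G/m)$, and cross terms involving $\nabla R_n - \nabla R$ that I would control uniformly on $\mB_\delta(\mu)$ by combining Assumption \ref{asmp:sample_gradient_noise} with the covering bound of Lemma \ref{lem:covering_neighbour} --- exactly the $\Xi_{n,\varepsilon}$ mechanism already used in the proof of Lemma \ref{lem:local_attraction} --- giving $\Xi_{n,\varepsilon}^2 = \tilde O(n^{-1/2})$. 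The key structural step is then to turn the drift into a contraction: Assumption \ref{asmp:essential_convex} applied to the line segment from $\mA_t$ to $\Pi_\mu(\mA_t)$ yields $\langle \nabla R(\mA_t), \mA_t - \Pi_\mu(\mA_t)\rangle \geq R(\mA_t)-R(\Pi_\mu(\mA_t))$, while Assumption \ref{asmp:local_smoothness} with $\nabla R(\Pi_\mu(\mA_t))=0$ gives $\|\nabla R(\mA_t)\|_F\leq C_{\nabla R}\sqrt{L}\Bar{h}\|\mA_t-\Pi_\mu(\mA_t)\|_F$; combining the two in the manner already used in Lemma \ref{lem:local_attraction} produces the coercivity bound $\langle \nabla R(\mA_t), \mA_t-\Pi_\mu(\mA_t)\rangle \geq \|\nabla R(\mA_t)\|_F^2/(C_{\nabla R}\sqrt{L}\Bar{h})$. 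With the prescribed $\eta_t = c(C_{\nabla R}L\Bar{h}^2)^{-1}/t$, the $\eta_t^2\|\nabla R(\mA_t)\|_F^2$ term is absorbed and a standard Polyak-type analysis (cf. Theorem 1 of \cite{karimi2016linear}) unrolls the recursion starting from $\Delta_{\bar t}\leq \delta^2$ to give $\Ep_\mE[\Delta_T]\lesssim \delta^2(1+m^{-1}+n^{-1/2})/T$; smoothness at the minimum then yields $R(\mA_T)-R(\Pi_\mu(\mA_T)) \leq \tfrac12 C_{\nabla R}\sqrt{L}\Bar{h}\,\Delta_T$, and taking $\Ep_\mE$ closes the argument.

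The hard part will be the structural coercivity step. Essential convexity is strictly weaker than essential strong convexity, so securing a contraction factor $(1-c_0\eta_t)$ with $c_0$ independent of the position $\mA_t$ is delicate: one must show that the one-dimensional convex restriction $r\mapsto R(\Pi_\mu(\mA_t)+r(\mA_t-\Pi_\mu(\mA_t)))$ inherits enough quadratic behaviour from Assumption \ref{asmp:local_smoothness} that the coercivity lower bound on $\langle \nabla R(\mA_t),\mA_t-\Pi_\mu(\mA_t)\rangle$ in terms of $\|\nabla R(\mA_t)\|_F^2$ holds with a constant independent of $\Delta_t$, and that the $\sqrt{L}\Bar{h}$-type factors arising from the mismatch between $\|\cdot\|_{L,2,1}$ (used by $\Pi_\mu$) and $\|\cdot\|_F$ (used by smoothness and the Gaussian-noise analysis) accumulate only into the prefactor $C_{\Bar{t},\xi,\mu,\nabla R}$ and not into the $1/T$ rate itself.
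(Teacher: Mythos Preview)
Your proposal is correct and follows essentially the same route as the paper: track $\|\mA_t-\Pi_\mu(\mA_t)\|_F^2$, expand the update, control $\nabla R_n-\nabla R$ uniformly via the $\Xi_{n,\varepsilon}$ mechanism, bound the drift term to obtain a $(1-c\eta_t)$-type contraction, and unroll under $\eta_t\asymp 1/t$ before converting to the loss gap by smoothness at the minimum. The only cosmetic difference is that the paper bounds the drift directly via Lemma~\ref{lem:smooth_convex_lower} to get the contraction factor $(1-3\eta_t L^{1/2}\bar h C_{\nabla R})$ on $\Delta_t$ and then unrolls with the explicit choice $\eta_t=(2t+1)/(2L^{1/2}\bar h C_{\nabla R}(t+1)^2)$ and the substitution $t^2\Ep_\mE[\Delta_t]$, whereas you phrase the same step through the coercivity bound $\langle\nabla R(\mA_t),\mA_t-\Pi_\mu(\mA_t)\rangle\geq\|\nabla R(\mA_t)\|_F^2/(C_{\nabla R}\sqrt{L}\bar h)$ already used in Lemma~\ref{lem:local_attraction}; both reductions rely on the same ingredients and yield the same recursion.
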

\begin{proof} For $\mA_t \in \mB_\delta(\mu),  t=1,..,T$, we obtain using \eqref{def:reguralized_SGD}, 
\begin{align*}
    &\|\mA_{t+1} - \Pi_\mu(\mA_{t+1})\|_F^2 \\
    & \leq \|\mA_{t+1} - \Pi_\mu(\mA_{t})\|_F^2  \\
    &= \|\mA_{t} - \eta_t \nabla {R}_n(\mA_t) + m^{-1/2} U_{t+1}(\mA_t) -  \Pi_\mu(\mA_{t})\|_F^2 \\
    &= \|\mA_{t}  -  \Pi_\mu(\mA_{t})\|_F^2 + 2\eta_t^2 \| \nabla {R}(\mA_t) + m^{-1/2} U_{t+1}(\mA_t)\|_F^2 \\
    & \quad + 2\eta_t^2 \|\nabla R_n(\mA_t) - \nabla R(\mA_t)\|_F^2  \\
    & \quad - 2 \eta_t \langle \mA_{t}  -  \Pi_\mu(\mA_{t}), \nabla {R}(\mA_t)\rangle  \\
    &\quad - 2 \eta_t \langle \mA_{t}  -  \Pi_\mu(\mA_{t}),  m^{-1/2} U_{t+1}(\mA_t) \rangle.
\end{align*}
By Lemma \ref{lem:smooth_convex_lower} with Assumption \ref{asmp:local_smoothness}, we have 
\begin{align*}
    &\langle \nabla R(\mA_t), \mA - \Pi_{\mu} (\mA_t) \rangle \\
    &\leq R(\mA_t) - R(\Pi(\mA_t)) + \frac 12 C_{\nabla R} L^{1/2} \Bar{h} \|\mA_t - \Pi_\mu(\mA_t)\|_F^2. 
\end{align*}
This inequality and Assumption \ref{asmp:local_smoothness} allow us to bound $\|\mA_{t} - \Pi_\mu(\mA_{t})\|_F^2$ further by 
\begin{align}
    &\|\mA_{t+1} - \Pi_\mu(\mA_{t+1})\|_F^2 \notag \\ 
    &\leq \|\mA_{t}  -  \Pi_\mu(\mA_{t})\|_F^2 + 2\eta_t^2 \| \nabla {R}(\mA_t) + m^{-1/2} U_{t+1}(\mA_t)\|_F^2  \notag \\
    &\quad + 2 \eta_t^2 \Xi_{n,\varepsilon}^2 - 2\eta_t L^{1/2} \Bar{h} C_{\nabla R}  \|\mA_{t} - \Pi_\mu(\mA_{t})\|_F^2 \notag  \\
    & \quad - 2 \eta_t \langle \mA_{t}  -  \Pi_\mu(\mA_{t}),  m^{-1/2} U_{t+1}(\mA_t) \rangle  \notag \\
    &= (1 -  3\eta_t L^{1/2} \Bar{h} C_{\nabla R}) \|\mA_{t}  -  \Pi_\mu(\mA_{t})\|_F^2 \notag \\
    & \quad  + \eta_t^2 \| \nabla {R}(\mA_t)\|_F^2 + \eta_t^2 m^{-1}\| U_{t+1}(\mA_t)\|_F^2 2 \eta_t^2 \Xi_{n,\varepsilon}^2 \notag  \\
    & \quad  + 2\eta_t m^{-1/2} \langle\nabla {R}(\mA_t), U_{t+1}(\mA_t)\rangle \notag  \\
    & \quad - 2 \eta_t \langle \mA_{t}  -  \Pi_\mu(\mA_{t}),  m^{-1/2} U_{t+1}(\mA_t) \rangle. \notag
\end{align}
Taking expectation, applying Cauchy-Schwartz inequality, and using the inequality \eqref{ineq:bound_nabla_R}, 
we find
\begin{align}
    &\Ep_\mE [\|\mA_{t+1} - \Pi_\mu(\mA_{t+1})\|_F^2] \notag \\
    &\leq (1 -  3\eta_t L^{1/2} \Bar{h} C_{\nabla R}) \Ep_\mE[\|\mA_{t}  -  \Pi_\mu(\mA_{t})\|_F^2] \notag \\
    & \quad + \eta_t^2 C_{\nabla R}^2 L \Bar{h}^2\delta^2 + \eta_t^2 m^{-1} \Ep_\mE [\| U_{t+1}(\mA_t)\|_F^2] + \eta_t^2 \Xi_{n}^2, \label{ineq:opt_bound}
\end{align}
where $\Xi_{n} = \int_0^\infty  \Xi_{n,\varepsilon} \mone\{\varepsilon \geq t\}dt$.
Further, we bound the last term by
\begin{align*}
    \sup_{\mA\in \mB_\delta(\mu)}\Ep[ \| U_{t+1}(\mA) \|_{F}^2] &\leq \sup_{\mA\in \mB_\delta(\mu)} \mathrm{Trace}(G(\mA)) \\
    &\leq D c_G  \\
    &\leq L \Bar{h}^2 c_G.
\end{align*}
Here, we recall that $G(\mA)$ is the covariance of $U_{t+1}(\mA)$, $c_G$ is the upper bound of its largest eigenvalues, and $D$ is the total number of network parameters. In particular, $D \leq L \Bar{h}^2$ holds.
Together with \eqref{ineq:opt_bound}, we find
\begin{align*}
    &\Ep_\mE[\|\mA_{t+1} - \Pi_\mu(\mA_{t+1})\|_F^2]\\
    &\leq (1 -  3\eta_t L^{1/2} \Bar{h} C_{\nabla R}) \Ep_\mE[\|\mA_{t}  -  \Pi_\mu(\mA_{t})\|_F^2] \\
    & \quad + \eta_t^2 C_{\nabla R}^2 L \Bar{h}^2\delta^2 + \eta_t^2 m^{-1} L \Bar{h}^2 c_G+ 2 \eta_t^2 \Xi_{n}^2.
\end{align*}
Setting $\eta_t = (2t+1)/(2 L^{1/2} \bar{h} C_{\nabla R} (t+1)^2)$ and substituting this expression in the first term of the upper bound in the previous display gives
\begin{align*}
    &\Ep_\mE[\|\mA_{t+1} - \Pi_\mu(\mA_{t+1})\|_F^2] \\
    &\leq \frac{t^2}{(t+1)^2} \Ep_\mE[\|\mA_{t}  -  \Pi_\mu(\mA_{t})\|_F^2] \\
    & \quad + \eta_t^2 C_{\nabla R}^2 L \Bar{h}^2\delta^2 + \eta_t^2 m^{-1} L \Bar{h}^2 c_G+ 2 \eta_t^2 \Xi_{n}^2.
\end{align*}
Multiply with $(t+1)^2$ on both sides and putting $\Delta_t := t^2 \Ep_\mE[\|\mA_{t}  -  \Pi_\mu(\mA_{t})\|_F^2]$, we obtain
\begin{align*}
    \Delta_{t+1} \leq \Delta_t + (t+1)^2 \eta_t^2  (C_{\nabla R}^2 L \Bar{h}^2\delta^2 + m^{-1} L \Bar{h}^2 c_G + 2  \Xi_{n}^2).
\end{align*}
Applying this formula iteratively yields
\begin{align*}
    \Delta_T \leq \Delta_{\Bar{t}} + \sum_{t = \Bar{t}}^{T-1} (t+1)^2 \eta_t^2  (C_{\nabla R}^2 L \Bar{h}^2\delta^2 + m^{-1} L \Bar{h}^2 c_G + 2  \Xi_{n}^2).
\end{align*}
Substituting $\eta_t = (2t+1)/(2 L \bar{h}^2 C_{\nabla R} (t+1)^2)$ implies that
\begin{align*}
    &\Ep_\mE[\|\mA_T - \Pi_\mu(\mA_{T})\|_F^2] \\
    &\leq \frac{(\Bar{t}+1)^2}{(T+1)^2}\Ep_\mE[\|\mA_{\Bar{t}} - \Pi_\mu(\mA_{\Bar{t}})\|_F^2] \\
    & \quad + \frac{1}{(T+1)^2} \sum_{t = \Bar{t}}^{T-1} (t+1)^2 \eta_t^2  (C_{\nabla R}^2 L \Bar{h}^2\delta^2 + m^{-1} L \Bar{h}^2 c_G + 2  \Xi_{n}^2)\\
    &\leq \frac{(\Bar{t}+1)^2}{(T+1)^2} \delta^2 \\
    & \quad + \frac{C_{\nabla R}^2 L \Bar{h}^2\delta^2 + m^{-1} L \Bar{h}^2 c_G + 2C_{\xi, \mu} n^{-1/2} L^2 \delta^2 \Bar{h}^4}{(T+1)^2  } \\
    & \qquad \times \sum_{t=\Bar{t}}^{T-1} \frac{1}{L^2 \Bar{h}^4 C_{\nabla R}^2} \cdot \frac{(2t+1)^2}{(t+1)^2} \\
    &\leq \frac{C_{\Bar{t}}}{(T+1)^2} \delta^2 \\
    & \quad  + 4\frac{C_{\nabla R}^2 L \Bar{h}^2\delta^2 + m^{-1} L \Bar{h}^2 c_G + 2C_{\xi, \mu} n^{-1/2} L^2 \delta^2 \Bar{h}^4}{(T+1)^2} \cdot \frac{T - \Bar{t} - 1}{L^2 \Bar{h}^4 C_{\nabla R}^2} \\
    &\leq \frac{C_{\Bar{t}, \xi, \mu, \nabla R} \delta^2 (1 + m^{-1} + n^{-1/2})}{T}.
\end{align*}
The second inequality follows from Lemma \ref{lem:norm_F_L21} and the fact $\mA_{\Bar{t}} \in \mB_\delta(\mu)$; the third inequality follows from $\frac{(2t+1)^2}{(t+2)^2} \leq \frac{(2t+2)^2}{(t+2)^2} \leq 2^2$.
By the Lipschitz continuity of $R(\mA)$, we obtain the statement.
\end{proof}

\begin{lemma} \label{lem:gradient_convergence_entropy}
    Suppose Assumption \ref{asmp:local_smoothness} and \ref{asmp:sample_gradient_noise} hold. For any $\delta'> 0$, we have with probability $1-\delta'$,
    \begin{align*}
        \sup_{\mA \in \mB_\delta(\mu)}\| \nabla R_n(\mA) - \nabla R(\mA)\|_{F}\leq  \frac{C_{\xi,\mu}L \delta \Bar{h}^2}{n^{1/4} ({\delta'})^{1/2}} \sqrt{ \log\Big(\frac{\Bar{h}D}{\delta'}\Big)}.
    \end{align*}
\end{lemma}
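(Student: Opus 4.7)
The plan is to establish this uniform gradient concentration by combining the pointwise sub-Gaussian control of Assumption \ref{asmp:sample_gradient_noise} with the $\|\cdot\|_F$-covering number of $\mB_\delta(\mu)$ obtained in Lemma \ref{lem:covering_neighbour}. This is essentially an instance of the empirical-process framework used in Theorem 1 of \cite{mei2018landscape}, specialised here to the local neighbourhood of a population minimum rather than to the whole parameter space.

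First I would record the pointwise deviation. For any fixed $\mA\in\mB_\delta(\mu)$, Assumption \ref{asmp:sample_gradient_noise} bounds the moment-generating function of each $\nabla\ell(Y_i,f(X_i;\mA))-\nabla R(\mA)$, so a standard Chernoff argument applied to the sum gives a sub-Gaussian tail for $\|\nabla R_n(\mA)-\nabla R(\mA)\|_F$ with variance proxy of order $\xi^2 D/n$, where $D\leq L\bar h^2$. Next, I would cover $\mB_\delta(\mu)$ in $\|\cdot\|_F$ at scale $\varepsilon$ using Lemma \ref{lem:covering_neighbour}, producing a net of cardinality
$$\log N(\varepsilon)\leq C_\mu d\log(1/\varepsilon)+\frac{4L\delta^2\bar h^2}{\varepsilon^2}\log(2\bar h).$$
A union bound over the net, paired with the pointwise tail, controls the supremum of $\|\nabla R_n(\cdot)-\nabla R(\cdot)\|_F$ on the net. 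To lift this back to all of $\mB_\delta(\mu)$, I would use the Lipschitz continuity of $\nabla R$ provided by Assumption \ref{asmp:local_smoothness}, plus the analogous uniform Lipschitz control on $\nabla R_n$ (which holds on the compact $\Theta$ with bounded inputs); this produces an $O(\varepsilon)$ discretisation error times a Lipschitz constant bounded by $\sqrt{L}\bar h\,C_{\nabla R}$.

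The central technical step is the balance between the $\varepsilon^{-2}$ growth of the covering term and the $\varepsilon$ growth of the discretisation error. Setting the two contributions equal gives $\varepsilon^4\asymp L\delta^2\bar h^2/n$, so $\varepsilon\asymp n^{-1/4}$ up to polynomial factors in $L,\delta,\bar h$. This is the origin of the $n^{-1/4}$ appearing in the statement. I expect this to be the main obstacle, since one has to carry all the dimensional factors through the balancing step so that they aggregate exactly into the claimed prefactor $L\delta\bar h^2$ and into the log factor $\log(\bar h D/\delta')$, without introducing stray fractional powers of $L$ or $\bar h$. Some room for slack in the constants is available through the $C_{\xi,\mu}$ factor.

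Finally, to convert the high-probability bound (which naturally carries $\sqrt{\log(1/\delta')}$ from the union bound over the net) into the form of the lemma, which has $(\delta')^{-1/2}$ as a multiplicative factor, I would integrate the Gaussian tail to get a second-moment bound
$$\Ep\Big[\sup_{\mA\in\mB_\delta(\mu)}\|\nabla R_n(\mA)-\nabla R(\mA)\|_F^2\Big]\lesssim \frac{C_\xi L^2\delta^2\bar h^4\log(\bar h D)}{\sqrt n},$$
and then apply Markov's inequality at level $\delta'$. This Markov step accounts for the $(\delta')^{-1/2}$ dependence in the statement, while the residual $\delta'$ inside the log originates from the union-bound term $\log(1/\delta')$ that is absorbed into the overall logarithmic factor during the integration. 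Combining these steps produces the claimed bound.
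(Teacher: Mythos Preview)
Your proposal is correct in its core structure and matches the paper's proof: both use the three-term decomposition into the discretisation error for $\nabla R$, the empirical fluctuation at net points, and the discretisation error for $\nabla R_n$, control the net term via Assumption~\ref{asmp:sample_gradient_noise} plus a union bound, and plug in the covering bound of Lemma~\ref{lem:covering_neighbour} so that the dominant entropy term scales like $L\delta^2\bar h^2\varepsilon^{-2}$. The $n^{-1/4}$ rate arises in both arguments from balancing this $\varepsilon^{-2}$ growth against the $O(\varepsilon)$ Lipschitz discretisation.

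The one genuine difference is how the $(\delta')^{-1/2}$ factor is produced. You propose to first obtain a high-probability bound, integrate the sub-Gaussian tail to a second-moment bound on the full supremum, and then apply Markov's inequality at level $\delta'$. The paper instead lets the covering radius depend on $\delta'$ from the outset: it applies Markov's inequality separately to the two discretisation terms $B_1$ and $B_3$ (whose suprema are bounded by $\varepsilon C_{\nabla R}\sqrt{L}\bar h$), forcing the threshold $\varepsilon_1\gtrsim \varepsilon/\delta'$, and then sets $\varepsilon\asymp(\delta')^{1/2}n^{-1/4}$ so that this threshold and the concentration threshold coincide at order $n^{-1/4}(\delta')^{-1/2}$. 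Your route is arguably cleaner conceptually and in fact yields slightly smaller dimensional prefactors than the stated bound (so the lemma still follows); the paper's route avoids the tail-integration step but relies on the somewhat unusual device of applying Markov to quantities that are already deterministically bounded. Either argument is valid here.
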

\begin{proof} 
The proof combines the proof of Theorem 1 in \cite{mei2018landscape} with the entropy bound in Lemma \ref{lem:covering_neighbour_point}.
Let $N=N(\varepsilon) = \mN(\varepsilon, \mB_\delta(\mu), \|\cdot\|_{F})$ and write $\{\mA_1,...,\mA_N\} \subset \mB_\delta(\mu)$ for the centers of the covering.
Set $j(\mA) := \argmin_{j = 1,...,N} \| \mA - \mA_j\|_{F}$. 
Also, let $V_\varepsilon$ be an $\varepsilon$-cover of the unit Euclidean ball in $\R^D$.

Decompose the distance $\| \nabla R_n(\mA) - \nabla R(\mA)\|_F$ into the following three terms via
\begin{align*}
    \| \nabla R_n(\mA) - \nabla R(\mA)\|_{F} & \leq \|\nabla R(\mA) - \nabla R(\mA_{j(\mA)})\|_{F} \\
    & \quad + \| \nabla R(\mA_{j(\mA)}) -  \nabla R_n(\mA_{j(\mA)})\|_{F} \\
    & \quad + \|\nabla R_n(\mA_{j(\mA)}) - \nabla R_n(\mA)\|_{F} \\
    &=: B_1(\mA) + B_2(\mA) + B_3(\mA).
\end{align*}
We now bound suprema over $\mB_\delta(\mu)$ of the terms $B_1(\mA),B_2(\mA)$ and $B_3(\mA)$.

Firstly, applying Lemma \ref{lem:gradient_lip}, we bound $\sup_{\mA \in \mB_\delta(\mu)} B_1(\mA)$ by
\begin{align*}
    &\sup_{\mA \in \mB_\delta(\mu)} B_1(\mA) \\
    &\leq \sup_{\mA \in \mB_\delta(\mu)} \|\mA - \mA_{j(\mA)}\|_F \cdot \sup_{\mA \in \mB_\delta(\mu)} \frac{\| \nabla R(\mA) - \nabla R(\mA_{j(\mA)}) \|_F}{\|\mA - \mA_{j(\mA)}\|_F} \\
    & \leq \varepsilon C_{\nabla R}L^{1/2} \Bar{h}.
\end{align*}
Using this inequality and Markov's inequality, for $\varepsilon_1 > 0$, we obtain
\begin{align*}
    \Pr\left( \sup_{\mA \in \mB_\delta(\mu)} B_1 (\mA) \geq \varepsilon_1 \right) &\leq \varepsilon_1^{-1} \Ep\left[\sup_{\mA \in \mB_\delta(\mu)} B_1 (\mA) \right] \\
    &\leq \frac{\varepsilon C_{\nabla R}L^{1/2} \Bar{h}}{\varepsilon_1}.
\end{align*}
Hence, if $\varepsilon_1 \geq 2 \varepsilon C_{\nabla R}L^{1/2} \Bar{h}/ \delta'$, we obtain $\Pr(B_1 \geq \varepsilon_1) \leq \delta'/2$.

Secondly, we similarly bound the term $\sup_{\mA \in \mB_\delta(\mu)}B_3(\mA)$ by
\begin{align*}
    &\Pr \left(\sup_{\mA \in \mB_\delta(\mu)}B_3(\mA) \geq \varepsilon_1 \right) \\
    &\leq \varepsilon_1^{-1} \Ep \left[\sup_{\mA \in \mB_\delta(\mu)} \frac{\|\nabla R_n(\mA_{j(\mA)}) - \nabla R_n(\mA)\|_{F}}{\|\mA - \mA_{j(\mA)}\|_{F}} \right. \\
    & \qquad \qquad \left. \times \sup_{\mA \in \mB_\delta(\mu)}\|\mA - \mA_{j(\mA)}\|_{F} \right]\\
    &\leq \frac{\varepsilon C_{\nabla R}L^{1/2} \Bar{h}}{\varepsilon_1}.
\end{align*}
Hence $\Pr(\sup_{\mA \in \mB_\delta(\mu)}B_3(\mA) \geq \varepsilon_1) = 0,$ whenever $\varepsilon_1 \geq 2 \varepsilon C_{\nabla R}L^{1/2} \Bar{h}/ \delta'$.

To bound finally $\sup_{\mA \in \mB_\delta(\mu)}B_2(\mA)$, we apply Lemma 3 in \cite{mei2018landscape} and the union bound.
Let $V_{1/2}$ be an $(1/2)$-cover with $|V_{1/2}| \leq 6^D$. By Assumption \ref{asmp:sample_gradient_noise}, the random variable $\rho_{j,i} := (\nabla R(\mA_{j}) - \nabla \ell (Z_i, \mA_{j}))$ is $\xi^2$-sub-Gaussian. It follows that $\frac{1}{n} \sum_{i=1}^n \rho_{j,i} = \frac{1}{n} \sum_{i=1}^n  (\nabla R(\mA_{j}) - \nabla \ell (Z_i, \mA_{j}))$ is $\xi^2/n$-sub-Gaussian. Thus,
\begin{align*}
    &\Pr\left(\sup_{\mA \in \mB_\delta(\mu)}B_2(\mA) \geq \varepsilon_1\right) \\
    &\leq \Pr \left( \sup_{j=1,...,N(\varepsilon)} \ \sup_{v \in V_{1/2}} \ \big\langle v,\nabla R(\mA_{j}) - \nabla R_n(\mA_{j})  \big\rangle \geq \frac{\varepsilon_1}{2} \right) \\
    & \leq N(\varepsilon) D \log 6 \sup_{j = 1,...,N(\varepsilon)} \sup_{v \in V_{1/2}} \Pr \left(  \left\langle v, \frac{1}{n} \sum_{i=1}^n  \rho_{j,i} \right\rangle \geq \frac{\varepsilon_1}{2} \right) \\
    & \leq \exp\big(\log (N(\varepsilon) +D \log 6)\big) \exp \left(- \frac{n \varepsilon_1^2 }{16  \xi^2 }\right).
\end{align*}
Choosing
\begin{align*}
    \varepsilon_1^2 \geq \frac{16 \xi^2 
    (\log (N(\varepsilon) + D \log 6) + \log(2/\delta'))}{n},
\end{align*}
implies $\Pr(B_2  \geq \varepsilon_1) \leq \delta'/2$.

By combining all the bounds on $\sup_\mA B_1(\mA)$, $\sup_\mA B_2(\mA)$ and $\sup_\mA B_3(\mA)$, we finally obtain that for
\begin{align*}
    a &\geq \max \left\{ \frac{3 \varepsilon C_{\nabla R}L^{1/2} \Bar{h}}{ \delta'} , \sqrt{\frac{144 \xi^2 
    (\log (N(\varepsilon) + D \log 6) + \log(2/\delta'))}{n}}\right\} \\
    &=: \underline{a},
\end{align*}
we have
\begin{align*}
    &\Pr \left(\sup_{\mA \in \mB_\delta(\mu)}\| \nabla R_n(\mA) - \nabla R(\mA)\|_{F} \geq a \right)\\
    & \leq \Pr \left(\sup_{\mA \in \mB_\delta(\mu)} B_1(\mA) \geq \frac{a}{3} \right) + \Pr \left(\sup_{\mA \in \mB_\delta(\mu)} B_2(\mA) \geq \frac{a}{3} \right) \\
    & \quad + \Pr \left(\sup_{\mA \in \mB_\delta(\mu)} B_3(\mA) \geq \frac{a}{3} \right) \\
    & \leq \frac{\delta'}{2} + 0 + \frac{\delta'}{2} = \delta'.
\end{align*}
Applying Lemma \ref{lem:covering_neighbour} with $K=1$ and setting $\varepsilon = (\delta')^{1/2}/(n^{1/4} 3C_{\nabla R} L^{1/2} \Bar{h})$, we can bound the threshold $\underline{a}$ by
\begin{align*}
    \underline{a} &\leq  \max \left\{ \frac{3 \varepsilon C_{\nabla R}L^{1/2} \Bar{h}}{ \delta'} , \right. \\
    & \left. \qquad \sqrt{\frac{144 \xi^2 
    (4 L \delta^2 \Bar{h}^2 \varepsilon^{-2} \log (2\Bar{h}) + \log (D \log 6) + \log(2/\delta'))}{n}}\right\} \\
    & \leq C_{\xi, \nabla R}\frac{ L \delta \Bar{h}^2 \log^{1/2}(2 \Bar{h}) (\delta')^{-1/2} + \log D + \log (2/\delta')}{n^{1/4}}.
\end{align*}
The assertion follows.
\end{proof}

\section{Additional Results}

\begin{proof}[Proof of Lemma \ref{lem:relu}]
Suppose $X$ follows a uniform distribution on $[-1,1]$ and $Y=1,$ almost surely. Then, $P_{X,Y}$ is a product measure.
Denote the scalar parameters of each layer by $a_1 = A_1$ and $a_2 = A_2$.
Now, we have
\begin{align*}
    R(\mA) &= \int_{-1}^1 (1 - a_2 \sigma(a_1 x))^2 dx \\
    &= 1 - 2 a_2 \int_{-1}^1 \sigma(a_1 x)dx + a_2^2 \int_{-1}^1 \sigma(a_1 x)^2dx \\
    &= 1 - 2 a_2[\max\{0,x\}^2/(2a_1)]_{-a_1}^{a_1} \\
    & \quad + a_2^2[\max\{0, x \}^3/(3a_1)]_{-a_1}^{a_1}\\
    &= 1 - a_1 a_2 + a_1^2 a_2^2 / 3.
\end{align*}
Observe that
\begin{align*}
    \nabla_{1,1,1} R(\mA) = -a_2 + \tfrac{2}{3} a_1 a_2^2,
\end{align*}
and
\begin{align*}
    \nabla_{2,1,1} R(\mA) =- a_1 + \tfrac{2}{3} a_1^2 a_2.
\end{align*}
We only consider the case $ \nabla_{1,1,1} R(\mA)$, because swapping $a_1$ and $a_2$ in $ \nabla_{1,1,1} R(\mA)$ will result in $ \nabla_{2,1,1} R(\mA)$. For $\mA = (a_1,a_2)$ and $\mA' = (a_1',a_2')$, we have
\begin{align*}
    &|\nabla_{1,1,1} R(\mA) - \nabla_{1,1,1} R(\mA')| \\
    &\leq |a_2 - a_2'| + \tfrac{2}{3}(|a_1-a_1'| |a_2^2| + |a_1'|(|a_2||a_2-a_2'| + |a_2'||a_2 - a_2'|)) \\
    & \leq |a_2 - a_2'| + \tfrac{2}{3}B^2 |a_1-a_1'| + \tfrac{4}{3}B^2 |a_2 - a_2'|\\
    &= \left\{(\tfrac{2}{3}B^2)^2|a_1 - a_1'|^2 + (1 + \tfrac{4}{3}B^2)^2|a_2 - a_2'|^2 \right. \\
    & \qquad \left.+ 2(\tfrac{2}{3}B^2)(1 + \tfrac{4}{3}B^2)|a_1 - a_1'||a_2 - a_2'| \right\}^2 \\
    & \leq \left\{\{(\tfrac{2}{3}B^2)^2 + (\tfrac{2}{3}B^2)(1 + \tfrac{4}{3}B^2) \}|a_1 - a_1'|^2 \right. \\
    & \qquad \left. + \{(1 + \tfrac{4}{3}B^2)^2 + (\tfrac{2}{3}B^2)(1 + \tfrac{4}{3}B^2)\}|a_2 - a_2'|^2 \right\}^2\\
    & \leq \left\{(1 + \tfrac{4}{3}B^2)^2 + (\tfrac{2}{3}B^2)(1 + \tfrac{4}{3}B^2)\right\}^2 \left\{|a_1 - a_1'|^2 + |a_2 - a_2'|^2\right\}^2 \\
    &= \left\{(1 + \tfrac{4}{3}B^2)^2 + (\tfrac{2}{3}B^2)(1 + \tfrac{4}{3}B^2)\right\}^2 \|\mA - \mA'\|_F.
\end{align*}
Since $\sqrt{(1 + \tfrac{4}{3}B^2)^2 + (\tfrac{2}{3}B^2)(1 + \tfrac{4}{3}B^2)} \leq \sqrt{2} \sqrt{(1 + \tfrac{4}{3}B^2)^2} = \sqrt{2}(1 + \tfrac{4}{3}B^2),$ the proof is complete.
\end{proof}

\begin{lemma} \label{lem:norm_F_L21}
    For any $\mA \in \R^D$, we have
    \begin{align*}
        \|\mA\|_F \leq \|\mA\|_{L,2,1}, \mbox{~and~}\|\mA\|_{L,2,1} \leq \sqrt{L \Bar{h}} \|\mA\|_F.
    \end{align*}
\end{lemma}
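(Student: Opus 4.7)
The plan is to prove both inequalities by a two-level application of the elementary vector-norm comparison $\|v\|_2 \leq \|v\|_1 \leq \sqrt{d}\,\|v\|_2$ for $v \in \R^d$, used once within each layer and once across layers. The key observation is that if one writes $v_\ell := (\|A_{\ell,:,1}\|_2,\dots,\|A_{\ell,:,h_{\ell-1}}\|_2) \in \R^{h_{\ell-1}}$, then by definition $\|A_\ell\|_{2,1} = \|v_\ell\|_1$ while $\|A_\ell\|_F^2 = \|A_\ell\|_{2,2}^2 = \|v_\ell\|_2^2$; similarly, setting $w := (\|A_1\|_F,\dots,\|A_L\|_F) \in \R^L$ one has $\|\mA\|_F = \|w\|_2$.

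For the first inequality $\|\mA\|_F \leq \|\mA\|_{L,2,1}$, I would apply $\|v_\ell\|_2 \leq \|v_\ell\|_1$ at each layer to get $\|A_\ell\|_F \leq \|A_\ell\|_{2,1}$, and then apply the same comparison to the layer-level vector $w$ to obtain
\begin{align*}
\|\mA\|_F = \|w\|_2 \leq \|w\|_1 = \sum_{\ell=1}^L \|A_\ell\|_F \leq \sum_{\ell=1}^L \|A_\ell\|_{2,1} = \|\mA\|_{L,2,1}.
\end{align*}
For the second inequality $\|\mA\|_{L,2,1} \leq \sqrt{L\bar{h}}\,\|\mA\|_F$, I would apply Cauchy--Schwarz within each layer, using $h_{\ell-1} \leq \bar{h}$, to get $\|A_\ell\|_{2,1} = \|v_\ell\|_1 \leq \sqrt{h_{\ell-1}}\,\|v_\ell\|_2 \leq \sqrt{\bar{h}}\,\|A_\ell\|_F$, and then apply Cauchy--Schwarz across layers:
\begin{align*}
\|\mA\|_{L,2,1} = \sum_{\ell=1}^L \|A_\ell\|_{2,1} \leq \sqrt{\bar{h}}\sum_{\ell=1}^L \|A_\ell\|_F \leq \sqrt{\bar{h}}\cdot\sqrt{L}\,\|w\|_2 = \sqrt{L\bar{h}}\,\|\mA\|_F.
\end{align*}

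There is no real obstacle here, as the statement is purely a norm-equivalence fact. The only minor point to be careful about is whether $\|\cdot\|_{2,1}$ in the paper's definition sums Euclidean norms of columns or of rows (the formula $\|A\|_{p,q}=\|(\|A_{:,1}\|_p,\dots)\|_q$ points to columns, while the prose around \cite{bartlett2017spectrally} refers to rows); this is immaterial because both dimensions of $A_\ell$ are bounded by $\bar{h}$, so the Cauchy--Schwarz step yields the same $\sqrt{\bar{h}}$ factor in either convention.
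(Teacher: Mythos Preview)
Your proposal is correct and essentially the same as the paper's proof. The only cosmetic difference is that the paper flattens the double sum $\sum_{\ell=1}^L\sum_{j=1}^{h_{\ell-1}}\|A_{:,j}\|_2$ and applies the $\ell_2$--$\ell_1$ comparison once (with total length $\sum_\ell h_{\ell-1}\le L\bar h$), whereas you apply it in two stages (within each layer, then across layers); both routes amount to the same elementary norm-equivalence.
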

\begin{proof} By Cauchy-Schwartz inequality,
\begin{align*}
    \|\mA\|_F^2 &= \sum_{\ell=1}^{L} \sum_{j=1}^{h_{\ell-1}} \|A_{:,j}\|_2^2 \\
    &\leq \left(\sum_{\ell=1}^{L} \sum_{j=1}^{h_{\ell-1}} \|A_{:,j}\|_2\right)^2 \\
    &= \|\mA\|_{L,2,1}^2 \\
    &\leq L \Bar{h} \sum_{\ell=1}^{L}\sum_{j=1}^{h_{\ell-1}} \|A_{:,j}\|_2^2 \\
    &= \|\mA\|_F^2.
\end{align*}
\end{proof}

\begin{lemma} \label{lem:smooth_convex_lower}
Suppose Assumption \ref{asmp:local_smoothness} holds.
Then, for any $\mA, \mA' \in \mB_\delta(\mu)$ such that $\Pi_\mu(\mA) = \Pi_\mu(\mA')$, 
\begin{align*}
    R(\mA') \leq R(\mA) + \langle \mA' - \mA, \nabla R(\mA) \rangle + \frac 12\sqrt{L}\Bar{h} C_{\nabla R} \|\mA - \mA'\|_F^2.
\end{align*}
\end{lemma}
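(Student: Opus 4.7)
\textbf{Proof plan for Lemma \ref{lem:smooth_convex_lower}.} The statement is a standard quadratic upper bound (``descent lemma'') from componentwise gradient-Lipschitzness. The plan is to (i)~verify that the line segment joining $\mA$ and $\mA'$ lies inside $\mB_\delta(\mu)$, so that Assumption \ref{asmp:local_smoothness} applies along it; (ii)~write $R(\mA') - R(\mA)$ as a line integral of $\nabla R$ along the segment; and (iii)~bound the Frobenius-norm deviation of the gradient by the coordinatewise Lipschitz bound from Assumption \ref{asmp:local_smoothness}, keeping track of the total parameter count $D \le L\bar h^2$.

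For step (i), I would set $\mA_t := (1-t)\mA + t\mA'$ for $t \in [0,1]$. Since $\Pi_\mu(\mA)=\Pi_\mu(\mA')=:\mA^*$ by assumption, convexity of $\|\cdot\|_{L,2,1}$ yields
\begin{align*}
\inf_{\mA^{**}\in\mu}\|\mA_t - \mA^{**}\|_{L,2,1}
\le \|\mA_t - \mA^*\|_{L,2,1}
\le (1-t)\|\mA-\mA^*\|_{L,2,1} + t\|\mA'-\mA^*\|_{L,2,1}
\le \delta,
\end{align*}
so $\mA_t\in \mB_\delta(\mu)$ for every $t \in [0,1]$. In particular, $R$ is differentiable along the segment and the componentwise Lipschitz bound of Assumption \ref{asmp:local_smoothness} is valid at every point of it.

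For step (ii) I would apply the fundamental theorem of calculus to the function $t \mapsto R(\mA_t)$ to get
\begin{align*}
R(\mA')-R(\mA) - \langle \nabla R(\mA),\mA'-\mA\rangle
= \int_0^1 \langle \nabla R(\mA_t) - \nabla R(\mA),\, \mA'-\mA\rangle\, dt.
\end{align*}
For step (iii), Cauchy--Schwarz in the Frobenius inner product gives $|\langle \nabla R(\mA_t)-\nabla R(\mA),\mA'-\mA\rangle| \le \|\nabla R(\mA_t)-\nabla R(\mA)\|_F\,\|\mA'-\mA\|_F$. By Assumption \ref{asmp:local_smoothness}, applied componentwise and summed over the $D = \sum_{\ell=1}^L h_\ell h_{\ell-1} \le L \bar h^2$ coordinates,
\begin{align*}
\|\nabla R(\mA_t)-\nabla R(\mA)\|_F^2
\le D\, C_{\nabla R}^2 \|\mA_t - \mA\|_F^2
\le L\bar h^2\, C_{\nabla R}^2 \, t^2 \|\mA'-\mA\|_F^2,
\end{align*}
so $\|\nabla R(\mA_t)-\nabla R(\mA)\|_F \le \sqrt L\,\bar h\, C_{\nabla R}\, t\,\|\mA'-\mA\|_F$. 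Plugging this into the integral identity and computing $\int_0^1 t\, dt = 1/2$ yields the claim.

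The proof has no real obstacle: the only subtle point is step (i) (one must use the common projection $\Pi_\mu(\mA)=\Pi_\mu(\mA')$ to keep the segment inside the same $\delta$-neighborhood, since $\mB_\delta(\mu)$ need not be convex without this assumption), and the dimension-to-width conversion $\sqrt D\le \sqrt L\,\bar h$ that produces the stated constant $\tfrac12\sqrt L\,\bar h\, C_{\nabla R}$ rather than $\tfrac12 C_{\nabla R}$.
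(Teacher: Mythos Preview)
Your proposal is correct and follows essentially the same approach as the paper's proof: both use the line-integral representation $R(\mA')-R(\mA)=\int_0^1 \langle \nabla R(\mA_t),\mA'-\mA\rangle\,dt$, subtract off $\langle\nabla R(\mA),\mA'-\mA\rangle$, apply Cauchy--Schwarz, bound $\|\nabla R(\mA_t)-\nabla R(\mA)\|_F$ via the componentwise Lipschitz assumption (yielding the $\sqrt{L}\,\bar h\,C_{\nabla R}$ constant, as in Lemma~\ref{lem:gradient_lip}), and integrate $\int_0^1 t\,dt=1/2$. Your step~(i) verifying that the segment stays in $\mB_\delta(\mu)$ via the common projection is in fact more careful than the paper, which applies Assumption~\ref{asmp:local_smoothness} along the segment without explicitly checking this containment.
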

\begin{proof} Fix $\mA, \mA' \in \mB_\delta(\mu)$ such that $\Pi_\mu(\mA) = \Pi_\mu(\mA')$.
We define a surrogate function $g(t)$ with $t \in \R$ as $g(t) = R(\mA + t(\mA' - \mA))$.
By Cauchy-Schwartz inequality and Assumption \ref{asmp:local_smoothness}, we have
\begin{align*}
    g(1) &= g(0) + \int_0^1 \nabla g(t) dt\\
    &= g(0) + \int_0^1 \langle \mA' - \mA, \nabla  R(\mA + t(\mA' - \mA)) \rangle dt\\
    &=g(0) + \langle \mA' - \mA, \nabla R(\mA) \rangle \\
    & \quad + \int_0^1 \langle \mA' - \mA, \nabla  R(\mA + t(\mA' - \mA)) - \nabla R(\mA) \rangle dt \\
    & \leq g(0) + \langle \mA' - \mA, \nabla R(\mA) \rangle + \int_0^1 t \sqrt{L}\Bar{h} C_{\nabla R} \|\mA - \mA'\|_F^2 dt \\
    & = g(0) + \langle \mA' - \mA, \nabla R(\mA) \rangle + \frac 12 \sqrt{L}\Bar{h} C_{\nabla R} \|\mA - \mA'\|_F^2,
\end{align*}
proving the statement.
\end{proof}

\begin{lemma}\label{lem:gradient_lip}
    Suppose Assumption \ref{asmp:local_smoothness} holds.
    Then, with fixed  $\mu \in \{\mu_1,...,\mu_K\}$, for any $\mA, \mA' \in \mB_{\delta}(\mu)$, we have
    \begin{align*}
        \|\nabla R(\mA) - \nabla R(\mA') \|_F  \leq C_{\nabla R} \sqrt{L} \Bar{h} \|\mA -\mA' \|_F.
    \end{align*}
\end{lemma}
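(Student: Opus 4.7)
The plan is to unfold the Frobenius norm of $\nabla R(\mA) - \nabla R(\mA')$ as a sum over all partial derivatives and then apply Assumption \ref{asmp:local_smoothness} componentwise. Writing
\begin{align*}
    \|\nabla R(\mA) - \nabla R(\mA')\|_F^2 = \sum_{\ell=1}^L \sum_{j=1}^{h_\ell}\sum_{k=1}^{h_{\ell-1}} \bigl|\nabla_{\ell,j,k} R(\mA) - \nabla_{\ell,j,k} R(\mA')\bigr|^2,
\end{align*}
each summand is bounded by $C_{\nabla R}^2 \|\mA - \mA'\|_F^2$ by Assumption \ref{asmp:local_smoothness}, since $\mA, \mA' \in \mB_\delta(\mu)$.

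The number of terms in the sum equals the total parameter count $D = \sum_{\ell=1}^L h_\ell h_{\ell-1}$. Using the uniform width bound $h_\ell \leq \bar h$ for all $\ell$, this number is at most $L \bar h^2$. Thus
\begin{align*}
    \|\nabla R(\mA) - \nabla R(\mA')\|_F^2 \leq L \bar h^2 \, C_{\nabla R}^2 \, \|\mA - \mA'\|_F^2,
\end{align*}
and taking square roots yields the claimed Lipschitz inequality with constant $C_{\nabla R} L^{1/2} \bar h$.

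There is no real obstacle here: the statement is essentially a bookkeeping consequence of Assumption \ref{asmp:local_smoothness}, which already expresses a coordinate-wise Lipschitz property with Frobenius-norm displacement on the right-hand side. The only thing to be careful about is making sure the counting argument $D \leq L \bar h^2$ is written out explicitly, so that the dimension-dependent factor $\sqrt{L}\,\bar h$ in the statement is properly justified; everything else is immediate.
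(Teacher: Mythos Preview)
Your proposal is correct and follows essentially the same approach as the paper: unfold the Frobenius norm into a sum over all $D=\sum_{\ell=1}^L h_\ell h_{\ell-1}\le L\bar h^2$ partial derivatives, apply Assumption~\ref{asmp:local_smoothness} termwise, and take the square root. There is nothing to add.
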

\begin{proof} By Assumption \ref{asmp:local_smoothness},
\begin{align*}
    &\|\nabla R(\mA) - \nabla R(\mA') \|_F \\
    &= \left(\sum_{\ell=1}^L \sum_{j=1}^{h_\ell} \sum_{k=1}^{h_{\ell - 1}}|\nabla_{\ell,j,k} R(\mA) - \nabla_{\ell,j,k} R (\mA')|^2\right)^{1/2} \\
    & \leq \left(\sum_{\ell=1}^L \sum_{j=1}^{h_\ell} \sum_{k=1}^{h_{\ell - 1}} C_{\nabla R}^2 \|\mA -\mA' \|_F^2\right)^{1/2}\\
    &\leq C_{\nabla R} \sqrt{L} \Bar{h} \|\mA -\mA'\|_F.
\end{align*}
\end{proof}

\begin{proof}[Proof of Proposition \ref{prop:gap_example}]
By Lemma D.3 in \cite{zhong2017recovery}, the submatrix of the Hesse matrix $\nabla^2_{A_1} R(\mA^*)$ is strictly positive definite, hence the rank of the Hesse matrix is not less than $h^2$. Since the dimension $\mu$ is bounded by a  codimension of the linear space spanned by the eigenvectors of the Hesse matrix, we can bound the dimension of $\mu$ by $d \leq D - h^2 = h$.

To bound the quantity $S$, consider $\mA =(A_1,A_2) \in \mB_\delta(\mu).$ Due to $\|A_1^*\|_s = \|A_2^*\|_s = 1,$ we have for $j \in \{1,2\},$
\begin{align*}
    \|A_j\|_s &\leq \|A_j - A_j^*\|_s  + \|A_j^*\|_s \leq \|A_j - A_j^*\|_{2,1} + 1.
\end{align*}
Note that $\|A_2\|_{2,1} = \|A_2\|_2$ by regarding the vector $A_2$ as an $h \times 1$ matrix.
Hence we have
\begin{align*}
    S &\leq \sup_{A_1, A_2 : \|A_1 -A_1^*\|_{2,1} + \|A_2- A_2^*\|_{2,1} \leq \delta} (1 + \|A_1\|_{2,1}) (1 + \|A_2\|_{2,1})\\
    &= (1 + \delta/2)^2 \lesssim 1 + \delta^2.
\end{align*}
Finally, we study the constant $C_{C, \mu_0, B}$ in Theorem \ref{thm:bound_gen} substituting $h_0 = h$.
As shown in display \eqref{ineq:bound_constant_mu0}, this constant only affects the second term in the nominator in the bound of Theorem \ref{thm:bound_gen}, proving the statement.
\end{proof}

\section*{Acknowledgment}

We would like to thank the anonymous referees, an Associate Editor and the Editor for their constructive comments that improved the quality of this paper.

\bibliographystyle{ieeetr}
\bibliography{ref}

\begin{thebibliography}{10}

\bibitem{lecun2015deep}
Y.~LeCun, Y.~Bengio, and G.~Hinton, ``Deep learning,'' {\em Nature}, vol.~521,
  no.~7553, pp.~436--444, 2015.

\bibitem{he2016deep}
K.~He, X.~Zhang, S.~Ren, and J.~Sun, ``Deep residual learning for image
  recognition,'' in {\em IEEE Conference on Computer Vision and Pattern
  Recognition}, pp.~770--778, 2016.

\bibitem{devlin2019bert}
J.~Devlin, M.-W. Chang, K.~Lee, and K.~Toutanova, ``Bert: Pre-training of deep
  bidirectional transformers for language understanding,'' in {\em Association
  for Computational Linguistics}, pp.~4171--4186, 2019.

\bibitem{NEURIPS2020_1457c0d6}
T.~Brown, B.~Mann, N.~Ryder, M.~Subbiah, J.~D. Kaplan, P.~Dhariwal,
  A.~Neelakantan, P.~Shyam, G.~Sastry, A.~Askell, S.~Agarwal, A.~Herbert-Voss,
  G.~Krueger, T.~Henighan, R.~Child, A.~Ramesh, D.~Ziegler, J.~Wu, C.~Winter,
  C.~Hesse, M.~Chen, E.~Sigler, M.~Litwin, S.~Gray, B.~Chess, J.~Clark,
  C.~Berner, S.~McCandlish, A.~Radford, I.~Sutskever, and D.~Amodei, ``Language
  models are few-shot learners,'' in {\em Advances in Neural Information
  Processing Systems}, vol.~33, pp.~1877--1901, 2020.

\bibitem{bartlett2021deep}
P.~L. Bartlett, A.~Montanari, and A.~Rakhlin, ``Deep learning: a statistical
  viewpoint,'' {\em Acta Numerica}, vol.~30, pp.~87--201, 2021.

\bibitem{berner2021modern}
J.~Berner, P.~Grohs, G.~Kutyniok, and P.~Petersen, ``The modern mathematics of
  deep learning,'' {\em arXiv preprint arXiv:2105.04026}, 2021.

\bibitem{zhou2019understanding}
Y.~Zhou, Y.~Liang, and H.~Zhang, ``Understanding generalization error of sgd in
  nonconvex optimization,'' {\em Machine Learning}, vol.~111, no.~1,
  pp.~345--375, 2022.

\bibitem{anthony2009neural}
M.~Anthony and P.~L. Bartlett, {\em Neural network learning: theoretical
  foundations}.
\newblock Cambridge University Press, Cambridge, 1999.

\bibitem{MR4134774}
J.~Schmidt-Hieber, ``Nonparametric regression using deep neural networks with
  {R}e{LU} activation function,'' {\em The Annals of Statistics}, vol.~48,
  no.~4, pp.~1875--1897, 2020.

\bibitem{pmlr-v89-imaizumi19a}
M.~Imaizumi and K.~Fukumizu, ``Deep neural networks learn non-smooth functions
  effectively,'' in {\em Artificial Intelligence and Statistics}, vol.~89,
  pp.~869--878, PMLR, 2019.

\bibitem{imaizumi2022advantage}
M.~Imaizumi and K.~Fukumizu, ``Advantage of deep neural networks for estimating
  functions with singularity on hypersurfaces,'' {\em Journal of Machine
  Learning Research}, vol.~23, pp.~1--54, 2022.

\bibitem{suzuki2018adaptivity}
T.~Suzuki, ``Adaptivity of deep relu network for learning in besov and mixed
  smooth besov spaces: optimal rate and curse of dimensionality,'' in {\em
  International Conference on Learning Representations}, 2018.

\bibitem{ji2018risk}
Z.~Ji and M.~Telgarsky, ``Risk and parameter convergence of logistic
  regression,'' {\em arXiv preprint arXiv:1803.07300}, 2018.

\bibitem{gunasekar2018characterizing}
S.~Gunasekar, J.~Lee, D.~Soudry, and N.~Srebro, ``Characterizing implicit bias
  in terms of optimization geometry,'' in {\em International Conference on
  Machine Learning}, pp.~1832--1841, PMLR, 2018.

\bibitem{gunasekar2017implicit}
S.~Gunasekar, B.~Woodworth, S.~Bhojanapalli, B.~Neyshabur, and N.~Srebro,
  ``Implicit regularization in matrix factorization,'' in {\em Advances on
  Neural Information Processing Systems}, pp.~6152--6160, 2017.

\bibitem{gunasekar2018implicit}
S.~Gunasekar, J.~D. Lee, D.~Soudry, and N.~Srebro, ``Implicit bias of gradient
  descent on linear convolutional networks,'' {\em Advances in Neural
  Information Processing Systems}, vol.~31, pp.~9461--9471, 2018.

\bibitem{nacson2019convergence}
M.~S. Nacson, J.~Lee, S.~Gunasekar, P.~H.~P. Savarese, N.~Srebro, and
  D.~Soudry, ``Convergence of gradient descent on separable data,'' in {\em
  Artificial Intelligence and Statistics}, pp.~3420--3428, PMLR, 2019.

\bibitem{li2018algorithmic}
Y.~Li, T.~Ma, and H.~Zhang, ``Algorithmic regularization in over-parameterized
  matrix sensing and neural networks with quadratic activations,'' in {\em
  Conference On Learning Theory}, pp.~2--47, PMLR, 2018.

\bibitem{MR4134779}
J.~Schmidt-Hieber, ``Rejoinder: ``{N}onparametric regression using deep neural
  networks with {R}e{LU} activation function'','' {\em The Annals of
  Statistics}, vol.~48, no.~4, pp.~1916--1921, 2020.

\bibitem{2020arXiv200607356J}
H.~{Jin} and G.~{Mont{\'u}far}, ``{Implicit bias of gradient descent for mean
  squared error regression with wide neural networks},'' {\em arXiv e-prints},
  p.~arXiv:2006.07356, 2020.

\bibitem{neyshabur2015norm}
B.~Neyshabur, R.~Tomioka, and N.~Srebro, ``Norm-based capacity control in
  neural networks,'' in {\em Conference on Learning Theory}, pp.~1376--1401,
  2015.

\bibitem{bartlett2017spectrally}
P.~L. Bartlett, D.~J. Foster, and M.~J. Telgarsky, ``Spectrally-normalized
  margin bounds for neural networks,'' in {\em Advances in Neural Information
  Processing Systems}, pp.~6240--6249, 2017.

\bibitem{neyshabur2017exploring}
B.~Neyshabur, S.~Bhojanapalli, D.~McAllester, and N.~Srebro, ``Exploring
  generalization in deep learning,'' in {\em Advances in Neural Information
  Processing Systems}, pp.~5947--5956, 2017.

\bibitem{golowich2017size}
N.~Golowich, A.~Rakhlin, and O.~Shamir, ``Size-independent sample complexity of
  neural networks,'' in {\em Conference On Learning Theory}, pp.~297--299,
  PMLR, 2018.

\bibitem{hardt2016train}
M.~Hardt, B.~Recht, and Y.~Singer, ``Train faster, generalize better: Stability
  of stochastic gradient descent,'' in {\em International Conference on Machine
  Learning}, pp.~1225--1234, PMLR, 2016.

\bibitem{nagarajan2019uniform}
V.~Nagarajan and J.~Z. Kolter, ``Uniform convergence may be unable to explain
  generalization in deep learning,'' in {\em Advances in Neural Information
  Processing Systems}, pp.~11611--11622, 2019.

\bibitem{li2018visualizing}
H.~Li, Z.~Xu, G.~Taylor, C.~Studer, and T.~Goldstein, ``Visualizing the loss
  landscape of neural nets,'' in {\em Advances in Neural Information Processing
  Systems}, pp.~6389--6399, 2018.

\bibitem{liu2015asynchronous}
J.~Liu, S.~J. Wright, C.~R{\'e}, V.~Bittorf, and S.~Sridhar, ``An asynchronous
  parallel stochastic coordinate descent algorithm,'' {\em Journal of Machine
  Learning Research}, vol.~16, pp.~285--322, 2015.

\bibitem{polyak1963gradient}
B.~T. Polyak, ``Gradient methods for minimizing functionals,'' {\em Zhurnal
  Vychislitel'noi Matematiki i Matematicheskoi Fiziki}, vol.~3, no.~4,
  pp.~643--653, 1963.

\bibitem{mei2018landscape}
S.~Mei, Y.~Bai, and A.~Montanari, ``The landscape of empirical risk for
  nonconvex losses,'' {\em The Annals of Statistics}, vol.~46, no.~6A,
  pp.~2747--2774, 2018.

\bibitem{soudry2018implicit}
D.~Soudry, E.~Hoffer, M.~S. Nacson, S.~Gunasekar, and N.~Srebro, ``The implicit
  bias of gradient descent on separable data,'' {\em The Journal of Machine
  Learning Research}, vol.~19, no.~1, pp.~2822--2878, 2018.

\bibitem{cao2020generalization}
Y.~Cao and Q.~Gu, ``Generalization bounds of stochastic gradient descent for
  wide and deep neural networks,'' in {\em Advances in Neural Information
  Processing Systems}, vol.~32, pp.~10836--10846, 2019.

\bibitem{kuzborskij2018data}
I.~Kuzborskij and C.~Lampert, ``Data-dependent stability of stochastic gradient
  descent,'' in {\em International Conference on Machine Learning},
  pp.~2815--2824, PMLR, 2018.

\bibitem{arora2018stronger}
S.~Arora, R.~Ge, B.~Neyshabur, and Y.~Zhang, ``Stronger generalization bounds
  for deep nets via a compression approach,'' in {\em International Conference
  on Machine Learning}, pp.~254--263, PMLR, 2018.

\bibitem{suzuki2019compression}
T.~Suzuki, H.~Abe, and T.~Nishimura, ``Compression based bound for
  non-compressed network: unified generalization error analysis of large
  compressible deep neural network,'' in {\em International Conference on
  Learning Representations}, 2019.

\bibitem{li2020understanding}
J.~Li, Y.~Sun, J.~Su, T.~Suzuki, and F.~Huang, ``Understanding generalization
  in deep learning via tensor methods,'' in {\em Artificial Intelligence and
  Statistics}, pp.~504--515, PMLR, 2020.

\bibitem{russo2016controlling}
D.~Russo and J.~Zou, ``Controlling bias in adaptive data analysis using
  information theory,'' in {\em Artificial Intelligence and Statistics},
  pp.~1232--1240, PMLR, 2016.

\bibitem{xu2017information}
A.~Xu and M.~Raginsky, ``Information-theoretic analysis of generalization
  capability of learning algorithms,'' {\em Advances in Neural Information
  Processing Systems}, vol.~30, 2017.

\bibitem{pensia2018generalization}
A.~Pensia, V.~Jog, and P.-L. Loh, ``Generalization error bounds for noisy,
  iterative algorithms,'' in {\em IEEE International Symposium on Information
  Theory}, pp.~546--550, 2018.

\bibitem{bu2020tightening}
Y.~Bu, S.~Zou, and V.~V. Veeravalli, ``Tightening mutual information-based
  bounds on generalization error,'' {\em IEEE Journal on Selected Areas in
  Information Theory}, vol.~1, no.~1, pp.~121--130, 2020.

\bibitem{negrea2019information}
J.~Negrea, M.~Haghifam, G.~K. Dziugaite, A.~Khisti, and D.~M. Roy,
  ``Information-theoretic generalization bounds for sgld via data-dependent
  estimates,'' {\em Advances in Neural Information Processing Systems},
  vol.~32, 2019.

\bibitem{neu2021information}
G.~Neu, ``Information-theoretic generalization bounds for stochastic gradient
  descent,'' in {\em Conference on Learning Theory}, pp.~3526--3545, PMLR,
  2021.

\bibitem{aminian2021exact}
G.~Aminian, Y.~Bu, L.~Toni, M.~Rodrigues, and G.~Wornell, ``An exact
  characterization of the generalization error for the gibbs algorithm,'' {\em
  Advances in Neural Information Processing Systems}, vol.~34, 2021.

\bibitem{raginsky2017non}
M.~Raginsky, A.~Rakhlin, and M.~Telgarsky, ``Non-convex learning via stochastic
  gradient langevin dynamics: a nonasymptotic analysis,'' in {\em Conference on
  Learning Theory}, pp.~1674--1703, PMLR, 2017.

\bibitem{mandt2016variational}
S.~Mandt, M.~Hoffman, and D.~Blei, ``A variational analysis of stochastic
  gradient algorithms,'' in {\em International Conference on Machine Learning},
  pp.~354--363, PMLR, 2016.

\bibitem{jastrzkebski2017three}
S.~Jastrz{\k{e}}bski, Z.~Kenton, D.~Arpit, N.~Ballas, A.~Fischer, Y.~Bengio,
  and A.~Storkey, ``{Three factors influencing minima in SGD},'' {\em arXiv
  preprint arXiv:1711.04623}, 2017.

\bibitem{he2019control}
F.~He, T.~Liu, and D.~Tao, ``Control batch size and learning rate to generalize
  well: Theoretical and empirical evidence,'' in {\em Advances in Neural
  Information Processing Systems}, vol.~32, 2019.

\bibitem{kleinberg2018alternative}
B.~Kleinberg, Y.~Li, and Y.~Yuan, ``An alternative view: When does sgd escape
  local minima?,'' in {\em International Conference on Machine Learning},
  pp.~2698--2707, PMLR, 2018.

\bibitem{cheng2020stochastic}
X.~Cheng, D.~Yin, P.~Bartlett, and M.~Jordan, ``Stochastic gradient and
  langevin processes,'' in {\em International Conference on Machine Learning},
  pp.~1810--1819, PMLR, 2020.

\bibitem{mou2018generalization}
W.~Mou, L.~Wang, X.~Zhai, and K.~Zheng, ``Generalization bounds of sgld for
  non-convex learning: Two theoretical viewpoints,'' in {\em Conference on
  Learning Theory}, pp.~605--638, PMLR, 2018.

\bibitem{tzen2018local}
B.~Tzen, T.~Liang, and M.~Raginsky, ``Local optimality and generalization
  guarantees for the langevin algorithm via empirical metastability,'' in {\em
  Conference on Learning Theory}, pp.~857--875, PMLR, 2018.

\bibitem{liu2020loss}
C.~Liu, L.~Zhu, and M.~Belkin, ``Loss landscapes and optimization in
  over-parameterized non-linear systems and neural networks,'' {\em Applied and
  Computational Harmonic Analysis}, vol.~59, pp.~85--116, 2022.

\bibitem{belkin2019reconciling}
M.~Belkin, D.~Hsu, S.~Ma, and S.~Mandal, ``Reconciling modern machine-learning
  practice and the classical bias--variance trade-off,'' {\em Proceedings of
  the National Academy of Sciences}, vol.~116, no.~32, pp.~15849--15854, 2019.

\bibitem{mei2019generalization}
S.~Mei and A.~Montanari, ``The generalization error of random features
  regression: Precise asymptotics and the double descent curve,'' {\em
  Communications on Pure and Applied Mathematics}, vol.~75, no.~4,
  pp.~667--766, 2022.

\bibitem{liang2020just}
T.~Liang and A.~Rakhlin, ``Just interpolate: Kernel “ridgeless” regression
  can generalize,'' {\em The Annals of Statistics}, vol.~48, no.~3,
  pp.~1329--1347, 2020.

\bibitem{montanari2019generalization}
A.~Montanari, F.~Ruan, Y.~Sohn, and J.~Yan, ``The generalization error of
  max-margin linear classifiers: High-dimensional asymptotics in the
  overparametrized regime,'' {\em arXiv preprint arXiv:1911.01544}, 2019.

\bibitem{ba2019generalization}
J.~Ba, M.~Erdogdu, T.~Suzuki, D.~Wu, and T.~Zhang, ``Generalization of
  two-layer neural networks: An asymptotic viewpoint,'' in {\em International
  Conference on Learning Representations}, 2019.

\bibitem{dobriban2018high}
E.~Dobriban and S.~Wager, ``High-dimensional asymptotics of prediction: Ridge
  regression and classification,'' {\em The Annals of Statistics}, vol.~46,
  no.~1, pp.~247--279, 2018.

\bibitem{bartlett2020benign}
P.~L. Bartlett, P.~M. Long, G.~Lugosi, and A.~Tsigler, ``Benign overfitting in
  linear regression,'' {\em Proceedings of the National Academy of Sciences},
  2020.

\bibitem{tsigler2020benign}
A.~Tsigler and P.~L. Bartlett, ``Benign overfitting in ridge regression,'' {\em
  arXiv preprint arXiv:2009.14286}, 2020.

\bibitem{allen2019learning}
Z.~Allen-Zhu, Y.~Li, and Y.~Liang, ``Learning and generalization in
  overparameterized neural networks, going beyond two layers,'' in {\em
  Advances in Neural Information Processing Systems}, pp.~6158--6169, 2019.

\bibitem{allen2019convergence}
Z.~Allen-Zhu, Y.~Li, and Z.~Song, ``A convergence theory for deep learning via
  over-parameterization,'' in {\em International Conference on Machine
  Learning}, pp.~242--252, PMLR, 2019.

\bibitem{jacot2018neural}
A.~Jacot, F.~Gabriel, and C.~Hongler, ``Neural tangent kernel: Convergence and
  generalization in neural networks,'' in {\em Advances in Neural Information
  Processing Systems}, pp.~8571--8580, 2018.

\bibitem{li2018measuring}
C.~Li, H.~Farkhoor, R.~Liu, and J.~Yosinski, ``Measuring the intrinsic
  dimension of objective landscapes,'' in {\em International Conference on
  Learning Representations}, 2018.

\bibitem{kawaguchi2016deep}
K.~Kawaguchi, ``Deep learning without poor local minima,'' in {\em Advances in
  Neural Information Processing Systems}, pp.~586--594, 2016.

\bibitem{choromanska2015loss}
A.~Choromanska, M.~Henaff, M.~Mathieu, G.~B. Arous, and Y.~LeCun, ``The loss
  surfaces of multilayer networks,'' in {\em Artificial intelligence and
  statistics}, pp.~192--204, PMLR, 2015.

\bibitem{kawaguchi2020elimination}
K.~Kawaguchi and L.~Kaelbling, ``Elimination of all bad local minima in deep
  learning,'' in {\em International Conference on Artificial Intelligence and
  Statistics}, pp.~853--863, PMLR, 2020.

\bibitem{safran2016quality}
I.~Safran and O.~Shamir, ``On the quality of the initial basin in overspecified
  neural networks,'' in {\em International Conference on Machine Learning},
  pp.~774--782, PMLR, 2016.

\bibitem{milne2019piecewise}
T.~Milne, ``Piecewise strong convexity of neural networks,'' {\em Advances in
  Neural Information Processing Systems}, vol.~32, pp.~12973--12983, 2019.

\bibitem{leclerc2020two}
G.~Leclerc and A.~Madry, ``The two regimes of deep network training,'' {\em
  arXiv preprint arXiv:2002.10376}, 2020.

\bibitem{izmailov2018averaging}
P.~Izmailov, D.~Podoprikhin, T.~Garipov, D.~Vetrov, and A.~G. Wilson,
  ``Averaging weights leads to wider optima and better generalization,'' in
  {\em Uncertainty in Artificial Intelligence}, pp.~876--885, 2018.

\bibitem{van1996weak}
A.~W. van~der Vaart and J.~A. Wellner, {\em Weak convergence and empirical
  processes: with applications to statistics}.
\newblock Springer, 1996.

\bibitem{panigrahi2019non}
A.~Panigrahi, R.~Somani, N.~Goyal, and P.~Netrapalli, ``{Non-Gaussianity of
  stochastic gradient noise},'' {\em arXiv preprint arXiv:1910.09626}, 2019.

\bibitem{csimcsekli2019tail}
U.~Simsekli, L.~Sagun, and M.~Gurbuzbalaban, ``A tail-index analysis of
  stochastic gradient noise in deep neural networks,'' in {\em International
  Conference on Machine Learning}, pp.~5827--5837, PMLR, 2019.

\bibitem{xie2020diffusion}
Z.~Xie, I.~Sato, and M.~Sugiyama, ``A diffusion theory for deep learning
  dynamics: Stochastic gradient descent exponentially favors flat minima,'' in
  {\em International Conference on Learning Representations}, 2020.

\bibitem{girsanov1960transforming}
I.~V. Girsanov, ``On transforming a certain class of stochastic processes by
  absolutely continuous substitution of measures,'' {\em Theory of Probability
  \& Its Applications}, vol.~5, no.~3, pp.~285--301, 1960.

\bibitem{karimi2016linear}
H.~Karimi, J.~Nutini, and M.~Schmidt, ``{Linear convergence of gradient and
  proximal-gradient methods under the Polyak-{\L}ojasiewicz condition},'' in
  {\em Joint European Conference on Machine Learning and Knowledge Discovery in
  Databases}, pp.~795--811, Springer, 2016.

\bibitem{zhang2002covering}
T.~Zhang, ``Covering number bounds of certain regularized linear function
  classes,'' {\em Journal of Machine Learning Research}, vol.~2, no.~Mar,
  pp.~527--550, 2002.

\bibitem{hsu2012tail}
D.~Hsu, S.~Kakade, and T.~Zhang, ``A tail inequality for quadratic forms of
  subgaussian random vectors,'' {\em Electronic Communications in Probability},
  vol.~17, pp.~1--6, 2012.

\bibitem{gine2016mathematical}
E.~Gin{\'e} and R.~Nickl, {\em Mathematical foundations of infinite-dimensional
  statistical models}, vol.~40.
\newblock Cambridge University Press, 2016.

\bibitem{wainwright2019high}
M.~J. Wainwright, {\em High-dimensional statistics: A non-asymptotic
  viewpoint}, vol.~48.
\newblock Cambridge University Press, 2019.

\bibitem{zhong2017recovery}
K.~Zhong, Z.~Song, P.~Jain, P.~L. Bartlett, and I.~S. Dhillon, ``Recovery
  guarantees for one-hidden-layer neural networks,'' in {\em International
  Conference on Machine Learning}, pp.~4140--4149, PMLR, 2017.

\end{thebibliography}

\end{document}